\definecolor{darkgreen}{rgb}{0,0.4,0}
\numberwithin{equation}{section}
\numberwithin{figure}{section}
\numberwithin{table}{section}
\theoremstyle{plain}
\newtheorem{theorem}{Theorem}[section]
\newtheorem{proposition}[theorem]{Proposition}
\newtheorem{lemma}[theorem]{Lemma}
\newtheorem{corollary}[theorem]{Corollary}
\theoremstyle{definition}
\newtheorem{definition}[theorem]{Definition}
\newtheorem{remark}[theorem]{Remark}
\newtheorem{assumption}[theorem]{Assumption}
\renewcommand{\epsilon}{\varepsilon}
\newcommand{\yurinsky}[1]{{\color{black}#1}}
\newcommand{\camera}[1]{{\color{black}#1}}
\newcommand{\absval}[1]{\lvert #1 \rvert}
\newcommand{\innerprod}[2]{\langle #1 , #2 \rangle}
\newcommand{\norm}[1]{\lVert #1 \rVert}
\newcommand{\Absval}[1]{\left\vert #1 \right\vert}
\newcommand{\Norm}[1]{\left\Vert #1 \right\Vert}
\newcommand*{\quark}{\setbox0\hbox{$x$}\hbox to\wd0{\hss$\cdot$\hss}}
\newcommand*{\R}{\mathbb{R}}
\newcommand*{\Rp}{\mathbb{R}_{>0}}
\newcommand*{\N}{\mathbb{N}}
\newcommand*{\E}{\mathbb{E}}
\renewcommand*{\P}{\mathbb{P}}
\renewcommand{\P}{\mathbb{P}}
\newcommand{\X}{\mathcal{X}}
\newcommand{\cX}{\mathcal{X}}
\newcommand{\dd}{\mathrm{d}}
\DeclareMathOperator{\tr}{tr}
\newcommand{\bounded}{\mathcal{L}}
\newcommand{\schatten}{\mathcal{S}}
\DeclareMathOperator*{\argmin}{arg\,min} 
\DeclareMathOperator{\idop}{Id}
\newcommand{\cov}{\mathcal{C}_\pi}
\newcommand{\empcov}{\widehat{\mathcal{C}}_\pi}
\newcommand{\regcov}{(\cov + \alpha \idop_{\mathcal{H}})^{-1}}
\newcommand{\empregcov}{(\empcov + \alpha \idop_{\mathcal{H}})^{-1}}
\newcommand{\inc}{I_\pi}
\newcommand{\empinc}{\widehat{I}_\pi}
\newcommand{\mtc}{\mathcal}
\let\originalleft\left
\let\originalright\right
\renewcommand{\left}{\mathopen{}\mathclose\bgroup\originalleft}
\renewcommand{\right}{\aftergroup\egroup\originalright}
\def\cB{{\mtc{B}}}
\def\cC{{\mtc{C}}}
\def\cH{{\mtc{H}}}
\def\cL{{\mtc{L}}}
\def\cN{{\mtc{N}}}
\def\cT{{\mtc{T}}}
\def\cX{\mathcal{X}}
\newcommand{\mbe}{\mathbb{E}}
\newcommand{\mbn}{\mathbb{N}}
\newcounter{nbnotes}
\newcommand{\checknbnotes}{
\ifnum \thenbnotes > 0
\@latex@warning@no@line{**********************************************************************}
\@latex@warning@no@line{* The document contains \thenbnotes \space  note(s)}
\@latex@warning@no@line{**********************************************************************}
\fi}
\title{Regularized least squares learning \\
with heavy-tailed noise is minimax optimal}
\author[1]{Mattes Mollenhauer\thanks{\href{mailto:mattes.mollenhauer@merantix-momentum.com}{mattes.mollenhauer@merantix-momentum.com}}}
\author[2]{Nicole Mücke\thanks{\href{mailto:nicole.muecke@tu-braunschweig.de}{nicole.muecke@tu-braunschweig.de}}}
\author[3]{Dimitri Meunier\thanks{\href{dimitri.meunier.21@ucl.ac.uk}{dimitri.meunier.21@ucl.ac.uk}}}
\author[3]{Arthur Gretton\thanks{\href{arthur.gretton@gmail.com}{arthur.gretton@gmail.com}}}
\affil[1]{Merantix Momentum}
\affil[2]{Institute for Mathematical Stochastics, Technische Universit\"at Braunschweig}
\affil[3]{Gatsby Computational Neuroscience Unit, University College London}
\date{\today}
\begin{document}

\maketitle

\begin{abstract}

    This paper examines the performance of ridge regression in reproducing kernel Hilbert spaces in the presence of noise that exhibits a finite number of higher moments. We establish excess risk bounds consisting of 
    subgaussian and polynomial terms based on the well known integral operator framework. 
    The dominant subgaussian component 
    allows to achieve convergence rates that have previously only been
    derived under subexponential noise---a prevalent assumption
    in related work from the last two decades.
    These rates are optimal under standard eigenvalue decay conditions,
    demonstrating the asymptotic 
    robustness of regularized least squares against heavy-tailed noise.
    Our derivations are based on a Fuk--Nagaev inequality for Hilbert-space valued random variables. \\
    \textbf{Keywords.} 
    Nonparametric regression $\bullet$ 
    kernel ridge regression $\bullet$
    heavy-tailed noise
    \\
    \textbf{2020 Mathematics Subject Classification.}
    62G08 $\bullet$ 
    62G35 $\bullet$ 
    62J07
\end{abstract}


\section{Introduction}

Given two random variables $X$ and $Y$, we seek to empirically minimize the 
expected squared error
\begin{equation*}
    \mathcal{R}(f):= \mathbb{E}\left[ (Y - f(X))^2\right]
\end{equation*}
over functions $f$ in a \textit{reproducing kernel Hilbert space} 
$\mathcal{H}$ consisting of functions
from a topological space $\cX$ to $\R$.
We consider the standard model
\begin{equation*}
    Y = f_\star(X) + \epsilon
\end{equation*}
with the \emph{regression function} 
$f_\star: \mathcal{X} \to \R$ 
and noise variable $\epsilon$ satisfying $\E[\epsilon \vert X] = 0$.
Given $n$ independent sample pairs $(X_i, Y_i)$ drawn from the joint
distribution of $X$ and $Y$, we investigate
the classical \emph{ridge regression estimate}
\begin{equation}
    \label{eq:f_alpha}
    \widehat{f}_\alpha := 
    \argmin_{f \in \mathcal{H}}
    \frac{1}{n} \sum_{i=1}^n ( Y_i - f(X_i) )^2
    + \alpha \norm{f}^2_{\mathcal{H}}
\end{equation}
with \emph{regularization parameter} $\alpha > 0$.
We adopt the well-known perspective
going back to the pathbreaking work
\citep{Cucker02, devitoetal2005learning, caponnetto2007optimal, Smale2007},
which characterizes $\widehat{f}_\alpha$ as the 
solution of a linear inverse problem in
$\mathcal{H}$ obtained by performing
\emph{Tikhonov regularization} \cite{TA77}
on a stochastic discretization of the integral 
operator induced by the kernel of $\mathcal{H}$ and
the marginal distribution of $X$.
Since its inception, this approach has been refined and generalized 
in a multitude of ways, including
more general learning settings 
and alternative algorithms and applications.
We refer the reader to
\citep{BauerEtAl2007,YaoEtAl07, 
Dicker2017, Muecke18, Lin2020, fischer2020sobolev,Muecke2018Parallelizing,Szabo2016Learning,RudiRandom2017,meunier2024optimal,li2024towards, lietal2022optimal,LinBoosting2019,LinZhouDistributed2018,GerfoEtAl08,mollenhauer2022learning,singh2019kernel,meunier2024nonparametric}
and the references therein for an overview.
A common theme in the above line of work is the derivation of
confidence bounds of the \emph{excess risk}
\begin{equation*}
    \label{eq:excess_risk}
    \mathcal{R}(\widehat{f}_\alpha) - \mathcal{R}(f_\star)
    = \E [ (\widehat{f}_\alpha(X) - f_\star(X))^2 ]
\end{equation*}
i.e., with high probability
over the draw of the sample pairs
under appropriate regularity assumptions about 
the regression function $f_\star$ and distributional
assumptions about $\epsilon$. 

\paragraph{Heavy-tailed noise.}
In this work, we assume that the real-valued random variable $\epsilon$ 
has only a finite number of higher conditional 
absolute moments, i.e., 
there exists some $q \in \N$, $q \geq 3$ such that 
\begin{equation}
    \label{eq:mom_intro}
    \E\left[\absval{\epsilon}^q\right \vert X] < Q < \infty
    \text{ almost surely}.
\end{equation}
This setting covers noise associated with
distributions without a moment generating function---for example 
the $t$-distribution, Fréchet distribution, Pareto distribution
and Burr distribution (correspondingly centered).
In such a setting, the family of \emph{Fuk--Nagaev inequalities} 
\citep{Fuk1973, Nagaev1979} provides
sharp nontrivial tail bounds beyond Markov's inequality for
sums of heavy-tailed real random variables.
These results show that the tail 
is dominated by a subgaussian term \cite{Vershynin2018High} 
in a small deviation
regime (reflecting the central limit theorem)
and a polynomial term in a large deviation regime.
In order to apply this fact to the integral
operator approach, we modify a vector-valued
version of the Fuk--Nagaev inequality going back 
to \citep{Yurinsky1995} for random variables taking values
in Hilbert spaces.
\camera{
In a practical context, heavy-tailed 
noise satisfying the moment condition
\eqref{eq:mom_intro} plays a role in fields such as
finance, insurance,
communication networks and atmospherical sciences
\cite{nair2022heavy, embrechts1997extremal}. 
}

\paragraph{Prior work: Bernstein condition.}
In the aforementioned context of spectral regularization algorithms in kernel 
learning, existing work generally 
assumes that $\epsilon$ is subexponential.\!\footnote{
The term
\emph{subexponential}
in this work refers to light-tailed distributions in
the Orlicz sense \cite{Vershynin2018High} in contrast to alternative
definitions for heavy-tailed distributions found in the literature
\cite{nair2022heavy}.
}
In particular, the so-called \emph{Bernstein condition}\footnote{%
Condition \eqref{eq:bernstein} is in fact related to the  slightly stronger
\emph{subgamma property}, see \cite{boucheron2016concentration}. 
Applying Stirling's approximation to the right hand side of
\eqref{eq:bernstein} gives the typical subexponential bound for $L^q$-norms of $\epsilon$, see \cite{Vershynin2018High}.
} 
requires the existence of constants $\sigma, Q > 0$ 
almost surely satisfying
\begin{equation}
    \label{eq:bernstein}
    \E[ \absval{\epsilon}^q \vert X ] \leq \frac{1}{2}q!\sigma^2Q^{q-2} 
    \text{ almost surely}
\end{equation}
for all $q \geq 2$. This condition allows
to apply a Hilbert space Bernstein inequality
\citep{Pinelis1986Remarks} to the well-known
integral operator framework in order to obtain convergence results.
We refer the reader to
\cite{caponnetto2007optimal,BauerEtAl2007,YaoEtAl07,Muecke18,Lin2020, fischer2020sobolev, Szabo2016Learning, Dicker2017}
for a selection of results in this setting.
To our knowledge, all results obtaining
optimal rates in this setting rely on the Bernstein tail bound.
The importance of the Bernstein inequality in the context of
this work is emphasized by the
\emph{effective dimension} \citep{zhang05learning, caponnetto2007optimal}, 
which measures the capacity of the hypothesis space $\mathcal{H}$
relative to the choice of the regularization parameter $\alpha$
and the marginal distribution of $X$ in terms of
the eigenvalues of the integral operator.
When used as a variance proxy in the Bernstein inequality, 
the effective dimension is the central tool that allows to derive minimax optimal rates
under assumptions about the eigenvalue decay,
as first shown by \citep{caponnetto2007optimal} and subsequently
refined in the aforementioned work.
Due to this elegant connection between eigenvalue decay and concentration, 
the integral operator formalism has been predominantly 
focused around the assumption \eqref{eq:bernstein} over the last two decades.
Similar approaches based on the Bernstein inequality with suitable
variance proxies are commonly applied 
across a variety of estimation techniques in order to obtain
fast rates \cite[e.g.][]{gyoerfi2002regression,StCh08}.

\paragraph{Overview of contributions.}

In this work, we show that the rates derived
under the Bernstein condition \eqref{eq:bernstein}
in the mentioned literature can 
equivalently be obtained with the significantly less restrictive 
higher moment assumption \eqref{eq:mom_intro}
with the same regularization parameter schedules. 
We consider both
the \emph{capacity independent} setting (i.e., without 
assumptions about the eigenvalue decay of the integral operator \cite{BauerEtAl2007,Smale2007}) and the more common \emph{capacity dependent} setting involving 
the effective dimension \cite[e.g.][]{caponnetto2007optimal,Steinwart09Optimal,fischer2020sobolev}.
{\color{black}
Even though the capacity dependent results are sharper,
we dedicate a separate discussion to the capacity independent setting, 
as it allows a less technical presentation
and a simplified and insightful asymptotic dicussion.}
In both settings, we base our analysis on
a Hilbert space version of the Fuk--Nagaev inequality,
providing excess risk bounds that exhibit
both subgaussian and polynomial tail components.
The dominant subgaussian term
allows to asymptotically recover the familiar bounds known from the
subexponential noise scenario.
In the capacity dependent setting, we
use the effective dimension not only as variance proxy, but
also as a \emph{proxy for the higher moments occurring in the
Fuk--Nagaev inequality}---the resulting bound is sharp enough
so that standard assumptions about the eigenvalue decay lead to known
optimal convergence rates. This technique directly 
generalizes the aforementioned approach based on the Bernstein inequality.

\paragraph{Practical implications, future work and limitations.}

\camera{
The square loss is often not used in practice when one expects heavy-tailed noise,
as it is sensitive to outliers when used without regularization
\cite{huber1981robust}.
However, when used with regularization in the presence of 
noise of the form \eqref{eq:mom_intro},
we show that it exhibits a certain degree of robustness. 
In particular, 
\begin{enumerate}[label=(\roman*)]
    \item it asymptotically achieves the optimal rates with high probability 
    known from the light-tailed setting with the same
    regularization schedule, 
    \item in a large sample setting, 
    the confidence behavior of the excess risk is essentially subexponential,
    \item in a small sample setting, the confidence behavior 
    is polynomial and stronger regularization is required 
    due to the impact of the heavy tails.
\end{enumerate}
}
We focus on the original well-specified kernel ridge regression 
setting as investigated by \citep{caponnetto2007optimal} in order to simplify 
the presentation and highlight the key arguments. However, we expect
our approach to transfer to other settings, for example involving
more general source conditions \citep{BauerEtAl2007,Rastogi2017},
general spectral filter methods \citep{GerfoEtAl08,Muecke18}, 
misspecified models \citep{fischer2020sobolev,zhang2023},
the kernel conditional mean embedding with unbounded kernels
on the target space \citep{lietal2022optimal},
high- and infinite-dimensional output spaces \citep{li2024towards,meunier2024optimal}
and many other settings allowing for the application of the
integral operator formalism.
\camera{
We believe that kernel regression with unbounded kernels 
admitting a finite higher moment can
be analyzed with a similar technical approach as the one presented here.
Let us mention some limitations of the present work.
While our results show a certain degree of robustness of
regularized least squares against heavy-tailed noise for $q \geq 3, q \in \N$,
we expect our results to directly transfer to all real $q > 2$, as versions
of the real-valued Fuk--Nagaev bound cover this case \cite{FukNagaev1971, Rio2017}. 
Currently, this restriction of our results
exclusively depends on the validity of the Fuk--Nagaev bound in Hilbert spaces for these
$q$ as an artefact of the proof technique by \cite{Yurinsky1995},
which we modify.
Furthermore, for $q < 2$, it is clear that the square loss is not
necessarily well-defined and a different loss such as the
Cauchy loss should be used \cite{wen2025robustness}. 
Our results demonstrate that, when operating in a high confidence setting,
heavy-tailed noise may require a significantly higher level of regularization than
light-tailed noise, making empirical regularization parameter
choice rules as a function of the confidence level very important.
Extending the analysis of classical parameter choice rules for deterministic inverse problems
\cite{engl1996inverse} to the stochastic setting based on heavy-tailed noise
may therefore be an interesting future direction.
}

\paragraph{Other related work.}

We are not aware of any specific analysis of 
kernel ridge regression and the integral operator formalism
in the heavy-tailed scenario given by \eqref{eq:mom_intro} in the literature.
However, there exists a wide variety of related results 
for regression
with heavy-tailed noise and robust estimation---we put our results 
in the context of the most important related work.
Optimal rates for (unpenalized) least squares regression
over nonparametric hypothesis spaces can generally only be derived 
in the empirical process context when $q$ in 
condition \eqref{eq:mom_intro}
is large enough with respect to suitable metric entropy requirements, 
see \cite{han2019convergence, kuchibotla2022leastsquares, mendelson2016multiplier, 
mendelson2020lp}.
In comparison, our setting allows to recover optimal rates 
for the reproducing kernel Hilbert space scenario
with a regularization schedule which is independent of $q$.
For linear models over finite basis functions, \cite{audibert2011robust}
derives exponential concentration of the ridge estimator under finite
variance on the noise for fixed $\alpha$. 
We also highlight the field of
robust estimation techniques outside of the standard least squares context, 
see e.g.\
\cite{huber1981robust,hsu2016loss, brownlees2015empirical, lugosi2019mean, catoni2012challenging, audibert2011robust}
and the references therein. 
While the analysis
of robust finite-dimensional linear regression under heavy-tailed noise requires
discussions of the distribution of the covariates
and their covariance matrix (often under variance-kurtosis equivalence
\cite{lugosi2019subgaussian,mendelson2020equivalence, mourtada2021robust}),
we impose the typical assumption that the
kernel is bounded, leading to subgaussian concentration
of the embedded covariates
in the potentially infinite-dimensional feature space.
Recently, \cite{wen2025robustness}
derived nearly optimal rates for kernel ridge regression with Cauchy loss
under \eqref{eq:mom_intro} with $q > 0$
depending on the Hölder continuity parameter of the target function
using more classical arguments. 
Finally, from a more technical perspective, 
the approach by \cite{zhu2022beyond} 
shares similarities with the methods applied in our paper:
The authors use a real-valued Fuk--Nagaev inequality
to bound the stopping time complexity of stochastic gradient descent for ordinary least squares regression.
\camera{
However, \cite{zhu2022beyond} provides results only
for the finite-dimensional setting
based on a martingale decomposition by 
assuming a lower bound on the minimal eigenvalue of the covariance
matrix. In contrast, our work targets the infinite-dimensional setting based on inverse problem theory without such a lower bound and explicitly proves minimax optimality.
}

\paragraph{Structure of this paper.}
We introduce our notation and basic preliminaries in \cref{sec:preliminaries}.
In \cref{sec:ridgre_regression}, we provide the excess risk bound
for the capacity independent setting and derive corresponding rates.
\cref{sec:capacity_dependent} contains an excess risk bound
based on the effective dimension and recovers 
rates which are known to be minimax optimal also in 
the subexponential noise setting.
Finally, in \cref{sec:fuk-nagaev}, we briefly discuss
the Fuk--Nagaev inequality used to derive our results.
\camera{
\cref{sec:experiment} contains
a numerical experiment which confirms the
behavior of the excess risk described by our theoretical results.
}
We report all proofs for the results in the main text
in \cref{app:proofs} and provide additional technical results as
individual appendices.


\section{Preliminaries}
\label{sec:preliminaries}

We assume that the reader is familiar
with the analysis of linear 
Hilbert space operators \citep{weidmann1980linear, reed}
and the basic theory of reproducing kernel Hilbert spaces
\citep{Berlinet04:RKHS, StCh08}.
Let $\pi$ denote the marginal distribution of $X$
and $L^2(\pi)$ denote the space of real-valued
Lebesgue square integrable functions with respect to $\pi$.
We write $\bounded(H_1, H_2)$ for the space of bounded linear
operators between Hilbert spaces $H_1$ and $H_2$
with operator norm $\norm{\cdot}$ and abbreviate
$\bounded(H_1) = \bounded(H_1, H_1)$. We additionally consider the space of
\emph{Hilbert--Schmidt operators}
$\schatten_2(H_1, H_2) \subset  \bounded(H_1, H_2)$
and the space of \emph{trace class operators}
$\schatten_1(H_1, H_2) \subset \schatten_2(H_1, H_2)$
with norms $\norm{\cdot}_{\schatten_1(H_1, H_2)}$
and $\norm{\cdot}_{\schatten_2(H_1, H_2)}$ and the \emph{trace} $\tr(\cdot)$.
The \emph{adjoint} of $A \in\bounded(H_1, H_2)$
is written as $A^* \in\bounded(H_2, H_1)$.

\subsection{Reproducing kernel Hilbert space}

We consider the \emph{reproducing kernel Hilbert space} (RKHS)
$\mathcal{H}$ 
consisting of functions from $\X$ to $\R$
induced by the \emph{symmetric positive semidefinite kernel}
$
    k \colon \X \times \X \to \R.
$
with \emph{canonical feature map} 
\begin{align*}
    \phi(x): \cX \to \mathcal{H}, \\
    x \mapsto k(\cdot, x),
\end{align*}
i.e.\ we have the \emph{reproducing property}
$f(x) = \innerprod{f}{\phi(x)}_{\mathcal{H}}$ for all $x \in \mathcal{X}$ and $f \in \mathcal{H}$.

\begin{assumption}[Domain and kernel]
\label{assump:kernel}
We impose the following standard assumptions throughout this paper
in order to avoid issues related to measurability
and integrability \citep[Section 4.3]{StCh08}:
\begin{enumerate}[label=(\roman*)]
    \item $\cX$ is a second-countable locally compact Hausdorff space, $\mathcal{H}$ is separable
    (this is satisfied if $k$ is continuous, given that $\cX$ is separable),
    \item $k(\cdot, X)$ is almost surely measurable in its first argument,
    \item $k(X,X) \leq \kappa^2$ almost surely for some finite constant $\kappa$.
\end{enumerate}
\end{assumption}

\camera{
All assumptions above hold for commonly used continuous
radial kernels on $\R^d$ such as the Gaussian kernel and the Mat\`ern kernel.
However, the boundedness assumption is violated for
polynomial kernels unless $X$ is bounded almost surely.
}

\paragraph{Integral and covariance operators.}

Under \cref{assump:kernel}, we may consider the 
typical linear operators
associated with $\pi$ and $k$.
We consider the \emph{embedding operator}
\begin{align*}
    I_\pi \colon \mathcal{H} &\hookrightarrow L^2(\pi),\\
    f &\mapsto [f]_\pi
\end{align*}
identifying
$f \in \mathcal{H}$ with its equivalence class $[f]_\pi \in L^2(\pi)$.
The adjoint $I^*_\pi: L^2(\pi) \to \mathcal{H}$
is given by 
\begin{equation*}
    I^*_\pi f 
    = 
    \int_\cX \phi(x) f(x) \, \dd \pi(x) \in \mathcal{H},
    \quad f \in L^2(\pi).
\end{equation*}

We obtain the self-adjoint \emph{integral operator} 
$\mathcal{T}_\pi:= I_\pi I_\pi^* : L^2(\pi) \to L^2(\pi)$ induced by
$k$ and $\pi$ as
\begin{equation*}
    \mathcal{T}_\pi f = 
    \int_\cX  k(\cdot,x) f(x) \, \dd \pi(x),
    \quad f \in L^2(\pi).
\end{equation*}
The self-adjoint \emph{kernel covariance operator}
$\mathcal{C}_\pi:= I_\pi^* I_\pi : \mathcal{H} \to \mathcal{H}$
is given by
\begin{equation*}
    \mathcal{C}_\pi = 
    \int_\cX \phi(x) \otimes \phi(x) \, \dd \pi(x).
\end{equation*}
\cref{assump:kernel}(iii)
ensures that we have
$
    \norm{I_\pi}
    =
    \norm{I^*_\pi}
    \leq \kappa
$
as well as
$\norm{\mathcal{T}_\pi}
= 
\norm{\mathcal{C}_\pi}
\leq \kappa^2
$.
Moreover, the operators $I_\pi$ and $I^*_\pi$ are Hilbert--Schmidt
and both $\mathcal{T}_\pi$ and $\mathcal{C}_\pi$ are therefore self-adjoint positive semidefinite and
trace class.
By the \emph{polar decomposition} of $\inc$ and $\inc^*$, there exist partial isometries $U: \mathcal{H} \to L^2(\pi)$ 
and $\tilde U:  L^2(\pi) \to \mathcal{H}$ such that
\begin{equation}
    \label{eq:polar_decomposition}
    \inc = U (I_\pi^* I_\pi)^{1/2} = U \mathcal{C}_\pi^{1/2}
    \text{ and }
    \inc^* = \tilde U (I_\pi I_\pi^*)^{1/2} = \tilde U \mathcal{T}_\pi^{1/2}.
\end{equation}
In particular, we have 
$\norm{[f]_\pi}_{L^2(\pi)} = \norm{I_\pi f}_{L^2(\pi)} = \norm{\mathcal{C}^{1/2}f}_{\mathcal{H}}$
for all $f \in \mathcal{H}$. We will
also frequently use the fact that \cref{assump:kernel}(iii)
implies $\sup_{x \in \mathcal{X}}{|f(x)|}\leq \kappa \norm{f}_{\mathcal{H}}$.


\subsection{Ridge regression}

We introduce the standard integral operator formalism for ridge regression 
in RKHSs, see e.g.\ \cite{caponnetto2007optimal}.
We consider the standard $L^2(\P)$-orthogonal decomposition 
of $Y$ with respect to the closed subspace $L^2(\P, \sigma(X)) \subset 
L_2(\P)$ of $\sigma(X)$-measurable functions given by
\begin{equation}
    \label{eq:y_decomposition}
    Y = f_\star(X) + \epsilon
\end{equation}
with the \emph{regression function} 
$f_\star(X) = \E[Y \vert X] \in L_2(\P)$ 
and noise variable $\epsilon \in L^2(\P)$ satisfying $\E[\epsilon \vert X] = 0$.
Based on the representation \eqref{eq:y_decomposition}, we have
$
    \mathcal{R}(f) = \norm{ f - f_\star }^2_{L^2(\pi)} + \norm{ \epsilon }^2_{L^2(\P)}
$
for all $f \in L^2(\pi)$ and hence
the excess risk satisfies
$
    \mathcal{R}(f) - \mathcal{R}(f_\star) = \norm{ f - f_\star }^2_{L^2(\pi)}.
$

\paragraph{Regularized population solution.}
We define the \emph{regularized population solution}
\begin{align*}
    f_\alpha :&= 
    \argmin_{f \in \mathcal{H}}
    \E[  (Y - f(X))^2 ] + \alpha \norm{f}^2_{\mathcal{H}} 
    =
    \argmin_{f \in \mathcal{H}}
    \norm{ \inc f - f_\star }^2_{L^2(\pi)} + \alpha \norm{f}^2_{\mathcal{H}},
\end{align*}
with $\alpha >0$, which is alternatively expressed as
\begin{equation}
    \label{eq:solution_rkhs}
    f_\alpha =
    \regcov I_\pi^* f_\star
    \in \mathcal{H}
\end{equation}
with the identity operator $\idop_{\mathcal{H}}$ on $\mathcal{H}$.

\paragraph{Regularized empirical solution.}

We consider sample pairs
$(X_1, Y_1), \dots, (X_n, Y_n) \sim \mathcal{L}(X,Y)$
independently obtained from the joint distribution of $X$ and $Y$.
We define the empirical versions of the
operators above in terms of 
\begin{equation*}
    \widehat{I}^*_\pi f 
    := 
    \frac{1}{n} \sum_{i=1}^n \phi(X_i) f(X_i) \in \mathcal{H} ,
    \quad f \in L^2(\pi)
\end{equation*}
as well as
\begin{equation*}
    \widehat{\mathcal{C}}_\pi 
    :=
    \frac{1}{n} \sum_{i=1}^n \phi(X_i) \otimes \phi(X_i).
\end{equation*}
The \emph{empirical solution} of the learning problem 
in $\mathcal{H}$ with regularization parameter $\alpha >0$ 
is given by the empirical analogue of \eqref{eq:solution_rkhs}, 
which we obtain in terms of 
\begin{equation}
    \label{eq:empirical_solution}
    \widehat{f}_\alpha = 
    \empregcov \widehat{\Upsilon}
    \in \mathcal{H},
\end{equation} where the empirical
right hand side $\widehat{\Upsilon} \in \mathcal{H}$
of the inverse problem is given by
\begin{equation}
    \label{eq:empirical_rhs}
    \widehat{\Upsilon} 
    := \frac{1}{n} \sum_{i=1}^n \phi(X_i) Y_i
    =
    \empinc^* f_\star
    + \frac{1}{n} \sum_{i=1}^n \phi(X_i) \epsilon_i.
\end{equation}
Here, we use the orthogonal decomposition 
$Y_i = f_\star(X_i) + \epsilon_i$ 
in the second equivalence.
We note that $\widehat{\Upsilon}$
directly serves as an empirically evaluable unbiased estimate of 
$I^*_\pi f_\star$, 
as $\widehat{I}^*_\pi f_\star$ itself cannot be empirically evaluated 
because $f_\star$ is unknown.
As usual, we interpret the above objects as random variables depending on the
product measure $\P^{\otimes n}$
through their definition based on the observation pairs $(X_i, Y_i)$.
\camera{
In practice, the empirical solution $\widehat{f}_\alpha$
can be evaluated in terms of the classical \emph{representer
theorem}, see e.g.\ \cite[Proposition 8]{Cucker02}.
}

\subsection{Distributional assumptions}

We list the assumptions we impose upon the distributions
of $Y$, $X$ and $\epsilon$.

\begin{assumption}[Moment condition]
\label{assump:mom}
We consider the model given by \eqref{eq:y_decomposition}
and assume that we have almost surely
\begin{align}
    \E[\epsilon \vert X ] = 0, \quad
    \E\left[ \epsilon^2 \vert X \right] 
        < \sigma^2
    \quad \text{and} \quad
    \E\left[\absval{\epsilon}^q\right \vert X] < Q,
    \tag{MOM}
    \label{eq:mom}
\end{align}
for some constants $\sigma^2 >0$, $Q >0$ and $q \in \N$, $q \geq 3$.
\end{assumption}

We now introduce a classical smoothness assumption
in terms of a \emph{Hölder source condition} \citep{EHN96}.

\begin{assumption}[Source condition]
    \label{assump:src}
    We define the set
    $
        \Omega(\nu, R) :=
        \{
        \mathcal{T}_\pi^\nu f \mid \norm{f}_{L^2(\pi)} \leq R
        \} \subset L^2(\pi)
    $
    and assume 
    \begin{equation}
        f_\star \in  \Omega(\nu, R)
        \tag{SRC}
        \label{eq:src}
    \end{equation}
    for some smoothness parameter $\nu \geq 1/2$ and $R > 0$.
\end{assumption}

We give the definition of the source set $\Omega(\nu, R)$
with respect to $L^2(\pi)$ and not with respect to $\mathcal{H}$,
which is also commonly found in the literature.
Furthermore, the source condition is sometimes
described in terms of so-called \emph{interpolation spaces} 
or \emph{Hilbert scales}. 
Our definition can equivalently be expressed in terms of these concepts by
appropriately reparametrizing $\nu$, 
see e.g.\ \citep{BauerEtAl2007, fischer2020sobolev, EHN96} for more details.

\begin{remark}[Well-specified case]
    In this work, we explicitly consider the
    condition $\nu \geq 1/2$, which implies the
    \emph{well-specified} setting in which we have
    $\Omega(\nu, R) \subset I_\pi(\mathcal{H})$.
    We note the case $0 \leq \nu < 1/2$ 
    covers the \emph{misspecified} setting, in which 
    $\Omega(\nu, R)$ is allowed to contain elements from $L^2(\pi) \setminus I_\pi(\mathcal{H})$.
    We expect our approach to transfer to 
    the misspecified setting
    by combining it with recent 
    technical arguments from the literature
    which are outside the scope
    of this work \citep{fischer2020sobolev, zhang2023, zhang2024, li2024towards, 
    meunier2024optimal}.
\end{remark}


\section{Capacity-free excess risk bound}
\label{sec:ridgre_regression}

We now provide an excess risk bound
and corresponding rates
for kernel ridge regression in the heavy-tailed noise setting
without additional assumptions about the
eigenvalue decay of $\mathcal{T}_\pi$.
{\color{black}
We present this setting separately from the 
capacity-based results in the next section, as it allows for a clearer
comparison with bounds based on subexponential noise and a simplified
asymptotic discussion.
}

\begin{proposition}[Main excess risk bound]
\label{prop:excess_risk_bound}
Let \eqref{eq:mom} and \eqref{eq:src} be satisfied. 
For all $\delta \in (0, 1)$ 
and $n \in \N$
such that
\begin{equation}
\label{eq:alpha-coice}
C_\kappa\; \log(6/\delta) \leq \alpha \sqrt{n}, 
\qquad C_\kappa:=2(1+\sqrt \kappa)\cdot \max\{1, \kappa^2\} ,  
\end{equation}
we have
\begin{align*}
    \norm{\inc  \widehat f_\alpha - f_\star}_{L^2(\pi)}
    &\leq R \alpha^{\min\{ \nu, 1 \}} + \frac{C_\diamond}{\sqrt \alpha} 
    \left(    \frac{\log(6/\delta)}{n} + 
  \sqrt{ \frac{\alpha^{2 \min\{\nu , 1\}} \log(6/\delta)}{n} }  
    +    \eta(\delta, n)
    \right),
\end{align*}
with confidence $1-\delta$, with
\begin{equation*}
    \eta(\delta, n) :=
    \max
    \left\{
    \left(\frac{Q}{\delta n^{q-1}} \right)^{1/q}  ,
      \sigma\; \sqrt{\frac{ \log(6c_1/\delta)}{n }}
    \right\},
\end{equation*}
where $0 < C_\diamond $ is given in 
\eqref{eq:diamond} and $c_1 \geq 1$ is the
constant from \cref{prop:fn_modified}
depending only on $q$.

\end{proposition}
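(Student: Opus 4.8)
The plan is to split
\[
\inc\widehat{f}_\alpha - f_\star = \bigl(\inc f_\alpha - f_\star\bigr) + \inc\bigl(\widehat{f}_\alpha - f_\alpha\bigr)
\]
into a deterministic approximation (bias) term and a stochastic sample (variance) term, and to bound each separately. For the bias I would use the push-through identity $\inc\regcov\inc^* = \mathcal{T}_\pi(\mathcal{T}_\pi+\alpha\idop)^{-1}$ together with \eqref{eq:solution_rkhs} to write $\inc f_\alpha - f_\star = -\alpha(\mathcal{T}_\pi+\alpha\idop)^{-1}f_\star$; inserting the source condition \eqref{eq:src}, $f_\star = \mathcal{T}_\pi^\nu g$ with $\norm{g}_{L^2(\pi)}\le R$, and estimating the filter function $\lambda\mapsto\alpha\lambda^\nu/(\lambda+\alpha)\le\alpha^{\min\{\nu,1\}}$ (using $\norm{\mathcal{T}_\pi}\le\kappa^2$ for the saturation regime $\nu>1$) by spectral calculus yields the term $R\alpha^{\min\{\nu,1\}}$. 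This step is routine.

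For the variance term I would first reduce to the feature space by the identity $\norm{\inc(\widehat{f}_\alpha - f_\alpha)}_{L^2(\pi)} = \norm{\cov^{1/2}(\widehat{f}_\alpha - f_\alpha)}_{\mathcal H}$ from \eqref{eq:polar_decomposition}. Writing $A_\alpha := \cov+\alpha\idop_{\mathcal H}$ and $\widehat{A}_\alpha := \empcov + \alpha\idop_{\mathcal H}$, the definitions \eqref{eq:empirical_solution}--\eqref{eq:empirical_rhs} give
\[
\widehat{f}_\alpha - f_\alpha = \widehat{A}_\alpha^{-1}\bigl[(\widehat{\Upsilon} - \inc^* f_\star) - (\empcov - \cov)f_\alpha\bigr].
\]
The crucial simplification comes from the well-specified assumption $\nu\ge1/2$, which guarantees $f_\star = \inc h_\star$ for some $h_\star\in\mathcal H$, so that $\widehat{\Upsilon} - \inc^* f_\star = (\empcov-\cov)h_\star + S_n$ with $S_n := \tfrac1n\sum_i\phi(X_i)\epsilon_i$. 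Combining the two operator-difference contributions collapses them into $(\empcov-\cov)(h_\star - f_\alpha)$, where $h_\star - f_\alpha = \alpha A_\alpha^{-1}h_\star$ carries the regularization bias. Hence
\[
\cov^{1/2}(\widehat{f}_\alpha - f_\alpha) = \cov^{1/2}A_\alpha^{-1/2}\cdot A_\alpha^{1/2}\widehat{A}_\alpha^{-1}A_\alpha^{1/2}\cdot A_\alpha^{-1/2}\bigl[S_n + (\empcov-\cov)(h_\star-f_\alpha)\bigr].
\]
I would bound $\norm{\cov^{1/2}A_\alpha^{-1/2}}\le1$ and control the middle factor by $2$: a self-adjoint operator Bernstein bound on $\norm{A_\alpha^{-1/2}(\empcov-\cov)A_\alpha^{-1/2}}$ (summands bounded by $\kappa^2/\alpha$ via \cref{assump:kernel}(iii)) shows this is $\le1/2$ precisely under the schedule \eqref{eq:alpha-coice}, so that $\norm{A_\alpha^{1/2}\widehat{A}_\alpha^{-1}A_\alpha^{1/2}}\le2$ with high probability.

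It remains to bound the two pieces of $A_\alpha^{-1/2}[\,\cdot\,]$. For the bias-interaction piece I would apply a Hilbert-space Bernstein inequality to the centred i.i.d.\ sum $\tfrac1n\sum_i A_\alpha^{-1/2}[(\phi(X_i)\otimes\phi(X_i))-\cov]\,v$ with $v := h_\star - f_\alpha$. Each summand is bounded a.s.\ by $\kappa^2\norm{h_\star}/\sqrt\alpha$, and the variance proxy is $\E\norm{\cdot}^2\le(\kappa^2/\alpha)\norm{\cov^{1/2}v}^2 = (\kappa^2/\alpha)\norm{\inc v}_{L^2(\pi)}^2$; since $\inc v = f_\star - \inc f_\alpha$ is exactly the bias, this is $\le(\kappa^2/\alpha)R^2\alpha^{2\min\{\nu,1\}}$. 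The two Bernstein terms then produce $\tfrac{1}{\sqrt\alpha}\log(6/\delta)/n$ and $\tfrac1{\sqrt\alpha}\sqrt{\alpha^{2\min\{\nu,1\}}\log(6/\delta)/n}$. For the noise piece $A_\alpha^{-1/2}S_n = \tfrac1n\sum_i A_\alpha^{-1/2}\phi(X_i)\epsilon_i$ I would invoke the modified Hilbert-space Fuk--Nagaev inequality \cref{prop:fn_modified}: the summands are centred by $\E[\epsilon\mid X]=0$, and using $\norm{A_\alpha^{-1/2}\phi(X)}\le\kappa/\sqrt\alpha$ together with \eqref{eq:mom} they have second-moment proxy $\le\sigma^2\kappa^2/\alpha$ and $q$-th moment proxy $\le Q\kappa^q/\alpha^{q/2}$, so the inequality outputs $\tfrac{\kappa}{\sqrt\alpha}\eta(\delta,n)$, the maximum of the polynomial term $(Q/\delta n^{q-1})^{1/q}$ and the subgaussian term $\sigma\sqrt{\log(6c_1/\delta)/n}$. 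Collecting the three estimates under a union bound and absorbing the $\kappa$-dependent constants into $C_\diamond$ from \eqref{eq:diamond} yields the claim.

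The main obstacle I anticipate is twofold. First, the algebraic collapse into $(\empcov-\cov)(h_\star - f_\alpha)$ --- rather than treating $\widehat{\Upsilon}-\inc^* f_\star$ and $(\empcov-\cov)f_\alpha$ separately --- is essential: only the combined vector $h_\star - f_\alpha = \alpha A_\alpha^{-1}h_\star$ carries the factor $\alpha^{\min\{\nu,1\}}$ through $\norm{\inc(h_\star-f_\alpha)}$, which a naive split would lose, inflating the rate by a factor $\alpha^{-\min\{\nu,1\}}$. Second, and more delicate, is the clean bookkeeping in the Fuk--Nagaev step: one must verify that the preconditioned, heavy-tailed summands $A_\alpha^{-1/2}\phi(X_i)\epsilon_i$ satisfy the hypotheses of \cref{prop:fn_modified} with exactly the stated proxies and that the resulting bound separates into the polynomial large-deviation and subgaussian small-deviation regimes comprising $\eta(\delta,n)$ --- this is the only place where the absence of an almost-sure bound on $\epsilon$ forces the Fuk--Nagaev inequality in place of the Bernstein argument used for the bounded pieces.
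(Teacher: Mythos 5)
Your proposal is correct and follows essentially the same route as the paper: the same bias--variance split, a Bennett/Bernstein bound for the bounded cross term (your collapsed quantity $(\empcov-\cov)(h_\star-f_\alpha)$ is algebraically identical to the paper's centred $\Delta_1$, which the paper obtains from the identity $\alpha f_\alpha = I_\pi^*(f_\star - I_\pi f_\alpha)$ rather than from well-specifiedness), and the Hilbert-space Fuk--Nagaev inequality applied to $\frac{1}{n}\sum_i \phi(X_i)\epsilon_i$ for the noise sum. The only cosmetic differences are that you keep the preconditioner $(\cov+\alpha \idop_{\mathcal{H}})^{-1/2}$ attached to the summands and control the middle factor $(\cov+\alpha \idop_{\mathcal{H}})^{1/2}(\empcov+\alpha \idop_{\mathcal{H}})^{-1}(\cov+\alpha \idop_{\mathcal{H}})^{1/2}$ by a Neumann/operator-Bernstein argument, whereas the paper pulls out $\norm{(\empcov+\alpha \idop_{\mathcal{H}})^{-1/2}}\leq \alpha^{-1/2}$ and invokes \cref{cor:concentration_inverse_product}; both yield the same moment proxies and the same final bound up to the value of $C_\diamond$.
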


    Just as in the light-tailed setting,
    \cref{prop:excess_risk_bound} shows
    that the optimal excess risk is achieved
    by balancing the contributions of the \emph{approximation error} (e.g.\ the model \emph{bias}) and
    \emph{sample error} (e.g.\ the model \emph{variance})
    by choosing a suitable regularization parameter $\alpha$
    depending on $n$ and $\delta$.
    The term $R \alpha^{\min\{ \nu, 1 \}}$ quantifies the approximation error based on the smoothness of 
    $f_\star$ and exhibits the typical \emph{saturation effect}
    of ridge regression: the fact that
    the convergence speed
    cannot be improved beyond a smoothness level $\nu=1$
    \citep[e.g.][]{neubauer1997converse,li2023on}.
    The key difference to known results for
    subexponential noise in this setting \cite{Smale2007,BauerEtAl2007} 
    is the Fuk--Nagaev term 
    $\eta(\delta, n)$ appearing in the sample error,
    which introduces an additional polynomial dependence
    on $\delta$ and $n$. We now investigate
    the consequences of this term.

\paragraph{Confidence regimes.}
We split the confidence scale into two disjoint intervals
depending on whether the subgaussian
component or the polynomial component
dominates in the term $\eta(\delta, n)$.
For $n \in \mbn$, $q\geq 3$ we define  
\begin{align}
 D_1(n, q)&:= 
\left\{ \delta \in (0,1)\;:\;   \eta(\delta , n) =  \sigma\sqrt{\frac{ \log(6c_1/\delta)}{ n} }
\right\} \nonumber\\
&= \left\{ \delta \in (0,1)\;:\; n \geq \left( \frac{Q^2}{\sigma^{2q}}\right)^{\frac{1}{q-2}} 
\delta^{-\frac{2}{q-2}}\cdot \log(6c_1/\delta)^{-\frac{q}{q-2}}\right\}  , \label{eq:D1}\\
 D_2(n, q) & := (0,1) \setminus D_1(n, q) \;.
 \label{eq:D2}
\end{align}

In what follows, we will refer to $D_1(n, q)$
as the \emph{subgaussian confidence regime}
and $ D_2(n, q)$ as the \emph{polynomial confidence regime}.
The \emph{effective sample size} $n_0$
ensuring subgaussian behavior of $\eta(n, \delta)$ 
for all $n \geq n_0$ is hence
\begin{equation}
\label{eq:effective_samples}
        n_0 := \left( \frac{Q^2}{\sigma^{2q}}\right)^{\frac{1}{q-2}} 
        \delta^{-\frac{2}{q-2}}\cdot \log(6c_1/\delta)^{-\frac{q}{q-2}}.
\end{equation}

\begin{figure}[H]
    \centering
     \includegraphics[width=0.60\textwidth]{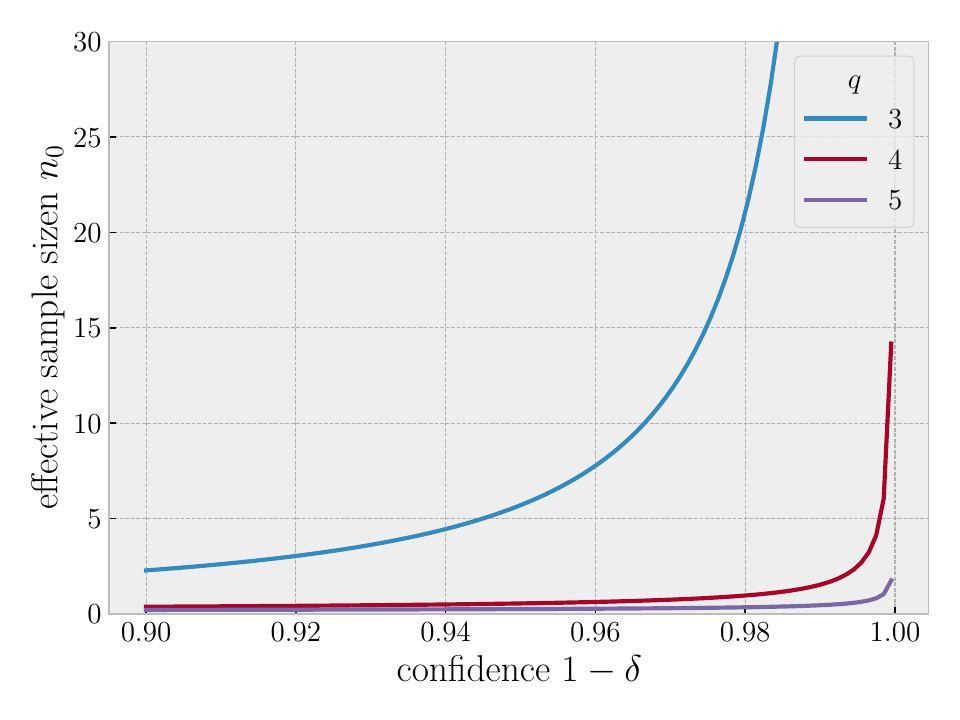}
     \caption{Illustration of the
     the effective sample size $n_0$ ensuring
     subgaussian behavior of the term
     $\eta(\delta, n)$ defined
     in \eqref{eq:effective_samples} for different choices of
     $q$. For simplicity, we set $c_1 = 1$,
     $\sigma = 2$ and $Q = 10$.
     \label{fig:sample_size}}
\end{figure}

We illustrate the behavior of $n_0$ depending on 
$1-\delta$ for different choices of $q$ in
\cref{fig:sample_size}
(we note that the involved constant $c_1$
stems from the Fuk--Nagaev inequality given in 
\cref{prop:fn_modified} and generally depends
on $q$). We choose $c_1 = 1$ for simplicity
to provide a basic intuition---note that $c_1$ only affects $n_0$
logarithmically.
We refer the reader to  \cite{Rio2017} for a detailed discussion
based on the bound for real-valued random variables.

\paragraph{Subgaussian confidence regime and convergence rates.}
We now give an excess risk bound which is similar to the
setting with bounded or subexponential noise: it
exhibits a logarithmic dependence of the confidence
parameter $\delta$ and a dependence of the sample size
up to $n^{-1/3}$ depending on 
the level of smoothness given by $\nu$ \citep{Smale2007, BauerEtAl2007}.
In the asymptotic large sample context,
this bound is dominant and allows us to recover convergence
rates.

\begin{corollary}[Subgaussian confidence regime]
\label{cor:low_confidence}
Let \eqref{eq:mom} and \eqref{eq:src} be satisfied.
Then there exist constants $\tilde{c}_1, \tilde{c}_2 > 0$ such
that with the regularization schedule
\begin{equation*}
\alpha_1(n, \delta) := \tilde{c}_2 \left( \frac{\log(6 c_1 / \delta)}{n} \right)^{\frac{1}{2\min\{\nu ,1\} +1}}
\end{equation*} 
we have
\begin{align}
    \label{eq:error_bound_regime1}
    \norm{\inc  \widehat f_{\alpha_1(n, \delta)} - f_\star}_{L^2(\pi)}
    &\leq 
    \tilde{c}_1 R \;\left( 
    \frac{\log(6c_1/\delta)}{n} \right)^{\frac{\min\{ \nu, 1 \}}{2\min\{\nu ,1\} +1}} , 
\end{align}
with confidence $1-\delta$   
for all $\delta \in D_1(n, q)$ and $n\in \N$ such that 
$\alpha_1(n, \delta) \leq \kappa^2$.
\end{corollary}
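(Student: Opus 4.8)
The plan is to specialize the bound of \cref{prop:excess_risk_bound} to the regime $D_1(n,q)$ and then insert the schedule $\alpha_1(n,\delta)$, tracking the exponent of each resulting term. Throughout I would write $m := \min\{\nu,1\} \in [1/2,1]$ and $L := \log(6c_1/\delta)$, noting that $c_1 \geq 1$ gives $\log(6/\delta) \leq L$. By the definition \eqref{eq:D1} of $D_1(n,q)$, being in the subgaussian regime means precisely that $\eta(\delta,n) = \sigma\sqrt{L/n}$, so after the crude monotone replacements $\log(6/\delta)\le L$ the three quantities inside the large parenthesis of \cref{prop:excess_risk_bound} become
\begin{equation*}
    \frac{L}{n}, \qquad \alpha^{m}\sqrt{\frac{L}{n}}, \qquad \sigma\sqrt{\frac{L}{n}}.
\end{equation*}
Before substituting I must check that the schedule is admissible, i.e. that the applicability condition \eqref{eq:alpha-coice}, namely $C_\kappa\log(6/\delta)\le\alpha_1(n,\delta)\sqrt n$, holds. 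Writing $s := (L/n)^{1/(2m+1)}$ so that $\alpha_1 = \tilde c_2 s$, this amounts to $C_\kappa\log(6/\delta)\le\tilde c_2 L^{1/(2m+1)} n^{(2m-1)/(2(2m+1))}$, which is a lower bound on $n$ that is only polylogarithmic in $1/\delta$; I expect it to be compatible with the defining inequality of $D_1(n,q)$, which already forces $n$ to grow polynomially in $1/\delta$. Verifying this compatibility cleanly---in particular handling the saturation boundary $m=1/2$, where the $n$-exponent degenerates---is the step I expect to require the most care and is the main obstacle.

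With $\alpha = \alpha_1 = \tilde c_2 s$ and $\sqrt{L/n} = s^{m+1/2}$, the four summands of the bound rewrite as powers of $s$: the approximation term as $R\tilde c_2^{m}\,s^{m}$, and the three sample-error terms as
\begin{equation*}
    C_\diamond\tilde c_2^{-1/2}\,s^{2m+1/2}, \qquad
    C_\diamond\tilde c_2^{\,m-1/2}\,s^{2m}, \qquad
    C_\diamond\sigma\tilde c_2^{-1/2}\,s^{m},
\end{equation*}
respectively. The target rate is exactly $s^{m} = (L/n)^{m/(2m+1)}$, and one sees that the exponent $1/(2m+1)$ of the schedule is precisely what makes the approximation term $R\alpha^m$ and the variance-proxy term $\frac{C_\diamond\sigma}{\sqrt\alpha}\sqrt{L/n}$ scale identically, as $s^{m}$; these two are the rate-determining contributions, and balancing them fixes the optimal order (a convenient choice is $\tilde c_2\asymp(\sigma C_\diamond/R)^{2/(2m+1)}$, though for the existence claim any fixed $\tilde c_2>0$ works).

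It then remains to absorb the two leftover terms, which carry the strictly larger exponents $2m+\tfrac12 > m$ and $2m > m$ (here $m\ge 1/2>0$ is used). Since the hypothesis $\alpha_1(n,\delta)\le\kappa^2$ gives $s\le\kappa^2/\tilde c_2$, the excess powers are bounded by constants: $s^{2m+1/2}\le(\kappa^2/\tilde c_2)^{m+1/2}\,s^{m}$ and $s^{2m}\le(\kappa^2/\tilde c_2)^{m}\,s^{m}$. Collecting everything yields a bound of the form $\tilde c_1 R\,s^{m}$ with
\begin{equation*}
    \tilde c_1 := \tilde c_2^{m} + \tfrac{C_\diamond}{R}\Big[ \tilde c_2^{-1/2}(\kappa^2/\tilde c_2)^{m+1/2} + \tilde c_2^{\,m-1/2}(\kappa^2/\tilde c_2)^{m} + \sigma\tilde c_2^{-1/2}\Big],
\end{equation*}
a finite constant depending only on the fixed quantities $m, R, \sigma, \kappa, C_\diamond, \tilde c_2$; this is exactly \eqref{eq:error_bound_regime1}. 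Besides the admissibility check flagged above, the only further subtlety is keeping $\tilde c_2^{\,m-1/2}$ finite, which is guaranteed by the well-specified assumption $\nu\ge 1/2$ (so that $m\ge 1/2$); in the misspecified regime $m<1/2$ this bookkeeping would have to be reorganized.
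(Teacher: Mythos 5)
Your computation is correct and follows essentially the same route as the paper: specialize \cref{prop:excess_risk_bound} to $\delta \in D_1(n,q)$ so that $\eta(\delta,n)=\sigma\sqrt{L/n}$, insert the schedule, and balance the approximation term $R\alpha^{m}$ against the noise term $C_\diamond\sigma\alpha^{-1/2}\sqrt{L/n}$, which fixes the exponent $1/(2m+1)$. Your bookkeeping is in fact slightly cleaner than the paper's: where the paper introduces the intermediate condition \eqref{eq:alpha1} (namely $\alpha^{m}\sqrt{n}\geq\sqrt{\log(6/\delta)}$) to fold the $\log(6/\delta)/n$ term into the square-root term, you absorb both subdominant terms ($s^{2m+1/2}$ and $s^{2m}$ versus the target $s^{m}$) directly using only the stated hypothesis $\alpha_1(n,\delta)\leq\kappa^2$, i.e.\ $s\leq\kappa^2/\tilde c_2$. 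This removes one of the three constraints the paper collects in \eqref{eq:alpha_1_restriction} and yields an explicit $\tilde c_1$.

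The one step you leave open is the admissibility condition \eqref{eq:alpha-coice}, $C_\kappa\log(6/\delta)\leq\alpha_1(n,\delta)\sqrt{n}$, without which \cref{prop:excess_risk_bound} cannot be invoked at all. The paper does not give a more satisfying resolution than you do: it simply lists this as the third constraint in \eqref{eq:alpha_1_restriction} and asserts that a suitable $\tilde c_2$ makes $\alpha_1$ satisfy all three, using $n^{-1/2}\leq n^{-1/(2m+1)}$ and $\log(6/\delta)\leq\log(6c_1/\delta)$. As you correctly observe, this is delicate precisely at the boundary $\nu=1/2$ (i.e.\ $m=1/2$), where \eqref{eq:alpha-coice} reduces to $C_\kappa\log(6/\delta)\lesssim\tilde c_2\sqrt{\log(6c_1/\delta)}$ with no help from $n$, so the condition effectively imposes an additional implicit restriction on $\delta$ (or requires $m>1/2$) that neither you nor the paper makes explicit. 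So the gap you flag is real, but it is inherited from the statement rather than introduced by your argument; for $m>1/2$ it is closed by noting that \eqref{eq:alpha-coice} becomes a polylogarithmic-in-$1/\delta$ lower bound on $n$, which is implied for $n$ large enough and is compatible with the polynomial lower bound on $n$ defining $D_1(n,q)$ in \eqref{eq:D1}.
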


The constants $\tilde{c}_1$ and $\tilde{c}_2$ in \cref{cor:low_confidence} 
only depend on $R, \nu, \kappa, \sigma, c_1, c_2$ and can
be made explicit, but we omit their closed form 
here for the sake of a more accessible presentation.
We refer the reader to the proof 
for more details.

\begin{remark}[Convergence rates]
    We directly obtain
    convergence rates from the above consideration.
    For \emph{fixed confidence parameter} $\delta \in (0,1)$,
    we see that for all $n \geq n_0$, where
    $n_0$ 
    is the effective sample size given in \eqref{eq:effective_samples},
    we have
    $\delta \in D_1(n , q)$.
    Furthermore, 
    there exists some $\tilde{n}_0 \in \N$ such that
    $\alpha_1(n, \delta) \leq \kappa^2$ for all $n \geq \tilde{n}_0$.
    Combining these two insights, 
    from \cref{cor:low_confidence}, we obtain
    \begin{align*}
    \norm{\inc  \widehat f_{\alpha_1(n, \delta)} - f_\star}_{L^2(\pi)}
    &\leq \tilde{c}_1 R \;\left( \frac{\log(6c_1/\delta)}{n} \right)^{\frac{\min\{ \nu, 1 \}}{2\min\{\nu ,1\} +1}}
    \end{align*}
    with confidence $1-\delta$
    for all $n \geq \max\{n_0, \tilde{n}_0 \}$. We explicitly note
    that the convergence rates as well as the regularization schedule
    $\alpha_1(n, \delta)$
    match exactly the known results for the
    capacity-independent setting that have been derived under 
    the assumption of bounded or subexponential noise
    \citep{Smale2007, BauerEtAl2007}. 
\end{remark}

\vspace{0.2cm}
\paragraph{Polynomial confidence regime.}
By definition \eqref{eq:D1},
the polynomial confidence regime $\delta \in D_2(n, q)$ 
is relevant in the nonasymptotic investigation whenever
$n < n_0$.
For completeness, 
we address this setting in \cref{app:polynomial_regime}
and show that \cref{prop:excess_risk_bound} can yield
simplified risk bounds with suitable regularization schedules for
$\alpha$ based on $\delta$ and $n$.
Depending on $\delta$, these bounds may 
require a stronger regularization $\alpha_2(n, \delta)$ than
the subgaussian confidence setting. In fact, the resulting bound
exhibits a polynomial worst-case dependence on $\delta$,
which is compensated by a better dependence on the sample size before 
transitioning to the behavior from the subgaussian confidence 
regime given by \cref{cor:low_confidence}.
\camera{
This behavior can be observed in practice, which we confirm
in a basic numerical experiment provided in \cref{sec:experiment}.
}


\section{Capacity dependent bound and optimal rates}
\label{sec:capacity_dependent}

We now improve the results from the previous section and give
an excess risk bound that involves the \emph{effective dimension},
which has been established as a central tool
to quantify the algorithm-dependent capacity of the hypothesis space
$\mathcal{H}$ relative to the distribution
$\pi$ and regularization parameter $\alpha$ 
in order to derive risk bounds in regularized kernel-based learning
under the assumptions of an eigenvalue decay of $\cov$
\citep{zhang05learning, caponnetto2007optimal,Muecke18, Lin2020}.

\begin{definition}[Effective dimension]
For $\alpha >0$, we define 
\begin{equation*}
    \cN(\alpha):= \tr\left(
    \cC_\pi (\cC_\pi + \alpha \idop_\mathcal{H})^{-1}\right) < \infty.
\end{equation*}
\end{definition}

We now assume the standard polynomial eigenvalue decay of
$\cov$ 
\citep{caponnetto2007optimal, Muecke18, fischer2020sobolev}.

\begin{assumption}[Eigenvalue decay] \label{asst:evd}
    We assume that the nonincreasingly ordered sequence of 
    nonzero eigenvalues $(\mu_i)_{i \geq 1}$
    of $\mathcal{T}_\pi$ satisfies the decay
    \begin{equation}
        \label{eq:evd_new}
        \mu_i \leq D i^{-1/p}, \quad i \in \N
        \tag{EVD}
    \end{equation}    
    for a constant $D > 0$ and some $p \in (0, 1)$.
\end{assumption}

Under the additional assumption \eqref{eq:evd_new},
we can sharpen \cref{prop:excess_risk_bound}.

\begin{proposition}[Capacity-dependent excess risk bound]
\label{prop:excess_risk_capacity_new}
Let \eqref{eq:mom}, \eqref{eq:src} and \eqref{eq:evd_new} be satisfied. 
Suppose that $\delta \in (0,1)$ and
\begin{equation}
\label{eq:ass:eff:dim_new}
\log(2/\delta)\left( \frac{2\kappa^2}{n\alpha} + \frac{2 \sqrt{\tilde{D}} \kappa}{\sqrt{n} \alpha^{(1 + p)/2}} \right) \leq 1.
\end{equation}
Then there exists a constant $c >0$ not depending
on $\delta$ and $n$ such that with confidence $1-\delta$, we have
\begin{align*}
\Vert I_\pi \widehat f_\alpha -   f_\star\Vert _{L^2(\pi)} &\leq 
c \; \left(  \alpha^{\min\{\nu, 1\} } +  \frac{ \log(8/\delta)}{\sqrt{\alpha} n}  + \sqrt{\frac{\cN(\alpha) \log(8/\delta)}{n}}
+  \sqrt{\cN(\alpha)} \cdot    \eta(\delta, n, \alpha) \right), 
\end{align*}
where we set 
\begin{equation*}
    \eta(\delta, n, \alpha) 
    := \max \left\{ \left(\frac{1}{\delta n^{q-1}} \right)^{1/q}  \cdot
    \left( \frac{1}{\alpha \cN(\alpha)}\right)^{\frac{q-2}{2q}} ,
    \sqrt{\frac{\log(8c_1/\delta)}{n} }
    \right\}.
\end{equation*}
\end{proposition}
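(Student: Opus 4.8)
The plan is to follow the integral operator template already used for the capacity-free bound in \cref{prop:excess_risk_bound}, but to replace the crude variance proxy $1/\alpha$ by the effective dimension $\cN(\alpha)$ in every place where a second or higher moment of the noise contribution is controlled. First I would split the error via the polar decomposition $\inc = U\cov^{1/2}$ from \eqref{eq:polar_decomposition} into an approximation and a sample part,
\begin{equation*}
\Norm{\inc \wh f_\alpha - f_\star}_{L^2(\pi)}
\leq \Norm{\inc f_\alpha - f_\star}_{L^2(\pi)}
+ \Norm{\cov^{1/2}(\wh f_\alpha - f_\alpha)}_{\mathcal{H}},
\end{equation*}
where the first summand is the bias, bounded by $R\alpha^{\min\{\nu,1\}}$ directly from \eqref{eq:src} and the spectral calculus (this is where the saturation at $\nu=1$ enters). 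Writing $A := \cov + \alpha\idop_{\mathcal{H}}$ and $\wh A := \empcov + \alpha\idop_{\mathcal{H}}$, I would then expand the sample part using $\wh f_\alpha = \wh A^{-1}\wh\Upsilon$, the identity $A f_\alpha = \inc^* f_\star$ and \eqref{eq:empirical_rhs} to obtain the standard decomposition
\begin{equation*}
\wh f_\alpha - f_\alpha
= \wh A^{-1}\Bigl[(\empcov - \cov)(g_\star - f_\alpha) + \tfrac{1}{n}\sum_{i=1}^n \phi(X_i)\epsilon_i\Bigr],
\end{equation*}
where $g_\star \in \mathcal{H}$ represents $f_\star$ in the well-specified regime $\nu \geq 1/2$.

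Second, I would insert $A^{1/2}A^{-1/2}$ and reduce the sample part to the product of the deterministic factor $\Norm{\cov^{1/2}A^{-1/2}} \leq 1$, the random operator factor $\Theta := \Norm{A^{1/2}\wh A^{-1}A^{1/2}}$, and the two vector terms $\Norm{A^{-1/2}(\empcov-\cov)(g_\star - f_\alpha)}_{\mathcal{H}}$ and $\Norm{A^{-1/2}\tfrac{1}{n}\sum_i\phi(X_i)\epsilon_i}_{\mathcal{H}}$. The factor $\Theta$ is controlled by a standard operator Bernstein inequality with $\cN(\alpha)$ as variance proxy: since the summands $A^{-1/2}\phi(X_i)\otimes\phi(X_i)A^{-1/2}$ are bounded in norm by $\kappa^2/\alpha$, condition \eqref{eq:ass:eff:dim_new} is exactly what forces $\Norm{A^{-1/2}(\empcov - \cov)A^{-1/2}} \leq 1/2$ (so $\Theta \leq 2$) on an event of high probability, after using the eigenvalue decay \eqref{eq:evd_new} to bound $\cN(\alpha) \leq \tilde D\alpha^{-p}$. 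On the same event the design term $\Norm{A^{-1/2}(\empcov-\cov)(g_\star - f_\alpha)}_{\mathcal{H}}$ is bounded by the operator-concentration quantity times a bias-controlled norm, producing the term $\log(8/\delta)/(\sqrt\alpha n)$ together with a contribution of order $\sqrt{\cN(\alpha)\log/n}\cdot\alpha^{\min\{\nu,1\}}$ that \eqref{eq:ass:eff:dim_new} lets me absorb into the bias $\alpha^{\min\{\nu,1\}}$.

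Third, and this is the crux, I would bound the noise term $\tfrac1n\sum_i\xi_i$ with $\xi_i := A^{-1/2}\phi(X_i)\epsilon_i$ using the Hilbert-space Fuk--Nagaev inequality of \cref{prop:fn_modified}, using $\cN(\alpha)$ \emph{simultaneously} as the variance and the $q$-th moment proxy. Since $\E[\epsilon\mid X]=0$, $\E[\epsilon^2\mid X]<\sigma^2$ and $\E[\absval{\epsilon}^q\mid X]<Q$ by \eqref{eq:mom}, and $\Norm{A^{-1/2}\phi(X)}_{\mathcal{H}}^2 = \innerprod{\phi(X)}{A^{-1}\phi(X)}_{\mathcal{H}} \leq \kappa^2/\alpha$, I would compute
\begin{equation*}
\E\Norm{\xi_i}_{\mathcal{H}}^2 \leq \sigma^2\,\tr(A^{-1}\cov) = \sigma^2\cN(\alpha),
\qquad
\E\Norm{\xi_i}_{\mathcal{H}}^q \leq Q\,(\kappa^2/\alpha)^{(q-2)/2}\,\cN(\alpha).
\end{equation*}
Feeding these into \cref{prop:fn_modified} yields a subgaussian contribution of order $\sqrt{\cN(\alpha)\log(8c_1/\delta)/n}$ and a polynomial contribution of order $\bigl(\cN(\alpha)(\kappa^2/\alpha)^{(q-2)/2}/(\delta n^{q-1})\bigr)^{1/q}$; factoring out $\sqrt{\cN(\alpha)}$ and using $\tfrac12 - \tfrac{q-2}{2q} = \tfrac1q$ recombines the exponents into exactly $\sqrt{\cN(\alpha)}\cdot\eta(\delta,n,\alpha)$ with the stated $\eta$. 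Collecting the bias, design and noise contributions and taking a union bound over the operator-concentration and Fuk--Nagaev events (whose confidence levels sum to $\delta$) then produces the claimed inequality with a single constant $c$.

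The main obstacle I anticipate is the exact bookkeeping in this last step: verifying that the moment proxy $\cN(\alpha)(\kappa^2/\alpha)^{(q-2)/2}$ passes through the Fuk--Nagaev tail so that the $\cN(\alpha)$ and $\alpha$ powers recombine into precisely the advertised $\eta(\delta,n,\alpha)$, and that the design term really is dominated under \eqref{eq:ass:eff:dim_new} rather than surviving as a separate summand. A secondary technical point is ensuring that the single scalar condition \eqref{eq:ass:eff:dim_new} simultaneously controls $\Theta$ and enables the absorption of the design term, which requires translating the eigenvalue decay \eqref{eq:evd_new} into $\cN(\alpha) \leq \tilde D\alpha^{-p}$ with the correct constant $\tilde D$.
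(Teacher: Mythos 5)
Your proposal is correct and follows essentially the same route as the paper's proof: the same bias--variance split, the same sandwiching of $\widehat{A}^{-1}$ between $A^{1/2}$ factors controlled via \eqref{eq:ass:eff:dim_new}, and---crucially---the same use of $\cN(\alpha)$ as both the variance proxy and the $q$-th moment proxy when applying \cref{prop:fn_modified} to the whitened noise variables $(\cov+\alpha\idop_{\mathcal{H}})^{-1/2}\phi(X_i)\epsilon_i$, with the exponents recombining into $\eta(\delta,n,\alpha)$ exactly as you computed. The only cosmetic differences are that the paper keeps the deterministic remainder $\alpha(\cov+\alpha\idop_{\mathcal{H}})^{-1/2}f_\alpha$ (its terms $I_{13}$ and $I_2$) as separate $O(\alpha^{\min\{\nu,1\}})$ summands instead of cancelling it algebraically as you do, and it bounds the design term by a vector-valued Bennett inequality applied directly to $(\cov+\alpha\idop_{\mathcal{H}})^{-1/2}(f_\star(X_i)-f_\alpha(X_i))\phi(X_i)$ with variance proxy proportional to $\cN(\alpha)$, rather than by your operator-norm product argument.
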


The constant $c$ is made explicit in the proof.
The key idea builds upon the original work 
of \citep{caponnetto2007optimal}.
In particular, we incorporate the effective dimension 
into the Fuk--Nagaev inequality as a proxy for the $q$-th absolute
moment appearing in the term $\eta(\delta, n, \alpha)$, 
thereby generalizing the idea to use the effective dimension 
as a variance proxy in the Bernstein inequality.

\begin{corollary}[Convergence rates]
\label{cor:capacity_rates_new}
    Let \eqref{eq:mom}, \eqref{eq:src} and
    \eqref{eq:evd_new} be satisfied. 
    Then for every $\delta \in (0,1)$, there exists
    some $n_0 \in \N$ such
    that with the regularization schedule
    \begin{equation*}
    \alpha(n, \delta) := \left( \frac{\log(8 c_1 / \delta)}{n} \right)^{\frac{1}{2\min\{\nu ,1\} +p}},
    \end{equation*}  
    we have
    \begin{align}
        \label{eq:error_bound_regime1}
        \norm{\inc  \widehat f_{\alpha(n, \delta)} - f_\star}_{L^2(\pi)}
        &\leq 
        c
        \left( \frac{\log(8 c_1 / \delta)}{n} \right)^{\frac{\min\{\nu ,1\}}{2\min\{\nu ,1\} +p}} 
    \end{align}
    with confidence $1-\delta$ for all $n \geq n_0$
    with a constant $c > 0$
    independent of $n$ and $\delta$.
\end{corollary}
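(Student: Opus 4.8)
The plan is to substitute the stated schedule into \cref{prop:excess_risk_capacity_new} and show that, under \eqref{eq:evd_new}, each of the four summands in the excess risk bound is controlled by the target rate. Throughout I write $m := \min\{\nu,1\} \geq 1/2$, $s := 2m+p$, and $L := \log(8c_1/\delta)$, so that $\alpha(n,\delta) = (L/n)^{1/s}$ and the claimed rate is exactly $\alpha(n,\delta)^m = (L/n)^{m/s}$. The one ingredient from outside the proposition is the classical effective-dimension estimate under polynomial decay: from \eqref{eq:evd_new} one obtains $\cN(\alpha) \leq \tilde D\, \alpha^{-p}$ with $\tilde D$ depending only on $D$ and $p$ (the same constant appearing in \eqref{eq:ass:eff:dim_new}). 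I would invoke this as a separate lemma or prove it directly by splitting $\sum_i \mu_i/(\mu_i+\alpha)$ at the index $i \approx (D/\alpha)^p$ and bounding the tail by an integral, which converges precisely because $p < 1$.

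With this in hand the argument is a term-by-term comparison against $\alpha^m$. The bias term $\alpha^{\min\{\nu,1\}}$ is the target rate by definition. For the dominant stochastic term, substituting $\cN(\alpha) \lesssim \alpha^{-p}$ gives $\sqrt{\cN(\alpha)\,L/n} \lesssim \alpha^{-p/2}\sqrt{L/n} = (L/n)^{1/2 - p/(2s)}$, and the key algebraic identity $1/2 - p/(2s) = m/s$ (equivalently $s = 2m+p$) shows this equals $\alpha^m$ up to constants; the same computation handles the subgaussian branch $\sqrt{\cN(\alpha)}\sqrt{L/n}$ of the last term. It remains to dominate the two genuinely lower-order contributions. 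For $\log(8/\delta)/(\sqrt{\alpha}\,n)$, using $\log(8/\delta) \leq L$ the ratio to $\alpha^m$ reduces, after collecting exponents, to a fixed multiple of $n^{(1-2m-2p)/(2s)}$, which tends to $0$ since $m+p>1/2$. This comparison is what selects the threshold $n_0$, which is permitted to depend on $\delta$; crucially, because each lower-order branch is bounded by $\alpha^m$ itself (coefficient one) for $n \geq n_0$, the resulting constant $c$ absorbs no $\delta$-dependence.

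The main obstacle is the polynomial branch of the last term. Using $\cN(\alpha) \lesssim \alpha^{-p}$ one simplifies
\[
\sqrt{\cN(\alpha)}\left(\tfrac{1}{\delta n^{q-1}}\right)^{1/q}\!\left(\tfrac{1}{\alpha \cN(\alpha)}\right)^{\frac{q-2}{2q}} = \cN(\alpha)^{1/q}\,\alpha^{-\frac{q-2}{2q}}\,\delta^{-1/q} n^{-\frac{q-1}{q}} \lesssim \alpha^{-\frac{p}{q}-\frac{q-2}{2q}}\,\delta^{-1/q} n^{-\frac{q-1}{q}}.
\]
Dividing by $\alpha^m = (L/n)^{m/s}$ and collecting powers of $n$ (with $\delta$, hence $L$, fixed), the ratio is a fixed multiple of $n^{\beta}$ with $\beta = \frac{q(1-2m-2p)+(4m+4p-2)}{2qs}$; a short check shows $\beta<0$ for every $q \geq 3$ and $m \geq 1/2$, since the worst case $q=3$ already gives numerator $1-2m-2p<0$ and larger $q$ only decreases it. Hence this branch too vanishes relative to $\alpha^m$, enlarging $n_0$ if necessary. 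Finally I would verify that the admissibility hypothesis \eqref{eq:ass:eff:dim_new} holds for $n \geq n_0$: with the schedule, the first bracket term is $O(n^{1/s-1})$ (vanishing because $s>1$) and the second $O(n^{(1-2m)/(2s)})$, which decays for $\nu>1/2$ and is the delicate boundary case for $\nu=1/2$, where one relies on the decay of the remaining terms and, if needed, a leading constant in $\alpha(n,\delta)$ to secure the bound. Combining the four estimates yields the stated inequality with a constant $c$ collecting $R,\kappa,\sigma,Q,D,p,q,c_1$ but independent of $n$ and $\delta$.
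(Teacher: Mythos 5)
Your proposal is correct and follows essentially the same route as the paper's proof: substitute the schedule into \cref{prop:excess_risk_capacity_new}, invoke $\cN(\alpha)\leq\tilde{D}\alpha^{-p}$ from \eqref{eq:evd_new}, and dispose of the polynomial Fuk--Nagaev branch by an exponent count (you compare it to $\alpha^{\min\{\nu,1\}}$ directly, the paper compares it to the subgaussian branch---these are equivalent under the balancing that defines the schedule). Your caveat about \eqref{eq:ass:eff:dim_new} at $\nu=1/2$ is well taken: in that boundary case $\sqrt{n}\,\alpha(n,\delta)^{(1+p)/2}$ stays equal to a constant rather than diverging, so the paper's own one-line justification of admissibility is too quick there, and a leading constant in the schedule (or an explicit treatment of the $\log(2/\delta)$ factor) is genuinely needed.
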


\begin{remark}[Optimality of rates]
    The rates provided by \cref{cor:capacity_rates_new}
    match the rates from the literature
    derived for well-specified 
    kernel ridge regression under the assumption
    of subexponential noise
    \citep{caponnetto2007optimal,Steinwart09Optimal}. 
    In particular, these rates are known to be minimax optimal
    over the class of distributions satisfying
    \eqref{eq:src}, \eqref{eq:evd_new} and the
    Bernstein condition \eqref{eq:bernstein}.
    \cref{cor:capacity_rates_new} now 
    proves that one can significantly
    relax the assumption of subexponential noise \eqref{eq:bernstein}, as rate
    optimality is already achieved under the condition \eqref{eq:mom}.
    Furthermore, the regularization schedule
    $\alpha(n, \delta)$ is the same as in the light-tailed 
    setting---in particular, it does not depend on $q$.
\end{remark}


\section{Fuk--Nagaev inequality in Hilbert spaces}
\label{sec:fuk-nagaev}

We discuss the central ingredient for the derivation of the previous results in more detail for convenience.
We refer the reader to \citep{Fuk1973,Nagaev1979} for the original 
work in the setting of real-valued random variables and \citep{Rio2017}
for a discussion of the involved constants.
We present a sharpened version 
of a result due to \citep[][Theorem 3.5.1]{Yurinsky1995}, which is formulated more
generally in normed spaces, but exhibits an excess term
that can be removed in the Hilbert space case.
We provide the proof in \cref{app:concentration_bounds}.
\yurinsky{We also note that the proof of the
result in \cite{Yurinsky1995} is incomplete due to an inconsistent exponential moment bound. 
We address this issue by deriving an alternative bound.}

\begin{proposition}[Fuk--Nagaev inequality; Hilbert space version]
\label{prop:fn_modified}
Let $\xi, \xi_1, \dots \xi_n$ be independent and identically distributed random variables taking values
in a separable Hilbert space $\X$ such that
\begin{equation*}
    \E[\xi] = 0, \quad
    \E\left[ \norm{\xi}_\X^2 \right] < \sigma^2
    \quad \text{and} \quad
    \E\left[\norm{\xi}_\X^q\right] < Q,
\end{equation*}
for some constants $\sigma^2 >0$, $Q >0$ and $q \in \N$, $q \geq 3$.
Write $S_n :=  \sum_{i = 1} \xi_i$.
Then there exist two constants $c_1 >0$
and $c_2> 0$ depending only on $q$ such that for every $t >0$, we have
\begin{equation}
    \label{eq:fn_modified}
    \P \left[ \,
        \Norm{ \frac{1}{n} S_n }_\X  > t
        \right]
        \leq 
        c_1 \left(  
        \frac{Q}{t^q n^{q-1}}
        + 
        \exp\left(-c_2 \frac{t^2 n}{\sigma^2}\right)
        \right).
\end{equation}
\end{proposition}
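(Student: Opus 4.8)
The plan is to follow the classical truncation argument behind the Fuk--Nagaev inequality, transplanted to the Hilbert space $\X$ in the manner of \cite{Yurinsky1995}, and to repair the exponential-moment step flagged in the text. Since $\P[\norm{\frac1n S_n}_\X > t] = \P[\norm{S_n}_\X > nt]$, I fix a truncation radius $\tau > 0$ (to be tuned at the end, ultimately of order $nt/q$) and split each summand as $\xi_i = \xi_i' + \xi_i''$ with $\xi_i' := \xi_i\,\mathbf{1}[\norm{\xi_i}_\X \leq \tau]$. A union bound together with Markov's inequality applied to the $q$-th moment controls the large jumps, $\P[\exists i : \norm{\xi_i}_\X > \tau] \leq n\,\P[\norm{\xi}_\X > \tau] \leq nQ/\tau^q$, and on the complementary event $S_n = \sum_i \xi_i'$. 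Taking $\tau$ proportional to $nt$ already turns $nQ/\tau^q$ into the polynomial term $c_1 Q/(t^q n^{q-1})$, the proportionality constant being absorbed into a $q$-dependent $c_1$.

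It remains to bound the tail of the truncated sum. Centering via $\tilde\xi_i := \xi_i' - \E[\xi_i']$ costs a bias $\norm{\E[\xi_i']}_\X = \norm{\E[\xi\,\mathbf{1}[\norm{\xi}_\X > \tau]]}_\X \leq \E[\norm{\xi}_\X\,\mathbf{1}[\norm{\xi}_\X > \tau]] \leq Q/\tau^{q-1}$, so the aggregate shift $n\norm{\E[\xi_i']}_\X$ is a negligible fraction of $nt$ for the chosen $\tau$ (and in the opposite regime the polynomial bound $nQ/\tau^q$ already exceeds one and the claim is vacuous). The variables $\tilde\xi_i$ are independent, centered, bounded by $\norm{\tilde\xi_i}_\X \leq 2\tau$, and satisfy $\sum_i \E\norm{\tilde\xi_i}_\X^2 \leq n\sigma^2$, so everything reduces to an exponential tail estimate for $\norm{\tilde S_n}_\X$, where $\tilde S_n := \sum_i \tilde\xi_i$.

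This vector-valued exponential estimate is the heart of the argument and the point of departure from \cite{Yurinsky1995}: instead of invoking an exponential moment of $\xi$ itself (which does not exist under the moment condition \eqref{eq:mom_intro} and is the source of the inconsistency noted above), I bound $\E[\exp(\lambda\norm{\tilde S_n}_\X)]$ directly for the bounded variables $\tilde\xi_i$. Exploiting that the Hilbert norm is $2$-smooth, a supermartingale/smoothness argument in the spirit of \cite{Pinelis1986Remarks, Yurinsky1995} yields a subgamma estimate of the form $\E[\exp(\lambda\norm{\tilde S_n}_\X)] \leq \exp(C\lambda^2 n\sigma^2/(1 - C'\lambda\tau))$ for $\lambda\tau$ small, dimension-free by virtue of the $2$-smoothness; this smoothness is also what lets me drop the excess term present in the general normed-space statement of \cite{Yurinsky1995}. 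A Cram\'er--Chernoff optimisation over $\lambda$ then produces a Bennett/Bernstein-type tail $\P[\norm{\tilde S_n}_\X > nt/2] \leq \exp(-c\,(nt)^2/(n\sigma^2 + \tau nt))$, whose moderate-deviation regime is the desired subgaussian factor $\exp(-c_2 nt^2/\sigma^2)$.

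I expect the delicate step to be the simultaneous calibration of $\tau$ so that both stated terms emerge with constants depending only on $q$. A single fixed fraction $\tau \asymp nt$ makes the Bernstein exponent only of constant order in the far tail, where the subgaussian factor is negligible; there one must instead rely on the $q$-th moment (equivalently, the Poisson/Bennett shape of the tail of the bounded sum, which decays like a power $v^{-\Theta(q)}$ in $v = nt^2/\sigma^2$) to guarantee that the bounded-part contribution is absorbed into the polynomial term $c_1 Q/(t^q n^{q-1})$ rather than surviving as a spurious constant. Choosing $\tau$ of order $nt/q$ balances the large-jump bound $nQ/\tau^q$ against this far-tail contribution and is what makes both the polynomial and the subgaussian term appear with a common $q$-dependent pair $(c_1, c_2)$; verifying this balance, together with the dimension-free exponential-moment bound that underpins it, is where the real work lies.
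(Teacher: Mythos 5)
Your overall skeleton---truncation at a level $\tau$ of order $nt$, a Markov/union bound on the large jumps producing $nQ/\tau^q$, centering of the truncated summands at cost $Q/\tau^{q-1}$ each, and a Pinelis-type $2$-smoothness argument to remove the excess term $\E[\norm{S_n}_\X]$---coincides with the paper's proof. The genuine gap is in the exponential moment bound you use for the truncated sum. Your proposed subgamma estimate $\E[\exp(\lambda\norm{\tilde S_n}_\X)]\leq\exp(C\lambda^2 n\sigma^2/(1-C'\lambda\tau))$ retains only the variance and the truncation level, so it can only yield a Bernstein/Bennett tail for the bounded part. Since the large-jump bound forces $\tau\gtrsim nt$ (any smaller $\tau$ makes $nQ/\tau^q$ exceed $c_1Q/(t^qn^{q-1})$), the Bennett bound at level $nt/2$ has exponent of order $-(nt/\tau)\log(nt\tau/(n\sigma^2))$, i.e.\ it contributes a term of order $(\sigma^2/(nt^2))^{\Theta(q)}$, polynomial in $v=nt^2/\sigma^2$ as you note. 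But this term is \emph{not} absorbed by either term of \eqref{eq:fn_modified}: it dominates $e^{-c_2 v}$ for large $v$, and it exceeds the polynomial term $Q/(t^qn^{q-1})$ by a factor of order $n^{q/2-1}\sigma^q/Q$ (take $\sigma^2=\E\norm{\xi}_\X^2$ and $Q=\E\norm{\xi}_\X^q$ of constant order and let $n\to\infty$: then $\sigma^q/(n^{q/2}t^q)\gg Q/(n^{q-1}t^q)$ because $q>2$). So the ``delicate calibration'' you flag at the end is not fixable by a factor $1/q$ in $\tau$; no admissible truncation level makes the variance-only MGF bound strong enough in the far tail.

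The paper closes exactly this hole by proving a $q$-th-moment-aware exponential moment bound, \cref{lem:exponential_moments}, of the form $\E[\cosh(h\norm{\tilde S_n}_\X)]\leq\exp(\tilde c_2h^2B^2+\tilde c_qAh^qe^{hL})$, which keeps $A=\sum_i\E[\norm{\xi_i}_\X^q]$ as a separate quantity rather than routing all higher-moment information through $\tau$ and $\sigma^2$; the Chernoff optimization in \cref{sec:chernoff_optimization} then performs a case distinction on $\Delta(t)=A/(B^qt^q)$ and on $t^2$ versus $\log(1/\Delta(t))$, and it is precisely the $Ah^qe^{hL}$ term that generates the polynomial contribution with the correct $n^{-(q-1)}$ scaling. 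To repair your argument you would need to replace the subgamma MGF bound by such an estimate; the remaining steps of your outline then match the paper's proof.
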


\begin{remark}
    For simplicity, we may assume that
    $1 \leq c_1$ when we apply \cref{prop:fn_modified}.
\end{remark}

\paragraph{Confidence regimes.}
Directly rearranging \eqref{eq:fn_modified} from a tail bound to a confidence 
interval bound requires to solve a
transcendental equation which does not admit a simple closed form solution. 
However, we can still derive an upper bound on the confidence intervals
that reflects the superposition of polynomial and sub-gaussian tail in 
\eqref{eq:fn_modified}.
By introducing 
\begin{equation}
\label{eq:fn_confidence_derivation1}
  \delta := 
  2 \max\left\{
   \frac{c_1 Q}{t^q n^{q-1}},
    c_1 \exp\left(-c_2 \frac{t^2 n}{\sigma^2}\right)
  \right\},
\end{equation}
we have $\P[ n^{-1}\Norm{ S_n }_\X  \leq t] \geq 1- \delta$ by \eqref{eq:fn_modified}.
Rearranging \eqref{eq:fn_confidence_derivation1}, we have
\begin{equation}
    \label{eq:fn_confidence_derivation2}
    t \geq \left(\frac{2 c_1 Q}{\delta n^{q-1}} \right)^{1/q}
    \quad
    \text{and}
    \quad
    t \geq
    \sigma \sqrt{ \frac{\log(2c_1/\delta)}{c_2 n} },
\end{equation}
immediately leading to the following confidence bound.

\begin{corollary}[Confidence bound]
\label{cor:confidence_bound}
Under the assumptions of \cref{prop:fn_modified}, for all $\delta \in (0,1)$, we have 
\begin{equation}
    \label{eq:fn_confidence}
    \Norm{ \frac{1}{n} S_n }_\X 
    \leq
    \max
    \left\{
    \left(\frac{2 c_1 Q}{\delta n^{q-1}} \right)^{1/q}  ,
    \sigma \sqrt{ \frac{\log(2c_1/\delta)}{c_2 n} }
    \right\}
\end{equation}
with probability at least $1 - \delta$.
\end{corollary}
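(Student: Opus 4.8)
The plan is to invert the tail bound of \cref{prop:fn_modified} into a confidence statement by splitting the allotted failure probability $\delta$ evenly between the polynomial and the subgaussian contributions. Concretely, I would fix $\delta \in (0,1)$ and set
\[
t := \max\left\{ \left(\frac{2 c_1 Q}{\delta n^{q-1}}\right)^{1/q},\; \sigma\sqrt{\frac{\log(2c_1/\delta)}{c_2 n}} \right\},
\]
which is exactly the radius appearing on the right-hand side of \eqref{eq:fn_confidence}. The elementary observation driving the argument is that the sum of two nonnegative reals is at most twice their maximum, so rather than union-bounding over two events it suffices to force each of the two summands inside the Fuk--Nagaev bound to be at most $\delta/2$ at this single value of $t$.

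First I would check that $t$ is well-defined and strictly positive: since we may assume $c_1 \geq 1$ (the convention recorded in the remark following \cref{prop:fn_modified}) and $\delta < 1$, we have $2c_1/\delta > 2$, hence $\log(2c_1/\delta) > 0$ and the second entry of the maximum is real. Next I would invoke the two defining inequalities \eqref{eq:fn_confidence_derivation2}, which hold for this choice of $t$ by construction. From $t \geq (2c_1 Q/(\delta n^{q-1}))^{1/q}$ one rearranges to $c_1 Q/(t^q n^{q-1}) \leq \delta/2$, and from $t \geq \sigma\sqrt{\log(2c_1/\delta)/(c_2 n)}$ one obtains $c_2 t^2 n/\sigma^2 \geq \log(2c_1/\delta)$, hence $c_1 \exp(-c_2 t^2 n/\sigma^2) \leq \delta/2$.

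Adding these two bounds gives
\[
c_1 \left( \frac{Q}{t^q n^{q-1}} + \exp\left(-c_2 \frac{t^2 n}{\sigma^2}\right) \right) \leq \delta,
\]
so applying \cref{prop:fn_modified} at this $t$ yields $\P[\norm{\tfrac{1}{n}S_n}_\X > t] \leq \delta$, and passing to the complementary event delivers the claimed bound with probability at least $1-\delta$. Since every step is a direct algebraic rearrangement, I do not anticipate a genuine obstacle; the only points requiring care are the positivity of $\log(2c_1/\delta)$ (handled by the $c_1 \geq 1$ convention together with $\delta < 1$) and the bookkeeping that lets a single radius $t$ simultaneously control both tail terms, which is precisely what the $a + b \leq 2\max\{a,b\}$ trick buys us in place of a two-event union bound.
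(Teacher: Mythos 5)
Your proposal is correct and follows essentially the same route as the paper: both arguments hinge on bounding the sum of the two tail terms in \cref{prop:fn_modified} by twice their maximum, which amounts to allocating $\delta/2$ to each term and reading off the two radius conditions in \eqref{eq:fn_confidence_derivation2}. The only cosmetic difference is direction---the paper defines $\delta$ from a given $t$ and rearranges, whereas you fix $\delta$ and verify the chosen $t$ controls both terms---but the algebra is identical.
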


For every fixed $\delta$ and $n \to \infty$, this
shows the typical subgaussian behavior
and a convergence rate of $\frac{1}{n} S_n$ of the order $n^{-1/2}$
with high probability.
Interpreting the right hand side as a function
of $\delta$ for a fixed sample size $n$ however, 
the above bound characterizes a \emph{confidence regime change}
at $\bar \delta(n)$, which we define as the solution to the equation
\begin{equation}
    \label{eq:regime_change_equation}
    \left(\frac{2 c_1 Q}{\bar\delta(n)\, n^{q-1}} \right)^{1/q}  
    =
    \sigma \sqrt{ \frac{\log(2c_1/\bar\delta(n))}{c_2 n}}.
\end{equation}
In fact, in the \emph{polynomial confidence regime}
$\delta < \bar\delta(n)$, the dependence of the upper bound given
in \cref{cor:confidence_bound} on $\delta$ is clearly
worse than in the \emph{subgaussian regime} $\delta \geq \bar\delta(n)$
which is characterized by a logarithmic dependence on $\delta$.
In contrast, the polynomial confidence regime allows for a better
sample dependence of $n^{-(q-1)/q}$.



\paragraph{Sharpness of the tail bound.}

Both the subgaussian term and
the polynomial term in the right hand side
of the bound given by \cref{prop:fn_modified}
can generally not be improved without additional assumptions.
We repeat a similar argument as
the one given in \citep[Proposition 9]{LouEtAl2022},
which is given in the context of linear processes.
Let $\xi, \xi_1, \dots, \xi_n$ be independent real-valued random variables
drawn from a centered $t$-distribution with $q$ degrees of freedom, i.e.\
$\E[\xi^{q-c}] < \infty$ for all $0 < c \leq q$ and let 
$\sigma^2 := \E[\xi^2] = q / (q-2)$.
Then \citep[Theorem 1.9]{Nagaev1979}
shows that we have
\begin{equation*}
    \P\left[ S_n / n > t \right]
    =
    \P\left[ S_n / \sigma > nt / \sigma \right]
    \sim 1 - \phi(n^{1/2} t / \sigma ) + n( 1- F_{\sigma^{-1}\xi}(nt/\sigma) )
    \quad \text{ as } n \to \infty
\end{equation*}
for $nt/\sigma \geq n^{1/2}$,
where $\Phi$ is the standard normal cumulative distribution function
and $F_{\sigma^{-1}\xi}$ is the cumulative distribution function of
$\sigma^{-1}\xi$.
We now note that we have the basic property
$F_{\sigma^{-1}\xi}(nt/\sigma) = F_\xi(nt)$
and we can show that the distribution of $\xi$ satisfies
$1 - F(nt) \sim C_q / (nt)^q$
as $nt \to \infty$, where $C_q$ is
a constant depending exclusively on $q$.
In total, we obtain
\begin{equation*}
    \P\left[ S_n / n > t \right]
    \sim 1 - \Phi(n^{1/2} t / \sigma ) + \frac{C_q}{t^q n^{q-1} }
    \quad \text{ as } n \to \infty
\end{equation*}
for $nt/\sigma \geq n^{1/2}$, showing
that \cref{prop:fn_modified} is asymptotically optimal.

\section*{Acknowledgements}
Dimitri Meunier and Arthur Gretton are supported by
the Gatsby Charitable Foundation.
This work was partially funded by the German Federal Ministry of Research, Technology and Space as part of the 6G-CAMPUS project (grant no. 16KISK194).
\yurinsky{
The authors express their gratitude to Christian Fiedler
for pointing out inconsistensies in the proof of
\cite[Lemma 3.5.1]{Yurinsky1995} and
for his contribution to the corresponding corrections
in \cref{sec:exponential_moment_bound} and \cref{sec:chernoff_optimization}.
}

\bibliographystyle{unsrtnat}
\bibliography{references, references-heavy-tailed-noise}
\addcontentsline{toc}{section}{References}

\newpage

\appendix

\part*{Appendices}

The appendices are organized as follows.

\camera{
\cref{sec:experiment} contains a numerical experiment 
 which confirms the behavior of the excess risk
as described by the bounds provided in the main text in a very basic
setting.}
In \cref{app:proofs}, we report proofs of the results in
the main text of this paper.
In \cref{app:polynomial_regime}, we provide an additional
excess risk bound for the polynomial confidence
regime based on \cref{prop:excess_risk_bound}.
We collect some tail bounds 
and concentration results in \cref{app:concentration_bounds}.
In particular, \cref{app:fuk-nagaev} contains a proof of
\cref{prop:fn_modified}.
In \cref{app:miscellaneous}, we recall additional miscellaneous
inequalities required for our proofs.

\camera{
\section{Numerical experiment}
\label{sec:experiment}

We provide a basic numerical experiment\footnote{
We provide the source code on GitHub:
\url{https://github.com/mollenhauerm/krr-heavy-tailed}.
The experiment can be run on the CPU of any consumer laptop.
} showing that
the confidence regimes are not only occurring in the upper bound
on the excess risk
given by \cref{prop:excess_risk_bound}, but are effectively reflected in the
distribution of the excess risk in practice, even for very simple models.

We consider the input space $\X:= \R$ equipped with the RKHS $\mathcal{H}$ induced by the radial basis kernel 
$k(x_1, x_2) :=  \exp( \tfrac{ - |x_1 - x_2|^2 }{2} )$
and define the target function 
$f_\star(x) := \sum_{i=1}^5 a_i k(x_i, x) \in \mathcal{H}$, $x \in \X$
for vectors $\mathbf{a} := (2, -1, -3 ,1,2)$
and $\mathbf{x} := (-4,-2-0,3,7)$. 
We define the covariate distribution
$X \sim \pi := \mathcal{N}(0, 1)$ on $\X$ and
generate independent observation pairs with
a light-tailed noise distribution and
a heavy-tailed distribution with identical variance based on
\begin{enumerate}[label=(\roman*)]


    \item the \emph{light-tailed noise model} given by
    \begin{equation}
        \label{eq:experiment_light}
        Y_{(\mathcal{N})} = f_\star(X) + \epsilon_{(\mathcal{N})},
    \end{equation}
    where the noise $\epsilon_{(\mathcal{N})} \sim \mathcal{N}(0, \sigma^2)$ follows a centered Gaussian
    distribution with variance $\sigma^2 = 3$.
    
    \item the \emph{heavy-tailed noise model} with a finite number of higher moments
    given by
    \begin{equation}
        \label{eq:experiment_heavy}
        Y_{(t)} = f_\star(X) + \epsilon_{(t)}, 
    \end{equation}
    where the noise $\epsilon_{(t)} \sim \mathbf{t}(0, \nu)$ follows a
    centered $t$-distribution with $\nu = 3$ degrees of freedom
    and $\E[\epsilon_{(t)}^2] = 3$.
    \end{enumerate}

\begin{figure}[H]
    \centering
    \begin{subfigure}{0.6\textwidth}
        \includegraphics[width=\textwidth]{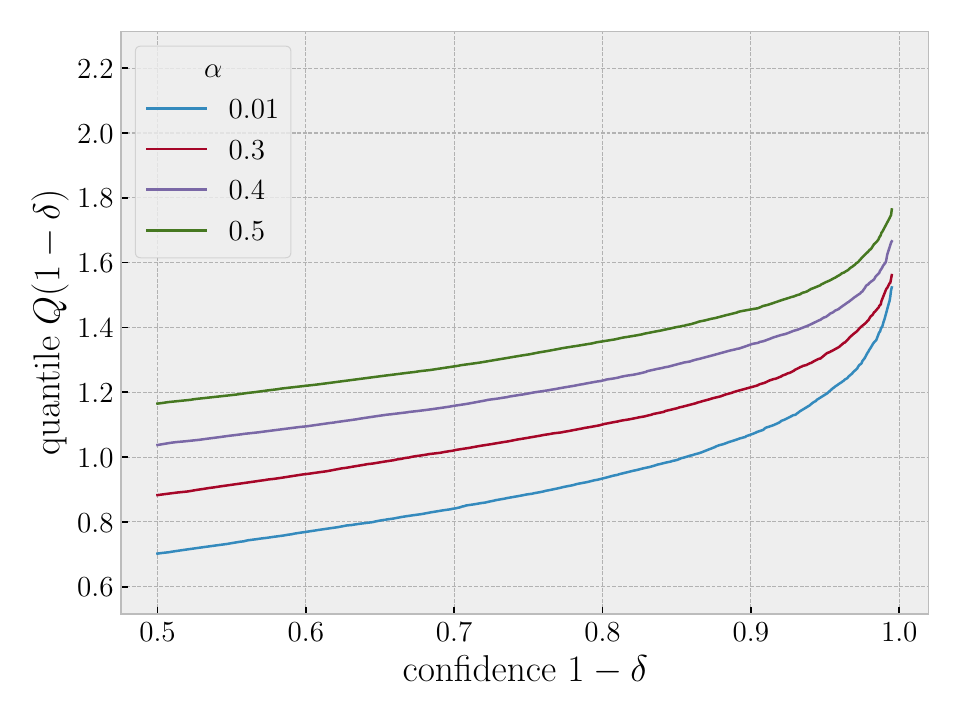}
        \caption{Light-tailed noise}
    \end{subfigure}
    \begin{subfigure}{0.6\textwidth}
        \includegraphics[width=\textwidth]{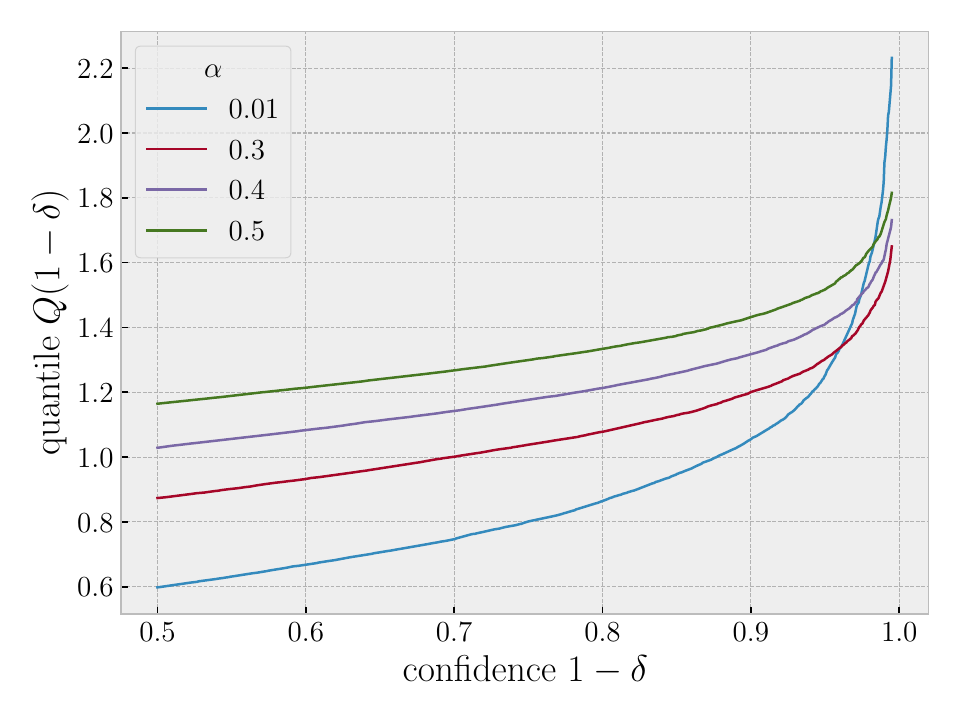}
        \caption{Heavy-tailed noise}
        \label{fig:subfig1}
    \end{subfigure}
    \caption{Empirical approximations of the quantile function of the excess risk 
    $ \norm{\inc\widehat f_\alpha - f_\star}_{L^2(\pi)}$ for
    (a) the light-tailed noise model given in \eqref{eq:experiment_light} and
    (b) in the $t$-distributed noise model given by \eqref{eq:experiment_heavy} 
    for different choices of the regularization
    parameter $\alpha$ and fixed sample size $n=20$.}
     \label{fig:experiment1}
\end{figure}

For both models, 
we compute $\widehat{f}_\alpha$ based on the generated sample pairs
for a sample size $n=20$
and record the error $ \norm{\inc\widehat f_\alpha - f_\star}_{L^2(\pi)}$, 
which we approximate through
Monte Carlo simulation by drawing samples from $\pi$.
We perform the above computation for a selection of
different regularization parameters $\alpha$,
repeating each experiment across $10000$ random seeds (per model and choice of $\alpha$).
Based on the recorded errors, we
empirically approximate the quantile function of the distributions of
$ \norm{\inc\widehat f_\alpha - f_\star}_{L^2(\pi)}$ given by
\begin{equation*}
    Q(1-\delta) := \inf\left\{ 
    t \colon \P\left[ \norm{\inc\widehat f_\alpha - f_\star}_{L^2(\pi)}  \leq t \right] \geq 1-\delta 
    \right\}, \quad \delta \in [0,1]
\end{equation*}
The resulting 
approximated quantile functions are visualized in \cref{fig:experiment1}
for all choices of $\alpha$ and both models.

While the excess risk quantiles exhibit the same qualitative behaviour and are of
the same order in both models for moderate confidence levels, the excess risk quantiles 
of the heavy-tailed model increase rapidly for a small regularization parameter
beyond a high confidence level of $1 - \delta \approx 0.95$.
Consequently, the heavy-tailed model requires a stronger regularization
than the light-tailed model in high confidence settings, as explained by the 
transition of the excess risk from a subgaussian
confidence regime to a polynomial confidence regime in \cref{sec:ridgre_regression}.
}

\section{Proofs}
\label{app:proofs}

We provide the proofs for the results
presented in the main text of this work.

\subsection{Proof of \cref{prop:excess_risk_bound}}
\label{proof:excess_risk_bound}

We investigate the classical \emph{bias-variance decomposition}
\begin{equation*}
    I_\pi\widehat f_\alpha  - f_\star 
    =
    I_\pi\widehat f_\alpha - I_\pi f_\alpha 
    +
    I_\pi f_\alpha - f_\star
\end{equation*}
and bound both terms individually.



\subsubsection{Bounding the variance $\inc \widehat f_\alpha  - \inc f_\alpha$.}

We first collect some generic properties of $f_\alpha$. The identity $f_\alpha = \regcov \inc^* f_\star$
gives
\begin{equation}
    \label{eq:alpha_f_alpha}
    \mbe[ \empinc ^*( f_\star - \empinc f_\alpha ) ] = \inc ^* (f_\star - \inc f_\alpha) = \alpha f_\alpha.
\end{equation}

Based on \eqref{eq:solution_rkhs}, \eqref{eq:empirical_solution} and
\eqref{eq:empirical_rhs}, we decompose the variance in $\mathcal{H}$ as
\begin{align*}
    \widehat f_\alpha - f_\alpha &=
    \empregcov
    \left\{ 
    \empinc^* ( f_\star - \empinc f_\alpha )
    - \alpha f_\alpha
    +  \frac{1}{n}\sum_{i=1}^n \phi(X_i) \epsilon_i 
    \right\} \\
    &= 
    \empregcov
    \left\{ 
    \empinc^* ( f_\star - \empinc f_\alpha )
    - \inc^* (f_\star - \inc f_\alpha) 
    +  \frac{1}{n}\sum_{i=1}^n \phi(X_i) \epsilon_i 
    \right\} && \text{by \eqref{eq:alpha_f_alpha}} \\
    &=
    \empregcov
    \left\{ 
    \Delta_1 + \Delta_2 
    \right\}, 
\end{align*}
where we introduce the $\mathcal{H}$-valued random variables
\begin{equation}
    \label{eq:Deltas}
\Delta_1 = \empinc^* ( f_\star - \empinc f_\alpha )
    -  \mbe[ \empinc ^*( f_\star - \empinc f_\alpha ) ], \qquad  
\Delta_2 = \frac{1}{n}\sum_{i=1}^n \phi(X_i) \epsilon_i \;.
\end{equation}
We now obtain an $L^2(\pi)$-norm bound on the variance
as
\begin{align}
    \norm{\inc ( \widehat f_\alpha - f_\alpha)}_{L^2(\pi)}
    &=
    \norm{ \cov^{1/2} 
    ( \widehat f_\alpha - f_\alpha)}_{\mathcal{H}}
    =
    \norm{ \cov^{1/2} 
    \empregcov
    \left\{ 
    \Delta_1 + \Delta_2 
    \right\}  }_{\mathcal{H}} \nonumber \\
    &\leq \sqrt{2}\; \cB(n, \delta, \alpha)^{1/2}
        \Norm{ 
            (\widehat{\mathcal{C}}_\pi + \alpha \idop_{\mathcal{H}})^{1/2} 
        \empregcov
        \left\{ 
        \Delta_1 + \Delta_2 
        \right\} 
        }_{\mathcal{H}} \nonumber \\
    &\leq
    2 \sqrt{2}
    \Norm{ 
            (\widehat{\mathcal{C}}_\pi + \alpha \idop_{\mathcal{H}})^{-1/2} 
        \left\{ 
        \Delta_1 + \Delta_2 
        \right\} 
        }_{\mathcal{H}} \nonumber \\
    &\leq
    2\sqrt{\frac{2}{\alpha}}( \norm{\Delta_1}_\mathcal{H} 
        + \norm{ \Delta_2 }_\mathcal{H}) \label{eq:l2_bound},
\end{align}
with probability at least $1-\delta$, 
where $\cB(n, \delta, \alpha)$ is defined in 
\eqref{eq:simplify-inverse-conc} and
where we apply  \cref{lem:concentration_weighted_norm} in the first inequality, \cref{cor:concentration_inverse_product} in the second inequality, and additionally 
the fact that we have
$\norm{ (\widehat{\mathcal{C}}_\pi + \alpha \idop_{\mathcal{H}})^{-1/2}}_{\bounded(\mathcal{H})} \leq \alpha^{-1/2} $ in the last inequality.

\vspace{0.2cm}

We now provide individual
confidence bounds for $\Delta_1$ and $\Delta_2$.


\paragraph{Bounding $\Delta_1$: Bennett inequality.}

We introduce the independent and identically distributed random variables
$\xi_i := ( f_\star(X_i) - f_\alpha(X_i) ) \phi(X_i) \in \mathcal{H}$
and note that $\Delta_1 := \frac{1}{n} \sum_{i=1}^n (\xi_i - \E[\xi_i])$
and proceed similarly as in the proof of \citep[Theorem 1 \& Theorem 5]{Smale2007}.

To apply Bennett's inequality, we need to bound the norm of the $\xi_i$. 
First, note that almost surely, we have by \eqref{eq:src}, for some $f \in L^2(\pi)$, such that $\norm{f}_{L^2(\pi)} \leq R$
\begin{align*}
\norm{f_\alpha (X_i) \phi(X_i)}_{\cH} &= \norm{\langle f_\alpha , \phi (X_i)\rangle_{\cH} \phi(X_i)}_{\cH} \\
&\leq \kappa^2 \Vert f_\alpha \Vert _\cH \\
&= \kappa^2 \Vert  (\cC_\pi + \alpha \idop_{\cH})^{-1}\inc^* f_\star \Vert _\cH\\
&= \kappa^2 \Vert  \inc^*(\cT_\pi + \alpha \idop_{\cH})^{-1} \cT_\pi^\nu f \Vert _\cH.
\end{align*}


A short calculation shows that for $\nu \geq 1/2$,
the map $h(t)=t^{\nu+1/2}(t+\alpha)^{-1}$ is increasing on the spectrum of $\cT_\pi$ with 
\[ \sup_{t \in (0, \kappa^2]} | h(t)| =   \sup_{t \in (0, \kappa^2]} t^{\nu - 1/2} \frac{t}{t+\alpha} \leq \kappa^{2(\nu - 1/2)},\] 
for all $\alpha > 0$. Hence, 
\[ \Vert  \inc^*(\cT_\pi + \alpha \idop_{\cH})^{-1} \cT_\pi^\nu f \Vert _\cH \leq R\cdot \sup_{t \in (0, \kappa^2]} |h(t)| 
\leq R\cdot  \kappa^{2(\nu - 1/2)}  \]
and therefore
\[ \norm{f_\alpha (X_i) \phi(X_i)}_{\cH} \leq R\cdot  \kappa^{2\nu + 1}  \;.  \]
This gives 
\begin{align*}
    \norm{\xi_i}_\mathcal{H} 
    &\leq
    \kappa  \norm{f_\star}_{L^\infty(\pi)} + \norm{f_\alpha (X_i) \phi(X_i)}_{\cH} \\
    &\leq
    \kappa \norm{f_\star}_{L^\infty(\pi)} +  R\cdot  \kappa^{2\nu + 1} .
\end{align*}
Furthermore, by \citep[Proposition 3.3]{DeVitoEtAl06} and \cite[Lemma 25]{fischer2020sobolev} we see
\begin{align*}
    \E[ \norm{\xi_i}^2_\mathcal{H} ]
    &\leq \kappa^2 \norm{f_\star - I_\pi f_\alpha}^2_{L^2(\pi)} \\
   &=\kappa^2 \norm{( \idop_{\cH} - \cT_\pi (\cT_\pi + \alpha \idop_{\cH})^{-1}) \cT_\pi^\nu f }^2_{L^2(\pi)} \\
&\leq \kappa^2 R^2 \alpha^{2\min\{\nu , 1\}}.
\end{align*}

Applying \cref{prop:bennett} to $\Delta_1 = \frac{1}{n} 
\sum_{i=1}^n (\xi_i - \E[\xi_i])$,
with probability at least $1-\delta$ for for any $1 > \delta > 0$,
we have
\begin{align}
\label{eq:delta_1_bound}
    \norm{\Delta_1}_\mathcal{H}
    &\leq
    \frac{2  (\kappa \norm{f_\star}_{L^\infty(\pi)} +  R\cdot  \kappa^{2\nu + 1} )\log(2/\delta)}{n}
    +
    \sqrt{\frac{2  \kappa^2 R^2 \alpha^{2\min\{\nu , 1\}}  \log(2/\delta)}{n}} \nonumber \\
    &\leq \tilde  C_{\diamond} \cdot \left(  \frac{\log(2/\delta)}{n} + 
  \sqrt{ \frac{\alpha^{2 \min\{\nu , 1\}} \log(2/\delta)}{n} } \right) ,
\end{align}
where we use $\sqrt{2 \log(2/\delta)} \leq 2 \log(2/\delta)$ and with 
\begin{equation}
\label{eq:diamond}
 \tilde C_\diamond = 2  \kappa \cdot \max\{ \norm{f_\star}_{L^\infty(\pi)} +  R\cdot  \kappa^{2\nu }  ,  R\}. 
\end{equation}

\paragraph{Bounding $\Delta_2$: Fuk--Nagaev inequality.}

We introduce the independent and identically distributed random variables
$\zeta_i := \phi(X_i) \epsilon_i$ and note that 
$\Delta_2 = \frac{1}{n} \sum_{i=1}^n \zeta_i$.
By \cref{assump:mom}, 
we clearly have $\E[\zeta_i] = \E\left[ \phi(X_i) \E[\epsilon_i \mid X] \right] = 0$
as well as
\begin{equation*}
    \E[ \norm{\zeta_i}^2_\mathcal{H}]
    < \kappa^2 \sigma^2
    \quad \text{and} \quad
    \E[ \norm{\zeta_i}^q_\mathcal{H}]
    < \kappa^q Q.
\end{equation*}
We can therefore apply \cref{cor:confidence_bound}
to $\Delta_2 = \frac{1}{n} \sum_{i=1}^n \zeta_i$
and obtain
\begin{equation}
    \label{eq:delta_2_bound}
    \Norm{ \Delta_2 }_\mathcal{H} 
    \leq
    \max
    \left\{
    \left(\frac{2 c_1 \kappa^q Q}{\delta n^{q-1}} \right)^{1/q}  ,
    \kappa\sigma \; \sqrt{\frac{\log(2c_1/\delta)}{ c_2 n} }
    \right\}
\end{equation}
with probability at least $1-\delta$.

\paragraph{Collecting the terms.} By the union bound, the three individual 
confidence bounds \eqref{eq:l2_bound}, \eqref{eq:delta_1_bound} and \eqref{eq:delta_2_bound} and  hold simultaneously
with probability at least $1-\delta$. 
We have
\begin{align*}
    \norm{\inc ( \widehat f_\alpha - f_\alpha)}_{L^2(\pi)}
    &\leq \frac{C_\diamond}{\sqrt \alpha} 
    \left(    \frac{\log(6/\delta)}{n} + 
  \sqrt{ \frac{\alpha^{2 \min\{\nu , 1\}} \log(6/\delta)}{n} }  
    +    \eta(\delta, n)
    \right) .
\end{align*}
Here, 
we set  
\begin{equation*}
    \eta(\delta, n) :=
    \max
    \left\{
    \left(\frac{Q}{\delta n^{q-1}} \right)^{1/q}  ,
      \sigma\; \sqrt{\frac{ \log(6c_1/\delta)}{n }}
    \right\}
\end{equation*}
and 
\begin{equation}
\label{eq:diamond}
C_\diamond = 2\sqrt{ 2 } \max\{\tilde  C_{\diamond} ,   \kappa \cdot \max\{(6c_1)^{1/q}, 1/\sqrt{c_2} \}\}.
\end{equation}


\subsubsection{Bounding the bias $\inc f_\alpha  - f_\star$} 
\label{sec:bounding-bias}

We apply the standard theory of source conditions in 
inverse problems as discussed for example by \citep{EHN96}.
In particular, under the $L^2(\pi)$-source condition
given by \eqref{eq:src}, we have
\begin{align*}
    \norm{ \inc f_\alpha   - f_\star}_{L^2(\pi)}
    \leq R \alpha^{\min\{ \nu, 1 \}},
\end{align*}
see e.g.\ \citep[Proposition 3.3]{DeVitoEtAl06}.
Combining the variance bound and the bias bound yields the result.

\subsection{Proof of \cref{cor:low_confidence}}
\label{proof:low_confidence}

Let $\delta \in D_1(n,q)$. Using that $\alpha \leq \kappa^2$ and recalling that $c_1 \geq 1$, we find by \cref{prop:excess_risk_bound} for any 
$\delta \in D_1(n, q)$ with confidence $1-\delta$
\begin{align*}
    \norm{\inc ( \widehat f_\alpha - f_\star)}_{L^2(\pi)}
    &\leq  R \alpha^{\min\{ \nu, 1 \}} +  \frac{C_\diamond}{\sqrt \alpha} 
    \left(    \frac{\log(6/\delta)}{n} + 
  \sqrt{ \frac{\alpha^{2 \min\{\nu , 1\}} \log(6/\delta)}{n} }  
    +     \sigma \sqrt{  \frac{\log(6c_1/\delta)}{ n} }
    \right) . 
\end{align*}
Assuming 
\begin{equation}
\label{eq:alpha1}
\alpha^{ \min\{\nu , 1\}} \sqrt{n}  \geq \sqrt{\log(6/\delta)} 
\end{equation}
gives 
\[ \frac{\log(6/\delta)}{n} \leq  \sqrt{ \frac{\alpha^{2 \min\{\nu , 1\}} \log(6/\delta)}{n} }  
                            \leq \kappa^{ 2\min\{\nu , 1\}}  \sqrt{ \frac{\log(6c_1/\delta)}{n} }.
\]
Hence, 
\begin{align*}
    \norm{\inc  \widehat f_\alpha - f_\star}_{L^2(\pi)}
    &\leq   R \alpha^{\min\{ \nu, 1 \}} +  c_3 \sqrt{  \frac{\log(6c_1/\delta)}{ \alpha n} },  
\end{align*}
with $c_3 = 3C_\diamond \max\{ \kappa^{2 \min\{\nu , 1\}}, \sigma\} $. 
Ensuring that the remaining estimation error contribution will not be dominated by the approximation error, we see that the regularization parameter has to satisfy  
\begin{equation}
\label{eq:alpha1b}
\alpha \geq 
c_4\;\left( \frac{\log(6c_1/\delta)}{n} \right)^{\frac{1}{2\min\{\nu ,1\} +1}} ,
\end{equation}
with $c_4=   (c_3  /R)^{\frac{1}{\min\{\nu, 1\} +1/2 }} $. 
We therefore obtain
\begin{align*}
    \norm{\inc  \widehat f_\alpha - f_\star}_{L^2(\pi)}
    &\leq \tilde{c}_1 R \;\left( \frac{\log(6c_1/\delta)}{n} \right)^{\frac{\min\{ \nu, 1 \}}{2\min\{\nu ,1\} +1}} , 
\end{align*}
with confidence $1-\delta$ 
and 
$\tilde{c}_1 = 2c_4^{\min\{ \nu, 1 \}}$. In order for this bound to hold, the regularization strength has to fulfill condition \eqref{eq:alpha-coice}, \eqref{eq:alpha1} and \eqref{eq:alpha1b}, that is,
\begin{equation}
    \label{eq:alpha_1_restriction}
\alpha \geq \max\left\{ c_4 \left( \frac{\log(6c_1/\delta)}{n} \right)^{\frac{1}{2\min\{\nu ,1\} +1}}  , 
\left( \frac{\log(6/\delta)}{n} \right)^{\frac{1}{2\min\{ \nu, 1 \}}} , 
    C_\kappa \frac{\log(6/\delta)}{\sqrt n}\right\}
\end{equation}   
in addition to the restriction $\alpha \leq \kappa^2$.
As we always have $n^{-\frac{1}{2\min\{\nu ,1\}}} \leq n^{-1/2} \leq n^{-\frac{1}{2\min\{\nu ,1\} +1}} $ for $\nu \geq 1/2$
and $\log(6c_1/\delta) \geq \log(6 / \delta)$,
there exists a constant $\tilde{c}_2$ such that
\begin{equation*}
    \alpha_1(n, \delta) := 
    \tilde{c}_2 \left( \frac{\log(6 c_1 / \delta)}{n} \right)^{\frac{1}{2\min\{\nu ,1\} +1}}
\end{equation*}
satisfies \eqref{eq:alpha_1_restriction}.

\subsection{Proof of \cref{prop:excess_risk_capacity_new}}
\label{proof:excess_risk_capacity}

We split again into \emph{variance} and \emph{bias}: 
\begin{equation*}
    I_\pi\widehat f_\alpha  - f_\star 
    =
    I_\pi (\widehat f_\alpha -  f_\alpha) 
    +
    I_\pi f_\alpha - f_\star .
\end{equation*}


\subsubsection{Bounding the variance $I_\pi \widehat f_\alpha - I_\pi  f_\alpha $}

Following \citep[Appendix C]{mucke2019beating}, we further decompose the variance term in $L^2(\pi)$ as 

\begin{align}
\label{eq:variance}
I_\pi \widehat f_\alpha - I_\pi  f_\alpha 
&=  I_\pi \widehat f_\alpha - \inc \empcov ( \empcov + \alpha \idop_\cH )^{-1} f_\alpha + \inc \empcov ( \empcov + \alpha \idop_\cH )^{-1} f_\alpha - I_\pi  f_\alpha \nonumber \\
&=  \underbrace{\inc (\empcov + \alpha \idop_\cH )^{-1} \left( \widehat{\Upsilon} - \empcov f_\alpha  \right)}_{=: I_1}  
+  \underbrace{\alpha \inc ( \empcov + \alpha \idop_\cH )^{-1} f_\alpha }_{=:I_2},
\end{align}
where we use  the definition of $\widehat f_\alpha$ in \eqref{eq:empirical_solution} for the first term and the identity 
\[ \empcov (\empcov + \alpha \idop_\cH )^{-1} - \idop_\cH = \alpha (\empcov + \alpha \idop_\cH)^{-1}   \]
for the second summand.

\subsubsection{Bounding the first term $I_1$} 

We introduce the shorthand notation
$\cC_\alpha:= \cov + \alpha \idop_\cH  $, $ \widehat{\cC}_\alpha := \empcov + \alpha \idop_\cH$.
We note that we may write  
\begin{align*}
 \cov ^{1/2} (\empcov + \alpha \idop_\cH )^{-1} 
 &=  \cov ^{1/2} \cC_\alpha^{-1/2}
 \cC_\alpha^{1/2} \widehat{\cC}_\alpha^{-1/2} 
 \widehat{\cC}_\alpha^{-1/2}   \cC_\alpha^{1/2}\cC_\alpha^{-1/2}. 
\end{align*} 
Recall that $\Vert   \cov ^{1/2} \cC_\alpha^{-1/2} \Vert _{\cL(\cH)} \leq 1$ and that by \cref{cor:concentration_inverse_product_under_evd}
and \cref{lem:cordes}, with confidence $1-\delta$, we have 
\[  \Vert  \cC_\alpha^{1/2} \widehat{\cC}_\alpha^{-1/2}  \Vert _{\cL(\cH)} = 
\Vert  \widehat{\cC}_\alpha^{-1/2}   \cC_\alpha^{1/2} \Vert _{\cL(\cH)}  \leq 2,\]
provided \eqref{eq:ass:eff:dim_new} is satisfied. 
Hence, by \eqref{eq:polar_decomposition}, with confidence $1-\delta$
\begin{align}
\label{eq:I1bound1}
\Vert I_1\Vert _{L^2(\pi)} = \norm{\inc (\empcov + \alpha \idop_\cH )^{-1} ( \widehat{\Upsilon} - \empcov f_\alpha  )}_{L^2(\pi)}
&\leq    4 \; \norm{ (\cov + \alpha \idop_\cH )^{-1/2}( \widehat{\Upsilon} - \empcov f_\alpha )}_\cH .
\end{align}
We proceed by further splitting 
\begin{align*}
 (\cov + \alpha \idop_\cH )^{-1/2}(\widehat{\Upsilon} - \empcov f_\alpha )  &=  (\cov + \alpha \idop_\cH )^{-1/2} [(\widehat{\Upsilon} -  \empcov f_\alpha) 
- (\inc^* f_\star - \cov f_\alpha) ] \\
& \quad  +    (\cov + \alpha \idop_\cH )^{-1/2} (\inc^* f_\star - \cov f_\alpha).
\end{align*}
Note that by \eqref{eq:empirical_rhs}, we have
\[  \widehat{\Upsilon}    =     \empinc^* f_\star
    + \frac{1}{n} \sum_{i=1}^n \phi(X_i) \epsilon_i . \]
Introducing 
\[  \Delta_ 1:= ( \empinc^* f_\star -  \empcov f_\alpha) 
- (\inc^* f_\star - \cov f_\alpha) ,  \qquad \Delta_2:= \frac{1}{n} \sum_{i=1}^n \phi(X_i) \epsilon_i, \]
we obtain 
\begin{align*}
 (\cov + \alpha \idop_\cH )^{-1/2}(\widehat{\Upsilon} - \empcov f_\alpha )  
 &=  I_{11} + I_{12} + I_{13},
\end{align*}
where we set 
\begin{align*}
I_{11}&:=   (\cov + \alpha \idop_\cH )^{-1/2} \Delta_ 1 \\
I_{12}&:=  (\cov + \alpha \idop_\cH )^{-1/2} \Delta_ 2\\
I_{13}&:=    (\cov + \alpha \idop_\cH )^{-1/2} (\inc^* f_\star - \cov f_\alpha) .
\end{align*}
Thus, with \eqref{eq:I1bound1}, we have
\begin{equation}
\label{eq:3norms}
\Vert I_1\Vert _{L^2(\pi)} 
\leq  4 \left( \Vert I_{11}\Vert _\cH  + \Vert I_{12}\Vert _\cH  + \Vert I_{13}\Vert _\cH \right)  
\end{equation}
We proceed to bound the terms on the right hand side individually.

\paragraph{Bounding $\Vert I_{11}\Vert _\cH$.} To bound the norm of $I_{11}$, we apply Bennett's inequality \cref{prop:bennett} to the independent and identically distributed random  variables 
\[  \xi_i:= (\cov + \alpha \idop_\cH )^{-1/2} ( f_\star(X_i) -  f_\alpha (X_i)) \phi(X_i). \]
Then
\[ I_{11} = \frac{1}{n}\sum_{i=1}^n  \xi_i - \mbe[ \xi_i] .  \]
Furthermore, 
\begin{align*}
\mbe[\Vert \xi_i\Vert ^2_\cH] &\leq \sup_{X \in \cX}|f_\star(X) -  f_\alpha (X)|^2 \;  \mbe[ \Vert  (\cov + \alpha \idop_\cH )^{-1/2}\phi(X_i) \Vert ^2_\cH ] .
\end{align*}
since $f_\alpha \in \cH$ and by \eqref{eq:src}, we have 
for $\nu  \geq 1/2$
\begin{align}
\label{eq:falpha}
 |f_\alpha (X)| &= |\langle f_\alpha , \phi(X) \rangle_{\cH}| \nonumber\\
&= |\langle (\cov + \alpha \idop_\cH )^{-1} \inc^* \cT_\pi^\nu f, \phi(X) \rangle_{\cH}| \nonumber\\
&= |\langle  f,  \cT_\pi^\nu \inc (\cov + \alpha \idop_\cH )^{-1} \phi(X) \rangle_{L^2(\pi)}| \nonumber\\
&\leq \kappa R \Vert \inc^* (\cT_\pi + \alpha \idop_{L^2(\pi)} )^{-1}  \cT_\pi^\nu \Vert _{\cL(L^2(\pi), \mathcal{H})} \nonumber\\
&\leq \kappa R \sup_{0 < t \leq \kappa^2} | t^{\nu +1/2}(t + \alpha)^{-1} |\nonumber \\
&\leq \kappa^{2\nu} R .
\end{align}
We proceed with writing  
\begin{align*}
 \mbe[ \Vert  (\cov + \alpha \idop_\cH )^{-1/2}\phi(X_i) \Vert ^2_\cH ] &=  \mbe[\tr((\cov + \alpha \idop_\cH )^{-1}\phi(X_i)\otimes \phi(X_i) )] = \cN(\alpha) . 
\end{align*}
Hence, 
\begin{align*}
\mbe[\Vert \xi_i\Vert ^2_\cH] &\leq 2( \Vert  f_\star\Vert ^2_\infty + \kappa^{4\nu} R^2  )\cN(\alpha) =: \tilde \sigma^2.
\end{align*}
Moreover, 
\begin{align*}
\Vert \xi_i\Vert _\cH &= \Vert (\cov + \alpha \idop_\cH )^{-1/2} ( f_\star(X_i) -  f_\alpha (X_i)) \phi(X_i)\Vert _\cH \\
&\leq \Vert f_\star(X_i)(\cov + \alpha \idop_\cH )^{-1/2} \phi(X_i)\Vert _\cH 
+ \Vert  f_\alpha (X_i)(\cov + \alpha \idop_\cH )^{-1/2}\phi(X_i)\Vert _\cH \\
&\leq \frac{\kappa \Vert f_\star\Vert _\infty}{\sqrt{\alpha}} + \frac{\kappa f_\alpha (X_i)}{\sqrt{\alpha}}\\
&\leq  \frac{\kappa }{\sqrt{\alpha}}\;  (\Vert f_\star\Vert _\infty + \kappa^{2\nu} R) =: L . 
\end{align*}
In the last step we use \eqref{eq:falpha}. From \cref{prop:bennett}, we obtain with confidence 
$1-\delta$
\begin{align}
\label{eq:I11}
\Vert I_{11}\Vert _\cH 
    &\leq \frac{2 L \log(2/\delta)}{n}  +
            \sqrt{\frac{2 \tilde \sigma^2 \log(2/\delta)}{n}} \nonumber \\
    &\leq c_{11}\left( \frac{ \log(2/\delta)}{\sqrt{\alpha} n}  + \sqrt{\frac{\cN(\alpha) \log(2/\delta)}{n}}\right),
\end{align}
with $c_{11} = 2(\Vert f_\star\Vert _\infty + \kappa^{2\nu} R)\; \max\{1, \kappa \} $.

\paragraph{Bounding $\Vert I_{12}\Vert _\cH$.} The idea is to apply the Fuk-Nagaev inequality \cref{cor:confidence_bound} to the independent and identically distributed random variables 
\[ \zeta_i := \epsilon_i (\cov + \alpha \idop_\cH )^{-1/2}  \phi(X_i) .\]
We repeatedly apply \cref{assump:mom} to bound the expectation and moments of the $\zeta_i$. We get 
\begin{align*}
\mbe[\zeta_i] &= \mbe[\underbrace{\mbe[\epsilon_i | X_i]}_{=0}(\cov + \alpha \idop_\cH )^{-1/2}  \phi(X_i) ] = 0. 
\end{align*}
Moreover, 
\begin{align*}
\mbe[\Vert \zeta_i\Vert ^2_\cH] &= \mbe[\mbe[\epsilon^2_i | X_i] \; \Vert (\cov + \alpha \idop_\cH )^{-1/2}  \phi(X_i)\Vert ^2_\cH] \\
&< \sigma^2 \mbe[\tr((\cov + \alpha \idop_\cH )^{-1}  \phi(X_i) \otimes \phi(X_i)) ]\\
&= \sigma^2 \cN(\alpha) \\
&=: \tilde \sigma^2. 
\end{align*}
Since $q>2$, we find 
\begin{align*}
\mbe[\Vert \zeta_i\Vert ^q_\cH] &= \mbe[ \mbe[ |\epsilon_i|^q | X_i]  \; \Vert (\cov + \alpha \idop_\cH )^{-1/2}  \phi(X_i)\Vert ^{q}_\cH  ] \\
&< Q \; \mbe[\Vert (\cov + \alpha \idop_\cH )^{-1/2}  \phi(X_i)\Vert ^{2}_\cH \Vert (\cov + \alpha \idop_\cH )^{-1/2}  \phi(X_i)\Vert ^{q-2}_\cH ]  \\
&\leq Q\; \kappa^{q-2} \left( \frac{1}{\alpha}\right)^{\frac{q-2}{2}} \cN(\alpha)\\
&=: \tilde Q . 
\end{align*}
Since $\frac{1}{n}\sum_{i=1}^n\zeta_i = I_{12}$, applying \cref{cor:confidence_bound}  gives with confidence $1-\delta$
\begin{align}
\label{eq:I12}
\Vert I_{12}\Vert _\cH &\leq \max
    \left\{
    \left(\frac{2 c_1  \tilde Q}{\delta n^{q-1}} \right)^{1/q}  ,
    \tilde \sigma \; \sqrt{\frac{\log(2c_1/\delta)}{ c_2 n} }
    \right\} \nonumber \\
&=\max \left\{
    \left(\frac{2 c_1  \kappa^{q-2} Q}{\delta n^{q-1}} \right)^{1/q}  \left( \frac{1}{\alpha}\right)^{\frac{q-2}{2q}} \cN(\alpha)^{1/q} ,
     \sigma \; \sqrt{\frac{\cN(\alpha)\log(2c_1/\delta)}{ c_2 n} }
    \right\}   \nonumber  \\
&\leq  c_{12} \sqrt{\cN(\alpha)} \;
    \max \left\{ \left(\frac{1}{\delta n^{q-1}} \right)^{1/q}  \left( \frac{1}{\alpha \cN(\alpha)}\right)^{\frac{q-2}{2q}}  ,
      \sqrt{\frac{\log(2c_1/\delta)}{  n} }
    \right\} ,
\end{align}
with $ c_{12} = \max\{ (2 c_1  \kappa^{q-2} Q)^{1/q} , \sigma/ \sqrt{c_2}\} $.

\paragraph{Bounding $\Vert I_{13}\Vert _\cH$.} Using the definition of $f_\alpha$ from \eqref{eq:solution_rkhs}, 
we see that by \eqref{eq:src}, we have 
\begin{align*}
I_{13}&=    (\cov + \alpha \idop_\cH )^{-1/2} (\inc^* f_\star - \cov  \regcov I_\pi^* f_\star )\nonumber \\
&=  (\cov + \alpha \idop_\cH )^{-1/2} (\idop_\cH - \cov  \regcov ) I_\pi^* f_\star \nonumber \\
&= (\cov + \alpha \idop_\cH )^{-1/2} (\idop_\cH - \cov  \regcov ) I_\pi^* \cT_\pi^\nu f \nonumber \\
&= I_\pi^*  (\cT_\pi + \alpha \idop_{L^2} )^{-1/2}
( \idop_{L^2} -  \cT_\pi ( \cT_\pi + \alpha \idop_{L^2})^{-1})\cT_\pi^\nu f
\nonumber \\
&= \tilde{U} \cT_\pi^{1/2}  (\cT_\pi + \alpha \idop_{L^2} )^{-1/2}
( \idop_{L^2} -  \cT_\pi ( \cT_\pi + \alpha \idop_{L^2})^{-1})\cT_\pi^\nu f,
\end{align*}
for some $f \in L^2(\pi)$ satisfying $\Vert f\Vert _{L^2(\pi)} \leq R$
and the partial isometry $\tilde{U}: L^2(\pi) \to \mathcal{H}$
given by \eqref{eq:polar_decomposition}.
This gives 
\begin{align}
    \label{eq:I13}
    \Vert I_{13}\Vert _{\cH}  
    &\leq   
    R \sup_{ 0 <t \leq \kappa^2} |  \alpha t^{\nu+1/2}(t+\alpha)^{-3/2} | \nonumber \\
    &\leq R 
    \sup_{ 0 <t \leq \kappa^2} |  \alpha t^{\nu}(t+\alpha)^{-1} |
    \sup_{ 0 <t \leq \kappa^2} |   t^{1/2}(t+\alpha)^{-1/2} | \nonumber \\
    &\leq
    R 
    \sup_{ 0 <t \leq \kappa^2} |  \alpha t^{\nu}(t+\alpha)^{-1} | \nonumber \\
    &\leq 
    R \max\{1, {\kappa^{2(\nu - 1)}}\} 
    \, \alpha^{\min\{\nu, 1\} }
    = c_{13} \, \alpha^{\min\{\nu, 1\} }
    ,
\end{align}
where we use \cref{lem:qualification} for the last inequality and set
$c_{13} := R \max\{1, {\kappa^{2(\nu - 1)}}\} $.

\paragraph{Collecting all terms.} Collecting
the bounds 
\eqref{eq:I11}, \eqref{eq:I12}, \eqref{eq:I13}, together with \eqref{eq:3norms}
and taking a union bound gives with confidence $1-\delta$
\begin{align}
\label{eq:I1bound.final}
\Vert I_1\Vert _{L^2(\pi)} 
&\leq  4 \left( \Vert I_{11}\Vert _\cH  + \Vert I_{12}\Vert _\cH  + \Vert I_{13}\Vert _\cH \right)  \nonumber \\
&\leq c_{\kappa, \nu ,q, R} \; \left(  \alpha^{\min\{\nu, 1\} } +  \frac{ \log(4/\delta)}{\sqrt{\alpha} n}  + \sqrt{\frac{\cN(\alpha) \log(4/\delta)}{n}} +  \sqrt{\cN(\alpha)} \cdot    \eta(\delta, n, \alpha) \right), 
\end{align}
where we set 
\[ \eta(\delta, n, \alpha) := \max \left\{ \left(\frac{1}{\delta n^{q-1}} \right)^{1/q}  \left( \frac{1}{\alpha \cN(\alpha)}\right)^{\frac{q-2}{2q}}  ,
      \sqrt{\frac{\log(4c_1/\delta)}{  n} }
          \right\} , \]
and with $c_{\kappa, \nu, q, R}:=4\max\{ c_{11}, c_{12}, c_{13}\}$.


\subsubsection{Bounding the second term $I_2$} 

We write 
again $\cC_\alpha:= \cov + \alpha \idop_\cH  $, $ \widehat{\cC}_\alpha := \empcov + \alpha \idop_\cH$ and find with 
\cref{lem:cordes}
and
\cref{cor:concentration_inverse_product_under_evd} 
with confidence $1-\delta$
\begin{align*}
    \Vert I_2\Vert _{L^2(\pi)} &=\Vert \alpha \inc 
    ( \empcov + \alpha \idop_\cH )^{-1} f_\alpha\Vert _{L^2(\pi)} \\
    &= \alpha\Vert  \inc \cC_\alpha^{-1/2} \cC_\alpha^{1/2} \widehat{\cC}_\alpha^{-1/2} \widehat{\cC}_\alpha^{-1/2} \cC_\alpha^{1/2} \cC_\alpha^{-1/2}f_\alpha \Vert _{L^2(\pi)} \\
    &= \alpha 
    \Vert  \cC_\pi^{1/2} \cC_\alpha^{-1/2} \cC_\alpha^{1/2} \widehat{\cC}_\alpha^{-1/2} \widehat{\cC}_\alpha^{-1/2} \cC_\alpha^{1/2} \cC_\alpha^{-1/2}f_\alpha \Vert _{\mathcal{H}} \\
    &\leq 4\alpha \; \Vert  \cC_\alpha^{-1/2}f_\alpha \Vert _\cH, 
\end{align*}
where we again use \eqref{eq:polar_decomposition} and 
$\Vert   \cov ^{1/2} \cC_\alpha^{-1/2} \Vert _{\cL(\cH)} \leq 1$.
Hence, by \eqref{eq:src} and the definition of $f_\alpha$, we have with 
confidence $1-\delta$
\begin{align}
\label{eq:I2bound}
\Vert I_2\Vert _{L^2(\pi)} 
&\leq  4\alpha \; \Vert  \cC_\alpha^{-1/2}f_\alpha \Vert _\cH \nonumber \\ 
&\leq 4 R \sup_{0<t \leq \kappa^2} |\alpha (t + \alpha)^{-3/2} t^{\nu + 1/2}| \nonumber \\
 &\leq 4R \max\{1, {\kappa^{2(\nu - 1)}}\} \,\alpha^{\min\{\nu , 1\}}  
 = 4 c_{13} \,\alpha^{\min\{\nu , 1\}},
\end{align}
where we repeat the computation from \eqref{eq:I13}.


\subsubsection{Final variance bound}

Collecting now \eqref{eq:variance}, \eqref{eq:I1bound.final} and \eqref{eq:I2bound} and taking a union bound gives the final bound for the variance. With confidence $1-\delta$, we obtain 

\begin{align*}
\Vert I_\pi( \widehat f_\alpha -   f_\alpha)\Vert _{L^2(\pi)} &\leq 
\tilde c_{\kappa, \nu ,q, R} \; \left(  \alpha^{\min\{\nu, 1\} } +  \frac{ \log(8/\delta)}{\sqrt{\alpha} n}  + \sqrt{\frac{\cN(\alpha) \log(8/\delta)}{n}} +  \sqrt{\cN(\alpha)} \cdot    \eta(\delta, n, \alpha) \right), 
\end{align*}
where we set 
\[ \eta(\delta, n, \alpha) := \max \left\{ \left(\frac{1}{\delta n^{q-1}} \right)^{1/q}  \left( \frac{1}{\alpha \cN(\alpha)}\right)^{\frac{q-2}{2q}}  ,
      \sqrt{\frac{\log(8c_1/\delta)}{  n} }
          \right\} , \]
and with $\tilde c_{\kappa, \nu,q, R} := 8\max\{c_{11}, c_{12}, c_{13} \}$.


\subsubsection{Bounding the bias $ I_\pi f_\alpha - f_\star$}

The bias can be bounded in the same way as in Section \ref{sec:bounding-bias}: 
\begin{align*}
    \norm{ \inc f_\alpha   - f_\star}_{L^2(\pi)}
    \leq R \alpha^{\min\{ \nu, 1 \}}.
\end{align*}

Combining the variance bound and the bias bound yields the result.

\subsection{Proof of \cref{cor:capacity_rates_new}}

We first need a standard lemma that allows us to
bound the effective dimension based on the eigenvalue
decay of $\mathcal{T}_\pi$ due to
\cite[][Lemma 11]{fischer2020sobolev}.

\begin{lemma}[Eigenvalue decay]
\label{lem:evd_new}
    Assume \eqref{eq:evd_new}. 
    Then there exists constant $\tilde{D} > 0$ such that
    \begin{equation*}
        \mathcal{N}(\alpha) 
        \leq 
        \tilde{D} \alpha^{-p}
        \quad \alpha > 0.
    \end{equation*}
\end{lemma}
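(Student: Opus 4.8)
The plan is to reduce the trace defining $\mathcal{N}(\alpha)$ to a scalar sum over the eigenvalues of $\mathcal{T}_\pi$ and then control that sum by an elementary integral. Since $\mathcal{C}_\pi = I_\pi^* I_\pi$ and $\mathcal{T}_\pi = I_\pi I_\pi^*$ share the same sequence of nonzero eigenvalues $(\mu_i)_{i \geq 1}$, diagonalizing $\mathcal{C}_\pi$ in an orthonormal eigenbasis gives
\begin{equation*}
    \mathcal{N}(\alpha) = \tr\left( \mathcal{C}_\pi (\mathcal{C}_\pi + \alpha \idop_\mathcal{H})^{-1} \right) = \sum_{i \geq 1} \frac{\mu_i}{\mu_i + \alpha}.
\end{equation*}
This identity is the only place where the operator structure is used; the remainder of the argument is a purely scalar estimate of the right-hand side.

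Next I would exploit monotonicity. The map $t \mapsto t/(t+\alpha)$ is increasing on $[0,\infty)$ for every fixed $\alpha > 0$, so inserting the decay bound $\mu_i \leq D i^{-1/p}$ from \eqref{eq:evd_new} termwise yields $\frac{\mu_i}{\mu_i + \alpha} \leq \frac{D i^{-1/p}}{D i^{-1/p} + \alpha} = \frac{1}{1 + (\alpha/D) i^{1/p}}$. The function $g(x) := \bigl(1 + (\alpha/D) x^{1/p}\bigr)^{-1}$ is decreasing in $x$, so the standard integral comparison $\sum_{i \geq 1} g(i) \leq \int_0^\infty g(x)\,\dd x$ applies \emph{without} an additive constant. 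This is the point to be careful about: a spurious $+1$ from a cruder comparison would violate the claimed bound in the large-$\alpha$ regime, where $\alpha^{-p} \to 0$.

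It then remains to evaluate $\int_0^\infty \bigl(1 + (\alpha/D) x^{1/p}\bigr)^{-1}\,\dd x$. Substituting $u = (\alpha/D)\, x^{1/p}$ turns this into $p (D/\alpha)^p \int_0^\infty \frac{u^{p-1}}{1+u}\,\dd u$, and the remaining factor is the Beta integral $B(p,1-p) = \pi/\sin(\pi p)$, which is finite precisely because $p \in (0,1)$. Collecting constants gives $\mathcal{N}(\alpha) \leq \tilde{D}\,\alpha^{-p}$ for all $\alpha > 0$, with the explicit value $\tilde{D} := \pi p D^p / \sin(\pi p)$. The only (mild) obstacle is bookkeeping: verifying that the integral comparison introduces no constant term and noting that both the convergence of the integral at infinity and the validity of the reflection formula for the Beta function hinge on $0 < p < 1$, which is exactly the range guaranteed by \cref{asst:evd}.
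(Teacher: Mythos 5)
Your proof is correct. Note that the paper does not actually prove this lemma; it cites \cite[Lemma 11]{fischer2020sobolev}, so there is no in-paper argument to match against. Your reduction $\mathcal{N}(\alpha)=\sum_{i\geq 1}\mu_i/(\mu_i+\alpha)$ is valid (the nonzero spectra of $\mathcal{C}_\pi=I_\pi^*I_\pi$ and $\mathcal{T}_\pi=I_\pi I_\pi^*$ coincide with multiplicity, and zero eigenvalues contribute nothing), the termwise insertion of \eqref{eq:evd_new} via monotonicity of $t\mapsto t/(t+\alpha)$ is sound, and the integral comparison $\sum_{i\geq 1}g(i)\leq\int_0^\infty g(x)\,\dd x$ for nonincreasing $g$ indeed carries no additive constant --- you are right that this is the point that makes the bound hold uniformly for large $\alpha$, where $\mathcal{N}(\alpha)\to 0$. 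The substitution and the identification of $\int_0^\infty u^{p-1}(1+u)^{-1}\,\dd u = B(p,1-p)=\pi/\sin(\pi p)$ are correct and require exactly $p\in(0,1)$, yielding the explicit constant $\tilde{D}=\pi p D^p/\sin(\pi p)$. The more commonly seen proof (including the one behind the cited lemma) instead bounds $\mu_i/(\mu_i+\alpha)\leq\min\{1,\mu_i/\alpha\}$ and splits the sum at the index $i_\alpha$ where $Di^{-1/p}\approx\alpha$; that route is slightly more elementary but produces a less clean constant. Your Beta-function route buys an explicit, sharp-looking $\tilde{D}$ at no extra cost.
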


For simplicity, we will use the symbols
$\lesssim$ for inequalities
which hold up to a nonnegative multiplicative 
constant that does not depend on $n$ and $\delta$.

We first note that for every fixed $\delta \in (0,1)$,
a short calculation shows that 
there exists an $\widetilde{n}_0 \in \N$ such that with the choice
\begin{equation*}
\alpha(n, \delta) := 
\left( \frac{\log(8 c_1 / \delta)}{n} \right)^{\frac{1}{2\min\{\nu ,1\} +p}}
\end{equation*}
the condition \eqref{eq:ass:eff:dim_new} is satisfied
(since we have $n \alpha(n, \delta) \to \infty$ and 
$\sqrt{n} \alpha(n, \delta)^{(1+p)/2} \to \infty$
for $n \to \infty$).
We may hence apply \cref{prop:excess_risk_capacity_new}
and with confidence $1-\delta$, we have
\begin{align}
    \label{eq:excess_bound_proof}
\Vert I_\pi \widehat f_\alpha -   f_\star\Vert _{L^2(\pi)} &\leq 
c \; \left(  \alpha^{\min\{\nu, 1\} } +  \frac{ \log(8/\delta)}{\sqrt{\alpha} n}  + \sqrt{\frac{\cN(\alpha) \log(8/\delta)}{n}}
+  \sqrt{\cN(\alpha)} \cdot    \eta(\delta, n, \alpha) \right), 
\end{align}
with
\begin{equation*}
    \eta(\delta, n, \alpha) 
    = \max \left\{ \left(\frac{1}{\delta n^{q-1}} \right)^{1/q}  \cdot
    \left( \frac{1}{\alpha \cN(\alpha)}\right)^{\frac{q-2}{2q}} ,
    \sqrt{\frac{\log(8c_1/\delta)}{n} }
    \right\}
\end{equation*} for all $n \geq \tilde{n}_0$ with a
constant $c >0$ not depending
on $\delta$ and $n$.

We now show
there exists $\overline{n}_0 \in \N$ such that
\begin{equation}
    \label{eq:subgaussian_dominant}
    \sqrt{\cN(\alpha(n, \delta))} \cdot
    \eta(\delta, n, \alpha(n, \delta))  
    \lesssim 
    \sqrt{\frac{\log(8c_1/\delta)}{n \alpha(n, \delta)^p}}
\end{equation}
for all $n \geq \overline{n}_0$, i.e.,
the subgaussian term asymptotically dominates the regularized
Fuk--Nagaev term under the choice $\alpha(n, \delta)$.
In fact, note that under $\mathcal{N}(\alpha) \leq \tilde{D} \alpha^{p}$
ensured by \cref{lem:evd_new}, we have
$$
 \sqrt{\cN(\alpha)}\left(\frac{1}{\alpha \cN(\alpha)}\right)^{\frac{q-2}{2q}} = \left(\frac{1}{\alpha}\right)^{\frac{q-2}{2q}}\cN(\alpha)^{p/q} \leq \left(\frac{1}{\alpha}\right)^{\frac{q-2}{2q}}\left(\frac{\tilde{D}}{\alpha}\right)^{p/q},
$$
and we hence have
\begin{equation}
\label{eq:N_eta_simplified}
\sqrt{\cN(\alpha)} \cdot
    \eta(\delta, n, \alpha) 
    \lesssim 
    \max \left\{ \left(\frac{1}{\delta n^{q-1}} \right)^{1/q}  \cdot
    \left(\frac{1}{\alpha}\right)^{\frac{q-2}{2q}}\left(\frac{\tilde{D}}{\alpha}\right)^{p/q} ,
    \sqrt{\frac{\log(8c_1/\delta)}{n\alpha^p} }
    \right\}.
\end{equation}
We insert the definition of $\alpha = \alpha(n, \delta)$
into \eqref{eq:N_eta_simplified}
and isolate the exponents corresponding to $1/n$
in both expressions inside of the above maximum.
We obtain the exponent
\begin{equation}
    \label{eq:exponent1}
    \frac{q-1}{q} - \frac{1}{2\min\{\nu ,1\} + p} \cdot 
    \left( 
    \frac{q-2}{2q} + \frac{p}{q}
    \right)
    =
    \frac{2(q-1)(2\min\{\nu ,1\} + p)
     - (q-2+2p)}{2q(2\min\{\nu ,1\} + p)}
\end{equation}
for the first expression and
\begin{equation}
    \label{eq:exponent2}
    \frac{1}{2} - \frac{p/2}{2\min\{\nu ,1\} + p}
    = 
    \frac{\min\{\nu ,1\}}{2\min\{\nu ,1\} + p}
    =
    \frac{2q\min\{\nu ,1\} }{2q(2\min\{\nu ,1\} + p)}
\end{equation}
for the second expression. 
We show that the difference between
the numerator of $\eqref{eq:exponent1}$ and
the numerator of $\eqref{eq:exponent2}$ is nonnegative.
We have
\begin{align*}
    2(q-1)(2\min\{\nu ,1\} + p)
     - q+2-2p - 2 q \min\{\nu ,1\} 
     &=
     (4q-4)\underbrace{\min\{\nu ,1\}}_{\geq 1/2} + 
     \underbrace{2p (q-2)}_{\geq 0} + 2 - q \\
     &\geq q - 2 + 2 - q = 0,
\end{align*} 
where we
use $\nu \geq 2$, $q \geq 3$ and $p \in (0,1)$,
hence proving \eqref{eq:subgaussian_dominant}.

Consequently,
for $n \geq \max\{ \widetilde{n}_0, \overline{n}_0 \}$ and with
\cref{lem:evd_new} and
\eqref{eq:subgaussian_dominant},
the bound provided by \eqref{eq:excess_bound_proof} 
now reduces to 
\begin{align*}
    \Vert I_\pi \widehat f_{\alpha(n, \delta)} -   f_\star\Vert _{L^2(\pi)}
    &\lesssim
        \alpha(n, \delta)^{\min\{\nu, 1\} } 
        +
        \frac{ \log(8c_1/\delta)}{\sqrt{\alpha(n, \delta)} n}  
        +
        \sqrt{ \frac{\log(8c_1/\delta)}{n \alpha(n, \delta)^p} } \\
    & 
    \lesssim
    \left( \frac{\log(8 c_1 / \delta)}{n} \right)^{\frac{\min\{\nu ,1\}}{2\min\{\nu ,1\} +p}}
    +
    \left( \frac{\log(8 c_1 / \delta)}{n} \right)^{\frac{2\min\{\nu ,1\} + p - 1/2}%
    {2\min\{\nu ,1\} +p}} \\
    &
    \quad +
    \left( \frac{\log(8 c_1 / \delta)}{n} \right)^{\frac{\min\{\nu ,1\}}{2\min\{\nu ,1\} + p}}
    \\
    &\lesssim
    \left( \frac{\log(8 c_1 / \delta)}{n} \right)^{\frac{\min\{\nu ,1\}}{2\min\{\nu ,1\} +p}}
\end{align*}
with confidence $1-\delta$,
where we use $\sqrt{\log(8 / \delta )} \leq \log(8/\delta) 
\leq \log(8 c_1/ \delta)$ since
$c_1 \geq 1$,
and bound the exponent of the middle term in the second step using $\nu \geq 1/2$.

\section{Excess risk bound for polynomial confidence regime}
\label{app:polynomial_regime}

We now prove a simplified capacity-free excess risk bound based
on \cref{prop:excess_risk_bound} for the polynomial
confidence regime $\delta \in D_2(n, q)$.

\begin{corollary}[Polynomial confidence regime]
\label{cor:high_confidence}
Let \eqref{eq:mom} and \eqref{eq:src} be satisfied. There exists a constant $c>0$ and a subset $\widetilde{D}_2(n,q) \subseteq D_2(n,q)$, defined in the proof,  such that 
for all $\delta \in \widetilde{D}_2(n, q)$, we have
\begin{align}
\label{eq:second-cor}
\norm{\inc  \widehat f_{\alpha_2(n, \delta)} - f_\star}_{L^2(\pi)}
    &\leq c \cdot \alpha_2(n, \delta)^{\min\{ \nu, 1 \}}  
\end{align}
with confidence $1-\delta$, where 
\[ 
  \alpha_2(n, \delta) := 
  \max \left\{   \left(\frac{1}{\delta n^{q-1}} \right)^{\frac{2}{q(2\min\{\nu ,1\}+1)}}, \frac{\log(6/\delta)}{\sqrt{n}}, \kappa^2 \right\}.
\]
\end{corollary}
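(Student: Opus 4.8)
The plan is to instantiate the general excess-risk bound of \cref{prop:excess_risk_bound} at the schedule $\alpha=\alpha_2(n,\delta)$ and to show that, throughout the polynomial regime, the approximation error $R\,\alpha_2^{\min\{\nu,1\}}$ dominates each of the three sample-error contributions up to a multiplicative constant. Writing $m:=\min\{\nu,1\}\in[1/2,1]$, the first step is to restrict to $\delta\in D_2(n,q)$, so that by \eqref{eq:D2} the Fuk--Nagaev term reduces to its polynomial branch $\eta(\delta,n)=(Q/(\delta n^{q-1}))^{1/q}$; this is the only point at which the regime hypothesis is used. I would then define the admissible subset $\widetilde D_2(n,q)\subseteq D_2(n,q)$ as those $\delta$ for which the schedule $\alpha_2(n,\delta)$ additionally obeys the constraint \eqref{eq:alpha-coice}, which is exactly the hypothesis that licenses \cref{prop:excess_risk_bound}.

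With $\alpha=\alpha_2$ fixed, \cref{prop:excess_risk_bound} produces, besides the bias $R\,\alpha_2^{m}$, the three variance pieces $\tfrac{C_\diamond}{\sqrt{\alpha_2}}\tfrac{\log(6/\delta)}{n}$, $\tfrac{C_\diamond}{\sqrt{\alpha_2}}\sqrt{\alpha_2^{2m}\log(6/\delta)/n}$, and $\tfrac{C_\diamond}{\sqrt{\alpha_2}}\eta(\delta,n)$, and the core of the argument is that each element inside the maximum defining $\alpha_2$ is engineered to absorb one of them into $\alpha_2^{m}$. Concretely, the bound $\alpha_2\geq(1/(\delta n^{q-1}))^{2/(q(2m+1))}$ is precisely the balancing relation $(Q/(\delta n^{q-1}))^{1/q}\lesssim\alpha_2^{\,m+1/2}$, which converts the polynomial piece into $\lesssim\alpha_2^{m}$; the bound $\alpha_2\geq\log(6/\delta)/\sqrt{n}$ gives $\log(6/\delta)/n\leq\alpha_2$ and hence controls the middle piece $C_\diamond\,\alpha_2^{m-1/2}\sqrt{\log(6/\delta)/n}\lesssim\alpha_2^{m}$; and the same lower bound, combined with $m\geq1/2$ and $\log(6/\delta)>1$, yields $\log(6/\delta)/n\lesssim\alpha_2^{\,m+1/2}$, which disposes of the first piece. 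Each of these is an elementary comparison of powers of $n$ and $\log(6/\delta)$ that I would verify by checking the sign of the relevant exponent.

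The third element $\alpha_2\geq\kappa^2$ encodes the phenomenon flagged in the introduction, namely that small samples with heavy-tailed noise demand strong regularization; operationally it keeps the implicit constants uniform by placing $\alpha_2$ on the saturated side of the spectrum of $\cov$ (whose norm is $\kappa^2$), mirroring the complementary restriction $\alpha_1\leq\kappa^2$ used in \cref{cor:low_confidence}. The main obstacle I anticipate is not any single term estimate---these are routine---but the bookkeeping behind $\widetilde D_2(n,q)$: I must check that imposing \eqref{eq:alpha-coice} for the (possibly large) $\alpha_2$ is compatible with $\delta\in D_2(n,q)$ on a nonempty range, i.e.\ that demanding strong regularization does not conflict with being in the small-sample polynomial regime. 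I would resolve this by exhibiting $\widetilde D_2(n,q)$ explicitly as the $\delta$ on which the polynomial element governs the maximum and \eqref{eq:alpha-coice} holds, and then conclude \eqref{eq:second-cor} by collecting the four bounds under a union bound, with $c$ depending only on $R,\kappa,\nu,q,\sigma,Q,c_1,c_2$.
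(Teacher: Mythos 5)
Your proposal is correct and follows the same skeleton as the paper's proof: instantiate \cref{prop:excess_risk_bound} at $\alpha_2(n,\delta)$, use $\delta\in D_2(n,q)$ to reduce $\eta(\delta,n)$ to its polynomial branch, absorb each variance contribution into $R\,\alpha_2^{\min\{\nu,1\}}$ via the lower bounds encoded in the maximum, and shrink $D_2(n,q)$ to a subset $\widetilde D_2(n,q)$ on which the required constraints on $\alpha$ are mutually compatible. The difference lies in how the two subgaussian variance pieces are handled. The paper first converts them into multiples of the polynomial term, using $\delta\in D_2(n,q)$ (which gives $\sqrt{\log(6/\delta)/n}\le\sigma^{-1}(Q/(\delta n^{q-1}))^{1/q}$) together with $\alpha\le\kappa^2$, and then needs the extra condition \eqref{eq:alpha:reg2-1}, $\alpha\ge n^{-1/(2\min\{\nu,1\}-1)}$, to dispose of the $\log(6/\delta)/n$ term via \eqref{eq:alpha-coice}. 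You instead absorb both pieces directly from $\alpha_2\ge\log(6/\delta)/\sqrt n$: the checks $C_\diamond\alpha^{m-1/2}\sqrt{\log(6/\delta)/n}\le C_\diamond\alpha^{m}$ and $\log(6/\delta)/n\le\alpha^{m+1/2}$ (using $\log 6>1$ and $m+1/2\in[1,3/2]$) are both valid. Your route is arguably cleaner: it makes no further use of the $D_2$ membership beyond selecting the polynomial branch, and it avoids the paper's condition \eqref{eq:alpha:reg2-1}, whose exponent degenerates at $\nu=1/2$.

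One point of genuine divergence concerns the $\kappa^2$ entry in $\alpha_2(n,\delta)$. You read the displayed $\max\{\cdot,\cdot,\kappa^2\}$ literally as forcing $\alpha_2\ge\kappa^2$ and offer a "saturation" interpretation; the paper's proof does the opposite, repeatedly invoking $\alpha\le\kappa^2$ (e.g.\ to bound $\alpha^{2\min\{\nu,1\}}\le\kappa^{4\min\{\nu,1\}}$) and explicitly listing $\alpha\le\kappa^2$ among the conditions enforced through $n_0(\delta)\le n_{\max}(\delta)$ in the definition of $\widetilde D_2(n,q)$. The $\kappa^2$ in the statement is therefore best understood as a cap rather than a floor (the statement as printed is inconsistent with the paper's own proof). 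Since your absorption arguments never actually use $\alpha_2\ge\kappa^2$ or $\alpha_2\le\kappa^2$, this does not break your proof, but your gloss on that term should not be taken as an account of what the paper does. Finally, note that no additional union bound is needed at the end: \cref{prop:excess_risk_bound} already delivers a single event of probability $1-\delta$.
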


\begin{proof}
Let $\delta \in D_2(n,q)$. We have from \cref{prop:excess_risk_bound} with confidence $1-\delta$
\begin{align*}
\norm{\inc  \widehat f_\alpha - f_\star}_{L^2(\pi)}
    &\leq R \alpha^{\min\{ \nu, 1 \}}  +  \frac{C_\diamond}{\sqrt \alpha} 
    \left(    \frac{\log(6/\delta)}{n} + 
  \sqrt{ \frac{\alpha^{2 \min\{\nu , 1\}} \log(6/\delta)}{n} }  
    +   \left(\frac{Q}{\delta n^{q-1}} \right)^{1/q} \right) .
\end{align*}
Note that $\delta \in D_2(n,q)$ and $c_1 \geq 1$ ensure that 
\[ \sqrt{\frac{\log(6/\delta)}{n} }
\leq \sqrt{\frac{\log(6c_1/\delta)}{n} }
                            \leq \frac{1}{\sigma} \;  \left(\frac{Q}{\delta n^{q-1}} \right)^{1/q}.
\]
Hence, since $\alpha_2(n, \delta) \leq \kappa^2$, we find 
\[ \sqrt{ \frac{\alpha^{2 \min\{\nu , 1\}} \log(6/\delta)}{n} }  
\leq \frac{\kappa^{2\min\{\nu , 1\} }}{\sigma}\;  \left(\frac{Q}{\delta n^{q-1}} \right)^{1/q}. \]
This leads to the bound  
\begin{align*}
\norm{\inc  \widehat f_\alpha - f_\star}_{L^2(\pi)}
    &\leq R \alpha^{\min\{ \nu, 1 \}}  +  \frac{C_\diamond}{\sqrt \alpha} 
    \left(    \frac{\log(6/\delta)}{n} + 
  c_7 \;   \left(\frac{1}{\delta n^{q-1}} \right)^{1/q} \right) ,
\end{align*}
holding with confidence $1-\delta$ and where we set $c_7:=Q^{1/q}\;\left(1+\frac{\kappa^{2\min\{\nu , 1\} }}{\sigma} \right)  $.

Next, we observe that condition \eqref{eq:alpha-coice} implies 
\[  \frac{\log(6/\delta)}{n} \leq \frac{1}{C_\kappa} \frac{\alpha}{\sqrt n} .  \]
Further assuming
\begin{equation}
\label{eq:alpha:reg2-1}    
\alpha \geq \left(\frac{1}{n}\right)^{ \frac{1}{2\min\{\nu, 1\} -1}},
\end{equation}
then 
\[ \frac{C_\diamond}{C_\kappa \sqrt \alpha} \cdot \frac{\alpha}{\sqrt n} 
\leq  \frac{C_\diamond}{C_\kappa } \;  \alpha^{\min\{\nu, 1\}} \;. \]
The upper bound for the excess risk reduces then to 
\begin{align*}
\norm{\inc  \widehat f_\alpha - f_\star}_{L^2(\pi)}
    &\leq c_8\;   \alpha^{\min\{ \nu, 1 \}}  +       \frac{c_9}{\sqrt \alpha} \;  
    \left(\frac{1}{\delta n^{q-1}} \right)^{1/q} .
\end{align*}
where we introduce $c_8=R +  \frac{C_\diamond}{C_\kappa } $ and $c_9=c_7 C_\diamond$. 

To ensure that the remaining variance part is of the same order as the approximation error part, we need to choose 
\begin{equation}
\label{eq:alpha:reg2-2}
\alpha \geq c_{10} \;  \left(\frac{1}{\delta n^{q-1}} \right)^{\frac{2}{q(2\min\{\nu ,1\}+1)}}, 
\end{equation}
with $c_{10} = \left( \frac{c_9}{c_8}\right)^{\frac{2}{2\min\{\nu ,1\}+1}}$. This finally gives with confidence $1-\delta$ 
\begin{align*}
\norm{\inc  \widehat f_\alpha - f_\star}_{L^2(\pi)}
    &\leq 2c_8\;   \alpha^{\min\{ \nu, 1 \}}   ,
\end{align*}
provided conditions $\delta \in D_2(n, q)$ \eqref{eq:alpha-coice}, \eqref{eq:alpha:reg2-1}, \eqref{eq:alpha:reg2-2} and $\alpha \leq \kappa^2$ hold. 
To simplify the conditions for $\alpha$, we observe that for all $q\geq 3$, $\nu \leq 1$ and $\delta \in (0,1)$ we have 
\[   \left( \frac{1}{n} \right)^{^{ \frac{1}{2\min\{\nu, 1\} -1}}} 
\leq \left( \frac{1}{\delta n^{q-1}} \right)^{\frac{2}{q(2\min\{\nu ,1\}+1)}} . \]
As a result, condition \eqref{eq:alpha:reg2-2} implies \eqref{eq:alpha:reg2-1}  with an appropriate constant. To sum up, the regularization parameter needs to satisfy 
\begin{equation*}
\label{eq:alpha2}
  \alpha \geq  
 c_{11} \cdot \max \left\{   \left(\frac{1}{\delta n^{q-1}} \right)^{\frac{2}{q(2\min\{\nu ,1\}+1)}}, \frac{\log(6/\delta)}{\sqrt{n}}\right\} ,
\end{equation*}
with $c_{11}= \max\{1,c_{10} , C_\kappa\} $. 

To ensure that $\alpha$ remains bounded by $\kappa^2$ it is sufficient to choose $n$ sufficiently large: 
\[  n \geq n_0(\delta):= c_{11}\cdot \max\left\{ \kappa^{-2}\log^2(6/\delta)  , 
 \delta^{-\frac{1}{q-1}} \kappa^{-\gamma}\right\}, \]
 with $\gamma = \frac{2q(2\min\{\nu, 1\}+1)}{q-1} $. Recall that $\delta \in D_2(n,q)$ requires 
 \[   n \leq n_{max}(\delta):= \left( \frac{Q^2}{\sigma^{2q}}\right)^{\frac{1}{q-2}} 
\delta^{-\frac{2}{q-2}}\cdot \log(6c_1/\delta)^{-\frac{q}{q-2}} ,\]
so we need to restrict $D_2(n, q)$ such that both conditions for $n$ are met. Letting now 
$\delta \in \widetilde{D}_2(n,q)$ with
\[\widetilde{D}_2(n,q) := \{ \delta \in D_2(n,q) \; :\; n_0(\delta) \leq n_{max}(\delta)\} , \]
then \eqref{eq:second-cor} holds with confidence $1-\delta$. 

\end{proof}

\section{Concentration bounds}
\label{app:concentration_bounds}

We collect the concentration bounds used in the main text of this work
and prove \cref{prop:fn_modified}.

\subsection{Hoeffding inequality in Hilbert spaces}

The following bound is classical and follows as a special
case of \cite[Theorem 3.5]{Pinelis94}, see also 
\citep[Section A.5.1]{Mollenhauer2022Phd}.

\begin{proposition}[Hoeffding inequality]
\label{prop:hoeffding}
Let $\xi, \xi_1, \dots \xi_n$ be independent random variables 
taking values in a Hilbert space $\X$ such that
$\E[\xi] = 0$ and $\norm{\xi}_\X \leq L$ 
almost surely.
Then for all $t>0$, we have
\begin{equation*}
    \P\left[ 
        \Norm{\frac{1}{n} \sum_{i=1}^n\xi_i}_X  \geq t
    \right] 
    \leq
    2\exp \left(
    - \frac{nt^2}{2L^2}
    \right).
\end{equation*}
\end{proposition}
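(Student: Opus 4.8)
The plan is to reduce the statement to a single application of Pinelis's martingale inequality for smooth Banach spaces \cite{Pinelis94}, exploiting that a Hilbert space is the most regular instance of a $2$-smooth space. First I would pass from the normalized sum to the plain sum, observing that
\[
\P\left[\Norm{\tfrac{1}{n}\sum_{i=1}^n \xi_i}_\X \geq t\right] = \P\left[\Norm{\sum_{i=1}^n \xi_i}_\X \geq nt\right],
\]
so it suffices to bound the tail of $S_n := \sum_{i=1}^n \xi_i$ at level $nt$.

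Next I would set up the martingale structure. Let $\sigalg_j := \sigma(\xi_1,\dots,\xi_j)$ and $f_j := \sum_{i=1}^j \xi_i$. Since the $\xi_i$ are independent and centered, $\E[\xi_j \mid \sigalg_{j-1}] = \E[\xi_j] = 0$, so $(f_j)_{j=1}^n$ is an $\X$-valued martingale whose increments satisfy $\norm{\xi_j}_\X \leq L$ almost surely. The key structural input is that $\X$, being a Hilbert space, is $(2,D)$-smooth with optimal smoothness constant $D = 1$: the parallelogram identity $\norm{x+y}_\X^2 + \norm{x-y}_\X^2 = 2\norm{x}_\X^2 + 2\norm{y}_\X^2$ is precisely the $2$-smoothness inequality with constant one. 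This is exactly the hypothesis required by \cite[Theorem 3.5]{Pinelis94}.

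Applying that theorem with increment bounds $b_i = L$, total $\sum_{i=1}^n b_i^2 = nL^2$, $D = 1$, and radius $r = nt$ yields
\[
\P\left[\sup_{1 \leq j \leq n}\norm{f_j}_\X \geq nt\right] \leq 2\exp\left(-\frac{(nt)^2}{2 D^2 \sum_{i=1}^n b_i^2}\right) = 2\exp\left(-\frac{n^2 t^2}{2 n L^2}\right) = 2\exp\left(-\frac{n t^2}{2 L^2}\right).
\]
Since $\norm{S_n}_\X = \norm{f_n}_\X \leq \sup_{j}\norm{f_j}_\X$, the same bound holds for $\P[\norm{S_n}_\X \geq nt]$, and combined with the reduction above this is the claim.

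The only substantive step is the verification that $\X$ meets the smoothness hypothesis of Pinelis's theorem together with the correct bookkeeping of constants (the value $D = 1$ for Hilbert space); everything else is a direct substitution. If one instead wished to argue from scratch, the hard part would be reproving Pinelis's result itself, namely using the $2$-smoothness inequality to control the exponential moment via a Chernoff/supermartingale argument in infinite dimensions (showing a quantity of the form $\exp(\lambda \norm{f_j}_\X) \exp(-\tfrac{1}{2}\lambda^2 D^2 \sum_{i \leq j} b_i^2)$ is suitably dominated) and then optimizing over $\lambda > 0$. That inner argument is the genuine mathematical content, and delegating it to \cite{Pinelis94} is what makes the present proof short.
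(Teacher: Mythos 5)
Your proposal is correct and matches the paper's own treatment, which simply notes that the bound "follows as a special case of \cite[Theorem 3.5]{Pinelis94}"; you have filled in exactly the substitution the paper leaves implicit (martingale of partial sums, $(2,D)$-smoothness with $D=1$ for a Hilbert space, $b_*^2 = nL^2$, $r = nt$). The constant bookkeeping checks out, so nothing further is needed.
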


\subsection{Bennett inequality in Hilbert spaces}

We now give a version of a Bennett-type inequality
going back to \citep[Theorem 3.4]{Pinelis94}.
We use the confidence bound version as derived by
\citep[Lemma 2]{Smale2007}.

\begin{proposition}[Bennett inequality]
    \label{prop:bennett}
    Let $\xi, \xi_1, \dots \xi_n$ be independent and identically distributed 
    random variables 
    taking values in a Hilbert space $\X$ such that
    $\norm{\xi}_\X \leq L$ almost surely and
    $\sigma^2:= \E[\norm{\xi}^2_\X]$.
    Then for any $\delta \in (0,1)$, we have
    \begin{equation*}
    \Norm{ \frac{1}{n} \sum_{i=1}^n \xi_i - \E[\xi] }_\X
    \leq
    \frac{2 L \log(2/\delta)}{n}
    +
    \sqrt{\frac{2 \sigma^2 \log(2/\delta)}{n}}
    \end{equation*}
    with probability at least $1-\delta$.
\end{proposition}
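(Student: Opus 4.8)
The plan is to obtain this confidence bound by inverting a Bennett--Bernstein tail inequality for sums of independent Hilbert-space valued random variables, following the route used in \citep[Lemma 2]{Smale2007}. First I would center the summands, setting $\eta_i := \xi_i - \E[\xi]$, so that the $\eta_i$ are i.i.d.\ and centered with $\frac1n\sum_{i=1}^n \eta_i = \frac1n\sum_{i=1}^n \xi_i - \E[\xi]$. The almost sure bound gives $\norm{\eta_i}_\X \leq \norm{\xi_i}_\X + \norm{\E[\xi]}_\X \leq 2L$, while Jensen's inequality yields $\E[\norm{\eta_i}_\X^2] = \E[\norm{\xi}_\X^2] - \norm{\E[\xi]}_\X^2 \leq \sigma^2$, so the prescribed second moment $\sigma^2$ simultaneously controls the variance of the centered variables.

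Next I would invoke Pinelis's exponential inequality \citep[Theorem 3.4]{Pinelis94}, which for a separable Hilbert space (a $2$-smooth Banach space whose optimal smoothness constant equals $1$, the fact that keeps the numerical constants clean) specializes to a Bennett-type tail bound for $\Norm{\tfrac1n\sum_{i=1}^n \eta_i}_\X$ governed by the Bennett function $h(u) = (1+u)\log(1+u) - u$. Relaxing $h$ through the standard estimate $h(u) \geq \tfrac{u^2}{2(1+u/3)}$ produces the Bernstein form
\[
\P\left[ \Norm{\tfrac1n\textstyle\sum_{i=1}^n \eta_i}_\X > t \right] \leq 2\exp\left( -\frac{n t^2}{2(\sigma^2 + 2Lt/3)} \right).
\]
To convert this into a confidence statement I would set the right-hand side equal to $\delta$, i.e.\ $n t^2 = 2\log(2/\delta)\,(\sigma^2 + 2Lt/3)$, and solve the resulting quadratic in $t$. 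Writing $B := \log(2/\delta)$ and applying subadditivity of the square root, $\sqrt{a+b}\leq\sqrt a+\sqrt b$, the positive root $t^\star$ satisfies $t^\star \leq \tfrac{4LB}{3n} + \sqrt{2\sigma^2 B/n}$; bounding $\tfrac43 \leq 2$ gives exactly $\tfrac{2L\log(2/\delta)}{n} + \sqrt{2\sigma^2\log(2/\delta)/n}$. Since $\{\,\Norm{\cdot}_\X > t^\star\,\}$ then has probability at most $\delta$, the complementary event furnishes the claimed bound with probability at least $1-\delta$.

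I expect the only real obstacle to be the bookkeeping of constants rather than any conceptual difficulty. The two delicate points are (i) quoting Pinelis's martingale inequality in the correct form and using that the Hilbert-space smoothness constant is $1$, and (ii) the centering step, where replacing $\norm{\xi}_\X\leq L$ by $\norm{\eta_i}_\X\leq 2L$ nominally inflates the linear term. This slack is harmless because the target statement already carries the generous constant $2L$ rather than the sharper $\tfrac{4L}{3}$ emerging from the inversion, so all such factors are absorbed without weakening the result. If one prefers to avoid centering entirely, one can instead appeal to the one-sided Bernstein inequality stated directly under $\norm{\xi}_\X \leq L$, which is the formulation used in \citep{Smale2007} and yields the same bound.
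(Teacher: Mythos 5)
Your derivation is correct and follows exactly the route the paper relies on: the paper does not prove this proposition but simply cites \citep[Theorem 3.4]{Pinelis94} for the tail bound and \citep[Lemma 2]{Smale2007} for the inverted confidence form, and your centering, Bernstein relaxation of the Bennett function, and quadratic inversion (with $\sqrt{a+b}\le\sqrt a+\sqrt b$ and $4/3\le 2$) reconstruct precisely that standard argument with the stated constants.
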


\subsection{Fuk--Nagaev inequality in Hilbert spaces}
\label{app:fuk-nagaev}

The bound in \cref{prop:fn_modified} given in the main text
is a sharper version of a more general
bound given by \citep{Yurinsky1995} for
general normed spaces, which we state here for completeness.

\yurinsky{
The original proof of the result below
relies on an inconsistent exponential moment bound, 
which we address by providing 
an alternative bound alongside a detailed proof.
We were made aware of this fact by Christian Fiedler \cite{christian},
who contributed to the alternative proof presented here.
}

\begin{proposition}[Fuk--Nagaev inequality, \citep{Yurinsky1995}, Theorem 3.5.1]
\label{prop:fn_yurinsky}
Let $\xi_1, \dots \xi_n$ be independent and identically distributed random variables taking values
in a normed space $\X$ with measurable norm such that
\begin{equation}
    \label{eq:fn_condition}
    \E[\xi_i] = 0, \quad
    \sum_{i=1}^n \E\left[ \norm{\xi_i}_\X^2 \right] < B^2
    \quad \text{and} \quad
    \sum_{i=1}^n \E\left[\norm{\xi_i}_\X^q\right] < A
\end{equation}
for some constants $B^2 >0$, $A >0$ and $q \in \N$, $q \geq 3$.
Then there exist two universal constants $c_1 >0$
and $c_2> 0$ depending only on $q$, such that
with $S_n := \sum_{i=1}^n \xi_i$ we have for every $t>0$ that
\begin{equation*}
    \P \big[ \,
        \norm{S_n}_\X - \E\left[ \norm{S_n}_\X \right] \geq tB
        \big]
        \leq 
        c_1 \left(  
        \frac{A}{B^{q}t^q}{}
        + 
        \exp(-c_2 t^2)
        \right).
\end{equation*}
\end{proposition}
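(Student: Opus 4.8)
The plan is to prove the tail bound by the method of exponential moments combined with a truncation argument, isolating a light-tailed (subgaussian) contribution governed by the second moments from a heavy-tailed (polynomial) contribution governed by the $q$-th moments. Throughout I would carry along a truncation radius $r > 0$ and a Chernoff parameter $\lambda > 0$, both to be optimized only at the very end.

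First I would reduce the concentration of $\norm{S_n}_\X$ about its mean to a sum of martingale differences, which is the natural way to obtain centering at $\E[\norm{S_n}_\X]$ rather than at $0$. Writing $M_k := \E[\norm{S_n}_\X \mid \xi_1, \dots, \xi_k]$ for the Doob martingale, one has $M_n = \norm{S_n}_\X$ and $M_0 = \E[\norm{S_n}_\X]$, so that $\norm{S_n}_\X - \E[\norm{S_n}_\X] = \sum_{k=1}^n D_k$ with $D_k := M_k - M_{k-1}$. An independent-copy argument together with the triangle inequality (the norm is $1$-Lipschitz) yields $\absval{D_k} \leq \norm{\xi_k}_\X + \E\norm{\xi_k}_\X$, while $\E[D_k \mid \mathcal{F}_{k-1}] = 0$. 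Chernoff's inequality then gives
\[
\P[\norm{S_n}_\X - \E\norm{S_n}_\X \geq tB] \leq e^{-\lambda t B}\, \E\Big[\exp\Big(\lambda \textstyle\sum_{k=1}^n D_k\Big)\Big].
\]

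The heart of the argument, and where I expect the main difficulty, is the exponential moment bound for the increments. For each $k$ I would control the conditional MGF $\E[e^{\lambda D_k} \mid \mathcal{F}_{k-1}]$ by splitting on $\{\norm{\xi_k}_\X \leq r\}$ and its complement. On the bounded event, a Bennett-type expansion produces a factor $\exp(\tfrac{\lambda^2}{2} v_k \,\psi(\lambda r))$ with $v_k \leq \E\norm{\xi_k}_\X^2$ and $\psi$ bounded whenever $\lambda r$ is bounded; on the complementary (rare) event the contribution is controlled by $\E[\norm{\xi_k}_\X^q]/r^q$ up to $\lambda$-dependent factors. Iterating along the filtration and using the tower property then yields a global estimate of the shape
\[
\E\Big[\exp\Big(\lambda\textstyle\sum_k D_k\Big)\Big] \leq \exp\Big(\tfrac{\lambda^2 B^2}{2}\,\psi(\lambda r) + \text{(heavy-tail term in } A, r, \lambda)\Big).
\]
This is precisely the lemma where Yurinsky's original proof uses an inconsistent exponential moment bound: the obstacle is to obtain a per-summand estimate that simultaneously tracks the variance contribution with a sharp enough constant and isolates the heavy-tail contribution cleanly through the $q$-th moment, in a way that stays consistent when telescoped across the (dependent) increments $D_k$. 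I would replace the flawed step by a direct, self-contained bound on $\E[\exp(\lambda\norm{\xi_k}_\X)]$-type quantities, as carried out in the dedicated appendix sections.

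Finally, I would perform the Chernoff optimization. Choosing $r$ proportional to $Bt$ and $\lambda$ of order $t/B$ keeps $\lambda r = O(1)$ in the central regime, so that $\psi(\lambda r) = O(1)$ and the quadratic term dominates, producing the subgaussian factor $\exp(-c_2 t^2)$; in the large-deviation regime the truncation activates and the heavy-tail term dominates, producing the polynomial factor proportional to $A/(B^q t^q)$. Balancing the two regimes and absorbing all numerical constants into $c_1, c_2$ depending only on $q$ then gives the claimed two-term bound.
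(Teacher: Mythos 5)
Your overall architecture---Doob martingale decomposition of $\norm{S_n}_\X - \E[\norm{S_n}_\X]$, increment bound $\absval{D_k} \leq \norm{\xi_k}_\X + \E[\norm{\xi_k}_\X]$ via an independent copy, then Chernoff---is a legitimate route to the normed-space statement and is in fact closer to Yurinsky's original strategy than to the paper's. The paper does not re-prove this proposition; it proves the sharper Hilbert-space variant (\cref{prop:fn_modified}) by truncating the summands, bounding $\E[\cosh(h\norm{\tilde S_n - \E[\tilde S_n]}_\X)]$ directly via Pinelis's inequality for $2$-smooth spaces, and thereby avoiding the Doob martingale and the $\E[\norm{S_n}_\X]$ centering altogether. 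Your route is the natural one for the centered normed-space version as stated, so the choice of decomposition is not the problem.

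The gaps are in the middle step and in the final optimization. First, you propose to control $\E[e^{\lambda D_k} \mid \mathcal{F}_{k-1}]$ by splitting on $\{\norm{\xi_k}_\X > r\}$ and claim the tail contribution is ``controlled by $\E[\norm{\xi_k}_\X^q]/r^q$ up to $\lambda$-dependent factors.'' This fails: on that event $e^{\lambda D_k}$ is only dominated by $e^{\lambda(\norm{\xi_k}_\X + \E\norm{\xi_k}_\X)}$, and $\E[e^{\lambda \norm{\xi_k}_\X}\, \mathds{1}_{[\norm{\xi_k}_\X > r]}]$ can be infinite under a finite $q$-th moment alone (e.g.\ Pareto tails). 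The truncation must therefore be performed on the random variables themselves, with the heavy-tail cost paid \emph{additively outside the exponential} as $\P[S_n \neq \tilde S_n] \leq A/r^q$ by Markov, plus a deterministic shift of the centering of order $A/r^{q-1}$; exponential moments are then taken only of the bounded truncated increments. Second, your parameter choice is arithmetically inconsistent: with $r \propto Bt$ and $\lambda \propto t/B$ you get $\lambda r \asymp t^2$, not $O(1)$, so the Bennett factor $\psi(\lambda r)$ blows up like $e^{t^2}/t^4$ and destroys the subgaussian term; conversely, forcing $\lambda r = O(1)$ requires $r \lesssim B/t$, which makes the truncation cost $A/r^q \asymp A t^q/B^q$ grow in $t$ rather than decay. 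This tension is exactly the crux of the Fuk--Nagaev bound and cannot be waved away by ``balancing the two regimes'': one must keep the $q$-th-moment contribution explicitly in the exponent (as the term $\tilde c_q A h^q e^{hL}$ in \cref{lem:exponential_moments}, obtained by splitting the exponential series at index $q$ and interpolating the intermediate moments via \cref{lem:fromRio17Prop3.5}) and then run a case analysis on the size of $A/(B^q t^q)$ relative to $e^{-t^2}$, as in \cref{sec:chernoff_optimization}. Until those two points are repaired, the proposal does not yield the claimed two-term bound.
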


Compared to the result above, 
\cref{prop:fn_modified} removes the excess term 
$\E[\norm{S_n}_\X]$ on the left hand side
by making use of the geometry of the Hilbert space norm
(see \cite{Pinelis1986Remarks, Pinelis94}). 

We now prove \cref{prop:fn_modified}
and emphasize that it
can be directly extended to 
random variables in \emph{$(2,D)$-smooth Banach spaces} 
by incorporating arguments from \citep{Pinelis94} with adjusted constants.
\yurinsky{
We provide the proof in three parts:
In \cref{sec:exponential_moment_bound}, we give an 
exponential moment bound.
\cref{sec:fn_core_proof} contains the
core proof of \cref{prop:fn_modified} and incorporates
the exponential moment bound
into a Chernoff bound that is obtained via 
a truncation argument. 
\cref{sec:chernoff_optimization} provides the final
optimization of the Chernoff bound.
}

\yurinsky{
\subsubsection{Exponential moment bound}
\label{sec:exponential_moment_bound}

We now provide a sharpened version of \cite[][Lemma 3.5.1]{Yurinsky1995}
that allows to bound the exponential moments of sums of random 
variables in Hilbert spaces.
The proof
of \cite[][Lemma 3.5.1]{Yurinsky1995} is lacking details
and is inconsistent,
as it generally seems to require bounded scaling factors $h >0$
in the exponent, but the
result is stated for all $h>0$.
Our proof holds for all $h>0$ and gives a
slightly adjusted bound.

\begin{lemma}
\label{lem:exponential_moments}
Let $\xi_1, \dots \xi_n$ be independent random variables taking values
in a separable Hilbert space $\X$ such that the conditions
\eqref{eq:fn_condition} are satisfied and additionally
$\norm{\xi_i}_\X \leq L$ holds almost surely for some $L >0$ and
all $1\leq i \leq n$. Then for
all $h > 0$, 
we have
\begin{equation*}
    \E[\cosh( h \norm{S_n}_\X)] 
    \leq
    \exp\left(
            \tilde{c}_2 h^2 B^2 
            + \tilde{c}_q A h^q e^{hL}
            \right),
\end{equation*}
where the positive multiplicative constants $\tilde{c}_2$ 
and $\tilde{c}_q$ only depend on $q$.
\end{lemma}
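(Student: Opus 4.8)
The plan is to bound $\E[\cosh(h\norm{S_n}_\X)]$ by a product over the independent increments and then reduce each factor to a scalar power-series estimate. Writing $\mathcal{F}_k := \sigma(\xi_1,\dots,\xi_k)$ and $S_k := \sum_{i=1}^k \xi_i$, I would condition on $\mathcal{F}_{k-1}$, treating $y := S_{k-1}$ as fixed and $\zeta := \xi_k$ as an independent mean-zero increment with $\norm{\zeta}_\X \le L$ almost surely. The target of each step is a recursive estimate of the form $\E[\cosh(h\norm{S_k}_\X)\mid\mathcal{F}_{k-1}] \le \cosh(h\norm{S_{k-1}}_\X)\cdot(1 + \tilde c_2 h^2\E\norm{\xi_k}_\X^2 + \tilde c_q h^q\E\norm{\xi_k}_\X^q e^{hL})$ with a deterministic factor. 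Iterating over $k$, taking total expectations, and then applying $1+x\le e^x$ together with $\sum_i\E\norm{\xi_i}_\X^2 < B^2$ and $\sum_i\E\norm{\xi_i}_\X^q < A$ would yield the claimed bound.

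The engine of the recursion is a pointwise inequality that exploits Hilbert-space geometry to factor out $\cosh(h\norm{y}_\X)$. For $y,\zeta\ne0$, writing $r=\norm{y}_\X$, $s=\norm{\zeta}_\X$, and $c = \innerprod{y}{\zeta}/(rs)\in[-1,1]$, so that $\norm{y+\zeta}_\X = \sqrt{r^2+s^2+2rsc}$, I claim
\[
\cosh(h\norm{y+\zeta}_\X) \le \cosh(hr)\cosh(hs) + c\,\sinh(hr)\sinh(hs).
\]
This holds because the right-hand side minus the left-hand side is a concave function of $c$: the map $c\mapsto\cosh(h\sqrt{r^2+s^2+2rsc})$ is the convex function $F(t):=\cosh(h\sqrt t)$ evaluated at an affine function of $c$, while the remaining terms are affine in $c$, and the difference vanishes at $c=\pm1$ by the addition formula for $\cosh$; a concave function with equal zero values at both endpoints is nonnegative on $[-1,1]$. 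Taking the conditional expectation over $\zeta$ and using $\E[\innerprod{y}{\zeta}]=\innerprod{y}{\E\zeta}=0$, the cross term enters only at higher order: expanding $\sinh(hs)/s$ as a power series in $s^2$, its leading contribution is $h\,\E[\innerprod{y}{\zeta}]=0$, and the remainder is controlled by $\norm{y}_\X$ times higher absolute moments of $\norm{\zeta}_\X$.

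It then remains to perform the scalar bookkeeping for the surviving series $1+\sum_{k\ge1}\frac{h^{2k}}{(2k)!}\E\norm{\zeta}_\X^{2k}$ and the analogous series arising from the cross term. The order-one term is exactly the variance contribution $\tfrac{h^2}{2}\E\norm{\zeta}_\X^2$. For the tail indices $2k\ge q$ I would use the almost-sure bound $\norm{\zeta}_\X^{2k}\le\norm{\zeta}_\X^q L^{2k-q}$ and sum the resulting geometric-type series against $1/(2k)!$, which is precisely where the factor $e^{hL}$ on the $h^q$ term originates; for the finitely many intermediate orders $2<2k<q$ (present only when $q\ge5$) I would invoke log-convexity of moments, $\E\norm{\zeta}_\X^{2k}\le(\E\norm{\zeta}_\X^2)^{1-\theta}(\E\norm{\zeta}_\X^q)^{\theta}$ with $\theta=\tfrac{2k-2}{q-2}$, followed by Young's inequality, to distribute each such term between a clean $h^2\E\norm{\zeta}_\X^2$ piece and an $h^q\E\norm{\zeta}_\X^q$ piece with constants depending only on $q$.

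The main obstacle is the careful separation of scales required to match the stated form. One must genuinely exploit the cancellation $\E[\innerprod{y}{\zeta}]=0$ so that each increment enters only through its second and $q$-th moments rather than subexponentially, and—most delicately—one must confine the exponential factor $e^{hL}$ to the $h^q A$ term without letting it contaminate the subgaussian term $h^2 B^2$; this is exactly what the moment-interpolation step for the intermediate orders is designed to prevent. A secondary technical point is the non-smoothness of $x\mapsto\norm{x}_\X$ at the origin together with the degenerate cases $y=0$ or $\zeta=0$, which I would handle by working throughout with the smooth reformulation $\cosh(h\norm{x}_\X)=F(\norm{x}_\X^2)$ for $F(t)=\sum_{k\ge0}\frac{h^{2k}}{(2k)!}t^k$, which is smooth, increasing and convex on $[0,\infty)$, and by treating $y=0$ directly through the same power-series estimate.
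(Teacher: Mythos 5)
Your proposal is correct and arrives at the same estimate through the same overall architecture --- reduce $\E[\cosh(h\norm{S_n}_\X)]$ to a product of per-increment factors of the form $1+\sum_{j\ge 2}h^j\E[\norm{\xi_i}_\X^j]/j!$, then split that series at $j=2$, at the intermediate orders $2<j<q$, and at $j\ge q$ where the truncation level $L$ produces the $e^{hL}$ factor. The difference is in how the product bound is obtained. The paper simply cites Pinelis' inequality $\E[\cosh(h\norm{S_n}_\X)]\le\prod_i\E[e^{h\norm{\xi_i}_\X}-h\norm{\xi_i}_\X]$ as a black box and then applies $x\le e^{x-1}$; you instead re-derive this inequality from first principles via the conditioning recursion and the two-point estimate
\begin{equation*}
\cosh\left(h\norm{y+\zeta}_\X\right)\le\cosh(hr)\cosh(hs)+c\,\sinh(hr)\sinh(hs),\qquad c=\frac{\innerprod{y}{\zeta}}{rs},
\end{equation*}
which is exactly the Hilbert-space geometry (convexity of $t\mapsto\cosh(h\sqrt t)$ composed with the parallelogram expansion of $\norm{y+\zeta}_\X^2$, with equality at $c=\pm1$ by the addition formula) underlying Pinelis' result. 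Your cancellation of the cross term via $\E[\innerprod{y}{\zeta}]=0$ and the bound $\absval{\innerprod{y}{\zeta}}\le rs$ on the remainder indeed collapses each conditional factor to $\E[e^{h\norm{\xi_i}_\X}-h\norm{\xi_i}_\X]$, so the two proofs meet at that point. For the intermediate moments you use Lyapunov log-convexity followed by Young's inequality, where the paper uses the pointwise interpolation $(q-2)x^\rho\le(\rho-2)x^q+(q-\rho)x^2$ (its \cref{lem:fromRio17Prop3.5} with $a=1$); these are interchangeable and both yield constants depending only on $q$. What your route buys is a self-contained argument that makes the role of the inner product explicit and extends transparently to $(2,D)$-smooth Banach spaces; what the paper's route buys is brevity. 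One small imprecision: the remainder of the cross term is controlled by $\sinh(h\norm{y}_\X)\,\E[\sinh(h\norm{\zeta}_\X)-h\norm{\zeta}_\X]$ (the factor $\norm{y}_\X$ from $\absval{\innerprod{y}{\zeta}}\le rs$ cancels against the $1/r$ hidden in $c$), not by ``$\norm{y}_\X$ times higher moments''; the prefactor must be absorbed into $\cosh(h\norm{y}_\X)$ for the recursion to close multiplicatively, which it does since $\sinh\le\cosh$.
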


The upper bound of 
\cref{lem:exponential_moments} differs from
\cite[][Lemma 3.5.1]{Yurinsky1995} 
in the sense that 
it removes an additive constant in the exponent.
Furthermore, our constant $\tilde{c}_2$ of the quadratic 
term depends on $q$,
while $\tilde{c}_2 = 1/2$ for all $q$
in \cite[][Lemma 3.5.1]{Yurinsky1995},
which seems to conflict with unbounded $h >0$.

\begin{proof}
We have
\begin{align*}
    \E[\cosh( h \norm{S_n}_\X)] 
    &\leq 
    \prod_{i=1}^n \E \left[ 
        e^{h \norm{\xi_i}_\X}  - h \norm{\xi_i}_\X
        \right] \\
    &\leq
    \exp\left(
        \sum_{i=1}^n 
        \E[ e^{h \norm{\xi_i}_\X} - 1 - h \norm{\xi_i}_\X ]
    \right)
\end{align*}
where the first inequality is due to 
\cite[Theorem 3]{Pinelis1986Remarks},
the second inequality follows from the fact that we have
$x \leq \exp(x-1)$ for all $x \in \R$.
We hence need to show
\begin{equation*}
    \sum_{i=1}^n 
    \E[ e^{h \norm{\xi_i}_\X} - 1 - h \norm{\xi_i}_\X ]
    \leq
    \tilde{c}_1 h^2 B^2 
    + \tilde{c}_q A h^q e^{hL}.
\end{equation*}

We start by expanding the term on the left hand side as
\begin{equation*}
    \sum_{i=1}^n
    e^{h \norm{\xi_i}_\X} - 1 - h \norm{\xi_i}_\X
    =
    \sum_{i=1}^n
    \sum_{j = 2}^\infty 
    \frac{h^j \norm{\xi_i}^j_\X }{j!}.
\end{equation*}
The term for the index $j=2$ is immediately bounded as
\begin{equation*}
    \E \left[
    \sum_{i=1}^n
    \frac{h^2 \norm{\xi_i}^2_\X }{2!}
    \right]
    \leq \frac{1}{2}h^2B^2.
\end{equation*}
Hence, the claim now reduces to showing
\begin{equation*}
    \E \left[
    \sum_{i=1}^n
    \sum_{j = 3}^\infty 
    \frac{h^j \norm{\xi_i}^j_\X }{j!}
    \right]
    \lesssim
    h^2 B^2 
    +
    A h^q e^{hL}
\end{equation*}
with constants only depending on $q$.
We split the inner sum over $j$ at the index $q$ and obtain
\begin{align*}
    \E \left[
    \sum_{i=1}^n
    \sum_{j = 3}^\infty 
    \frac{h^j \norm{\xi_i}^j_\X }{j!}
    \right]
    =
    \E \left[
    \sum_{i=1}^n
    \sum_{j = 3}^{q-1}
    \frac{h^j \norm{\xi_i}^j_\X }{j!}
    \right]
    +
    \E \left[
    \sum_{i=1}^n
    \sum_{j = q}^\infty
    \frac{h^j \norm{\xi_i}^j_\X }{j!}
    \right].
\end{align*}

We now proceed to bound both terms
on the right hand side individually.

\paragraph{Bounding the first summand.}
We address the first term as
\begin{align*}
    \E \left[
    \sum_{i=1}^n
    \sum_{j = 3}^{q-1}
    \frac{h^j \norm{\xi_i}^j_\X }{j!}
    \right]
    &\leq
    \E \left[
    \sum_{i=1}^n
    \sum_{j = 3}^{q-1}
    \frac{
    \frac{j-2}{q-2} \cdot h^q\norm{\xi_i}^q_\X 
    + \frac{q-j}{q-2} \cdot h^2\norm{\xi_i}^2_\X 
    }{j!}
    \right] \\
    &=
    \sum_{j = 3}^{q-1}
    \frac{1}{j!}
        \sum_{i=1}^n 
        \frac{j-2}{q-2} \cdot h^q \E[\norm{\xi_i}_\X^q]
        +
        \frac{q-j}{q-2} \cdot h^2\E[\norm{\xi_i}_\X^2] \\
    &= 
    \sum_{j = 3}^{q-1}
    \frac{1}{j!}
    \left(
    \frac{j-2}{q-2} \cdot h^q A
    +
    \frac{q-j}{q-2} \cdot h^2 B^2
    \right) \\
    &\lesssim
    h^q A + h^2 B^2
    \leq
    h^q A e^{hL} + h^2 B^2,
\end{align*}
where we apply \cref{lem:fromRio17Prop3.5}
with $x = h \norm{\xi_j}_\cX$, $a = 1$ and $\rho = j$ 
to each individual term in the sum and
use that $h>0$ and $L>0$ ensure $e^{hL}\geq 1$.
Note that the unspecified constants only depend on $q$.

\paragraph{Bounding the second summand.}
It remains to bound the second sum. Almost surely, we have
\begin{align*}
        \E \left[
    \sum_{i=1}^n
    \sum_{j = q}^\infty
    \frac{h^j \norm{\xi_i}^j_\X }{j!}
    \right]
    &=
    \E \left[
    \sum_{i=1}^n
    \frac{h^q\norm{\xi_i}^q_\X }{q!}
    +
    \frac{h^{q+1}\norm{\xi_i}^{q+1}_\X }{(q+1)!}
    \cdots
    \right] \\
    &=
    \E \left[
    \sum_{i=1}^n
    \frac{h^q\norm{\xi_i}^q_\X }{q!}
    \left( 
    1
    +
    \frac{h\norm{\xi_i}_\X }{(q+1)}
    +
    \frac{h^{2}\norm{\xi_i}^{2}_\X }{(q+1)(q+2)}
    \cdots
    \right)
    \right] \\
    &\leq
    \E \left[
    \sum_{i=1}^n
    \frac{h^q\norm{\xi_i}^q_\X }{q!}
    \left( 
    1
    +
    \frac{h^{1}L }{(q+1)}
    +
    \frac{h^{2}L^2 }{(q+1)(q+2)}
    \cdots
    \right)
    \right] \\
    &\leq 
    \frac{h^q}{q!} e^{hL}  \sum_{i=1}^n \E\left[\norm{\xi_i}_\X^q\right] 
    \leq 
    \frac{h^q}{q!} e^{hL}  A,
\end{align*}
proving the claim.
\end{proof}

}

\subsubsection{Core proof of \cref{prop:fn_modified}}
\label{sec:fn_core_proof}

We can now prove \cref{prop:fn_modified} by
modifying the truncation argument and Chernoff bound given by
\citep[Section 3.5.2]{Yurinsky1995} in combination with
\cref{lem:exponential_moments}.
Let us assume that \eqref{eq:fn_condition} holds.
For $L >0$, 
we introduce the truncated random variables
\begin{equation*}
\tilde{\xi}_i := \xi_i \, \mathds{1}_{ \, [\|\xi_i\|_{\X} \leq L \,]}, \qquad 
\tilde{S}_n := \sum_{i=1}^n \tilde{\xi}_i.
\end{equation*}
{
In contrast to $S_n$, the truncated sum $\tilde S_n$ is not necessarily centered. 
Here, our version of the proof differs from
the proof of \cite[Theorem 3.5.1]{Yurinsky1995}.
The original
proof approximates the centered norm 
$ \norm{S_n}_{\mathcal{X}} - \E[ \norm{S_n}_\mathcal{X} ]$
with the truncated centered norm 
$ \norm{\tilde S_n}_{\mathcal{X}} - \E[ \norm{\tilde S_n}_\mathcal{X} ]$, leading to the excess term 
$\E[ \norm{S_n}_\mathcal{X}]$ in the event for which
the tail is bounded.
In contrast, we perform the centering directly in the norm:
we approximate $\norm{S_n}_{\mathcal{X}}$
with $\norm{\tilde S_n - \E[ \tilde S_n ] }_\mathcal{X}$.

We first note that the difference between $\norm{\tilde S_n}_{\mathcal{X}}$ and 
its centered version
$\norm{ \tilde S_n - \E[ \tilde S_n]}_{\mathcal{X}}$
can be bounded conveniently:
\begin{align}
    \label{eq:truncation_centering}
    \Absval{ \norm{\tilde S_n}_{\mathcal{X}}  
    - \norm{\tilde S_n - \E[ \tilde S_n]}_{\mathcal{X} } }
    &\leq 
    \norm{ \E[ \tilde S_n] }_{\mathcal{H}}
    =
    \norm{ \E[\tilde S_n] - \E[S_n] }_{\mathcal{X}}
    \nonumber \\
    & \leq 
    \sum_{i=1}^n \E \left[\norm{ \tilde \xi_i -\xi_i }_{\mathcal{X}}\right] 
    \nonumber \\
    &= 
    \sum_{i=1}^n \E \left[      
        \mathds{1}_{ \, [\|\xi_i\|_{\X} > L \,]}  \, 
         \norm{ \xi_i }_{\mathcal{X}} \, 
        \ \right]
    \nonumber \\
    &= 
    \sum_{i=1}^n \E \left[      
        \mathds{1}_{ \, [\|\xi_i\|_{\X} > L \,]} 
        \frac{\norm{ \xi_i }^{q-1}_{\mathcal{X}}}{\norm{ \xi_i }^{q-1}_{\mathcal{X}}} \, 
         \norm{ \xi_i }_{\mathcal{X}} \, 
        \ \right]
    \nonumber \\
    & \leq \frac{A}{ L^{q-1}}.
\end{align}

We now verify the conditions \eqref{eq:fn_condition}
for the centered sum $\tilde S_n - \E[ \tilde S_n]$.
We obviously have
\begin{equation*}
    \sum_{i=1}^n 
    \E [\norm{ \tilde \xi_i - \E[ \tilde \xi_i ]  }^2_{\mathcal{X}} ] 
    \leq 
    \sum_{i=1}^n \E [ \norm{ \tilde \xi_i  }^2_{\mathcal{X}} ] 
    \leq
    \sum_{i=1}^n \E [ \norm{ \xi_i  }^2_{\mathcal{X}} ]
    \leq B^2.
\end{equation*}
Furthermore,
from Minkowski's inequality followed by
$(a+b)^q \leq 2^{q-1} (a^q+b^q)$ for $a,b \geq 0$ and
Jensen's inequality,
we analogously obtain
\begin{align*}
    \sum_{i=1}^n 
    \E [ \norm{ \tilde \xi_i - \E[ \tilde \xi_i ]  }^q_{\mathcal{X}} ] 
    &\leq
    \sum_{i=1}^n 
    \left(
    \E [ \norm{ \tilde \xi_i }^q_\mathcal{X} ]^{1/q}
    + 
    \E[ \norm{ \tilde \xi_i }_\mathcal{X} ] 
    \right)^q
    \\
    &\leq
    2^{q-1}
    \sum_{i=1}^n 
    \left(
    \E [ \norm{ \tilde \xi_i }^q_\mathcal{X} ]
    + 
    \E[ \norm{ \tilde \xi_i }_\mathcal{X} ]^q
    \right) \\
    &\leq
    2^{q}
    \sum_{i=1}^n \E [ \norm{ \tilde \xi_i  }^q_{\mathcal{X}} ]
    \leq
    2^{q} \sum_{i=1}^n \E [ \norm{ \xi_i  }^q_{\mathcal{X}} ]
    \leq 2^{q} A =: \tilde{A}.
\end{align*}

We now provide the final tail bound
for the norm of $S_n$ by approximating it
with the centered truncated sum $\tilde{S_n} - \E[\tilde S_n]$.
For every $t > 0$ and $h>0$ we have
\begin{align*}
    \P[\norm{S_n}_\X \geq tB ] 
    &\leq
    \P[ S_n \neq \tilde{S}_n ] +
    \P[\norm{\tilde{S}_n}_\X \geq tB ] \\
    &\leq 
    \frac{A}{L^q}
    + 
    \P[\norm{\tilde{S}_n}_\X \geq tB ] 
    && \text{(Markov's inequality)}\\
    &\leq 
    \frac{A}{L^q}
    +
    \P[\norm{\tilde{S}_n - \E[\tilde S_n] ]}_\X \geq tB - A / L^{q-1} ] 
    && \text{(by \eqref{eq:truncation_centering})} \\
    &\leq
    \frac{A}{L^q}
    +
    \exp\left(-htB + \frac{hA}{L^{q-1}}\right) \, \E[ \exp(h \norm{\tilde{S}_n - \E[\tilde{S}_n]}_\X) ]
    && \text{(Chernoff bound)} \\
    &\leq 
    \frac{A}{L^q}
    +
    \exp\left(-htB + \frac{hA}{L^{q-1}}\right) \,
    2 \E[ \cosh(h \norm{\tilde{S}_n - \E[\tilde{S}_n]}_\X) ]
    && \text{($\cosh(x) = (e^x + e^{-x}) / 2$)} \\
    &\leq
    \frac{A}{L^q}
    + 
    2 \exp\left( -htB + \frac{h A}{L^{q-1}} + 
            \tilde{c}_2 h^2 B^2 
            + \tilde{c}_q \tilde A h^q e^{hL} \right)
    && (\text{\cref{lem:exponential_moments}}) \\
    &\leq
    2\left( \frac{\tilde A}{L^q}
    + 
    \exp\left( -htB + \frac{h \tilde A}{L^{q-1}} 
            + \tilde{c}_2 h^2 B^2 
            + \tilde{c}_q \tilde A h^q e^{hL} \right) \right).
\end{align*}
We derive an upper bound
of the last expression in the parentheses 
over all choices of $L>0$ and $h>0$ \yurinsky{as shown in detail in
\cref{sec:chernoff_optimization}, giving}
\begin{equation}
\label{eq:fn_modified_proof}
        \P \big[ \,
        \norm{S_n}_\X \geq tB
        \big]
        \leq 
        c_1 \left(  
        \frac{\tilde A}{B^{q}t^q}
        + 
        \exp(-c_2 t^2)
        \right)
\end{equation}
for some positive constants $c_1$ and $c_2$ only depending on 
$q$.
Note that we absorb the factor $2^{q}$ from
$\tilde A = 2^{q}A$ into the generic constant $c_1$ in the resulting bound.
}


Whenever the $\xi_i$ satisfy $\E[\norm{\xi_i}_\X^2] = \sigma^2$
and $\E[\norm{\xi_i}_\X^q] = Q$ for some constants
$\sigma^2, Q >0$, we may substitute
$B = n \sigma^2$ and $A = n Q$ in \eqref{eq:fn_modified_proof}. 
Rearranging proves the bound given in \cref{prop:fn_modified}.

\yurinsky{
\subsubsection{Chernoff bound optimization}
\label{sec:chernoff_optimization}

The final bound we want to obtain is
of the form
\begin{equation}
    \label{eq:final_bound}
    \min_{h, L \geq 0}
    \frac{\tilde A}{L^q}
    + 
    \exp\left( -htB + \frac{h \tilde A}{L^{q-1}} 
            + \tilde{c}_2 h^2 B^2 
            + \tilde{c}_q \tilde A h^q e^{hL} \right)
    \leq
        c_1 \left(  
        \frac{\tilde A}{B^{q}t^q}{}
        + 
        \exp(-c_2 t^2)
        \right)
\end{equation}
with positive constants 
$c_1$ and $c_2$ only depending on $q$.
We now adapt the idea of the original optimization 
argument by \cite[][pp. 105]{Yurinsky1995}
to the setting of our \cref{lem:exponential_moments}.
We first introduce the shorthand
notation
\begin{equation*}
\Delta(t) := \tilde A / (B^q t^q)
\quad
\text{and}
\quad
\Lambda(t) := L/(tB).
\end{equation*}
With this notation, we rewrite the term on the
left hand side of \eqref{eq:final_bound} as
\begin{align}
    \frac{\Delta(t)}{\Lambda(t)^q}
    +
    \exp
    \left(
        - htB
        +
        htB \Delta(t) / \Lambda(t)^{q-1}
        +
        \tilde{c}_2 h^2 B^2
        +
        \tilde{c}_q
        (htB)^q
        \Delta(t)
        e^{ \Lambda(t) th B } \nonumber
    \right)
    \\ 
    =
     \frac{\Delta(t)}{\Lambda(t)^q}
    +
    \exp
    \left(
        htB
        \left( \Delta(t) / \Lambda(t)^{q-1} - 1 \right)
        +
        \tilde{c}_2 h^2 B^2
        +
        \tilde{c}_q
        (htB)^q
        \Delta(t)
        e^{ \Lambda(t) th B }
    \right).
    \label{eq:yurinsky_exponential_moment_updated2}
\end{align}

Note that for all $t >0$, 
$\Lambda(t)$ can be chosen independently of $\Delta(t)$
by adjusting the truncation level $L$ in our
optimization.
Note also that $\Delta(t)$ is exactly the first summand in
the right hand side of
the final desired bound \eqref{eq:final_bound}.
We will check the validity of the bound of the
type \eqref{eq:final_bound} based on a case distinction.

\paragraph{Large $\Delta(t)$.} We assume $\Delta(t) \geq 1$.
For every $t> 0$, we may choose $L$ such that
$\Lambda(t) > 1$ and 
proceed to bound \eqref{eq:yurinsky_exponential_moment_updated2}
as
\begin{align*}
     \frac{\Delta(t)}{\Lambda(t)^q}
    +
    \exp
    \left(
        htB
        \left( \Delta(t) / \Lambda(t)^{q-1} - 1 \right)
        +
        \tilde{c}_2 h^2 B^2
        +
        \tilde{c}_q
        (htB)^q
        \Delta(t)
        e^{ \Lambda(t) th B }
    \right) \\
    \leq
    \Delta(t)
    + 
    \exp \left(
    htB
    \left( \Delta(t) - 1 \right)
     +
        \tilde{c}_2 h^2 B^2
        +
        \tilde{c}_q
        (htB)^q
        \Delta(t)
        e^{ \Lambda(t) th B }
    \right) \\
    \leq \Delta(t) + 2
    \leq 3 \Delta(t) + \exp(-c_2 t^2) 
\end{align*}
for any $c_2>0$, 
if we choose $h$ small enough such that
\begin{equation*}
  \exp (
     htB
     \underbrace{
    \left( \Delta(t) - 1 \right)
    }_{\geq 0}
    +
        \tilde{c}_2 h^2 B^2
        +
        \tilde{c}_q
        (htB)^q
        \Delta(t)
        e^{ \Lambda(t) th B }
    ) \leq 2.  
\end{equation*}
This establishes \eqref{eq:final_bound}.



\paragraph{Small $\Delta(t)$.}
We therefore need to consider the case
where $\Delta(t)<1$.
Without loss of generality, in this setting  
we again assume that we are choosing $L$ appropriately to ensure
that 
$\Lambda(t) > 1$. 
This gives us
\begin{equation*}
    \frac{\Delta(t)}{\Lambda(t)^q} \leq \Delta(t),
\end{equation*}
and hence we only
need to bound the exponential term in \eqref{eq:yurinsky_exponential_moment_updated2},
as the polynomial term satisfies
the final bound \eqref{eq:final_bound}.
We distinguish two cases, as
the exponential term in \eqref{eq:yurinsky_exponential_moment_updated2}
behaves differently for
different ratios of $t^2 / \log(1/\Delta(t))$.

\paragraph{Small $\Delta(t)$, case 1:} 
We assume $\Delta(t) < 1$,
$\Lambda(t) > 1$ and
$t^2 \leq 6 \tilde{c}_2 \log(1/\Delta(t))$. 
We will show that in this case, the
exponential term in \eqref{eq:yurinsky_exponential_moment_updated2}
can be bounded by $c_1 \exp(-c_2t^2)$.

We choose the optimization parameter $h := zt/B$
with some $z>0$, which we will determine later.
The exponent in
the right hand side of
\eqref{eq:yurinsky_exponential_moment_updated2}
can now be written as
\begin{align}
    &z t^2 
    \left(
    \Delta(t) / \Lambda(t)^{q-1}\
    -1 
    \right)
    +
    z^2 \tilde{c}_2 t^2
    +
    \tilde{c}_q
    z^q
    t^{2q}
    \Delta(t)
    e^{ \Lambda(t) z t^2 }
    \nonumber \\
    =
    & t^2 
    \left( 
    \Delta(t) / \Lambda(t)^{q-1}
    -z  + \tilde{c}_2z^2
    \right)
    +
    \tilde{c}_q
    z^q
    t^{2q}
    \Delta(t)
    e^{ \Lambda(t) z t^2 } 
    \nonumber\\
    \leq
    &
    t^2 
    \left( 
    z\Delta(t)
    -z  + \tilde{c}_2z^2
    \right)
    +
    \tilde{c}_q
    z^q
    (6 \tilde{c}_2 \log(1 / \Delta(t))^q
    \Delta(t)^{1- 6 \tilde{c}_2\Lambda(t) z} \nonumber \\
    \leq 
    &
    - \underline{c}_1 t^2 
    +
    6^q \tilde{c}_q \tilde{c}_2^q z^q
    e^{-q}
    \left(
    \frac{q}{\underline{c}_2}
    \right)^q,
    \label{eq:exponent_case1}
\end{align}
where we invoke \cref{lem:polylog_exp_bound}
in the last step 
and choose $z$
small enough such that 
we have
\begin{equation*}
- z\Delta(t) + z - \tilde{c}_2 z^2 > c_1 > 0
\quad
\text{and}
\quad
1- 6 \tilde{c}_2\Lambda(t) z > \underline{c}_2 > 0,
\end{equation*} 
in which case
the exponent estimate \eqref{eq:exponent_case1} is bounded by
$- c_1 t^2 + \tilde{c}$, 
yielding the claim.

\paragraph{Small $\Delta(t)$, case 2:} We assume
$\Delta(t) < 1$,
$\Lambda(t) > 1$ and
$t^2 > 6 \tilde{c}_2 \log(1/\Delta(t))$. We will show that in this case, 
the exponential term in \eqref{eq:yurinsky_exponential_moment_updated2}
can be bounded by $c_1 \Delta(t)$.

In this regime, we choose the optimization parameter
$h := z \log(1/\Delta(t)) / (tB) >0$
with some $z > 0$ which we will determine later.
The exponent in \eqref{eq:yurinsky_exponential_moment_updated2}
with this choice becomes 
\begin{align}
   z 
   \underbrace{
   \log(1/\Delta(t))
   }_{>0}
   (
        -1 
        + 
       \Delta(t) / \Lambda(t)^{q-1}
   ) 
   +
   \tilde{c}_2 z^2 t^{-2}
   \underbrace{ 
    \log(1/\Delta(t))^2 
   }_{< t^2 \log(1/\Delta(t)) / (6 \tilde{c}_2)} \nonumber \\
   +
   \tilde{c}_q z^q \log(1/ \Delta(t))^q \Delta(t) e^{\Lambda(t) z 
   \log(1/\Delta(t))} \nonumber \\
   \leq
   z \log(1/\Delta(t))(-1 + \Delta(t))
   +
   \frac{z^2}{6} \log(1/\Delta(t))
   +
   \tilde{c}_q z^q \log(1/ \Delta(t))^q \Delta(t)^{1-  6\tilde{c}_2\Lambda(t)z} \nonumber
   \\
   =
      \underbrace{\log(\Delta(t))}_{< 0}
   ( z - \Delta(t)z - \frac{z^2}{6} ) + 
    \tilde{c}_q z^q \log(1/ \Delta(t))^q \Delta(t)^{1- 6 \tilde{c}_2\Lambda(t)z}
    \nonumber
    \\
    \leq
   \underbrace{\log(\Delta(t))}_{< 0} + 
    \tilde{c}_q z^q 
    e^{-q}
    \left(
    \frac{q}{\underline{c}}
    \right)^q,
    \label{eq:exponent_case2}
\end{align}
where in the last step we
apply \cref{lem:polylog_exp_bound} and
choose $z$ small enough 
such that
\begin{equation*}
    z - \Delta(t)z - \frac{z^2}{6} \geq 1
    \quad
    \text{and}
    \quad
    1 - 6\tilde{c}_2\Lambda(t)z > \underline{c} > 0.
\end{equation*}
We finally obtain an upper bound
of the exponent \eqref{eq:exponent_case2} as
$
    \log(\Delta(t)) + \tilde{c},
$ 
yielding the claim.

\paragraph{Final bound:} 
The final bound \eqref{eq:final_bound} results from
considering all of the 
the above cases and choosing the individual constants
$c_1$ and $c_2$
large enough such that all cases are satisfied simultaneously
(for example by choosing the maximum of the individual constants
obtained above).

}


\subsection{Concentration of empirical covariance operators}
We have the following classical bound on the estimation error of 
the empirical covariance operators
when \cref{prop:hoeffding} is applied to the centered
random operators
$\xi_i := \phi(X_i) \otimes \phi(X_i) - \mathcal{C}_\pi$
which are bounded by $2 \kappa^2$ in 
Hilbert--Schmidt norm under \cref{assump:kernel}.
Note 
that the operator norm can be bounded by the Hilbert--Schmidt norm.

\begin{corollary}[Sample error of empirical covariance operator]
Under \cref{assump:kernel}, we have
\begin{equation*}
    \Norm{ \widehat{\mathcal{C}}_\pi - \mathcal{C}_\pi }_{S_2(\mathcal{H})}
    \leq 2 \kappa^2 \sqrt{ \frac{2 \log(2/\delta)}{n}} 
\end{equation*}
with probability at least $1 - \delta$.
\end{corollary}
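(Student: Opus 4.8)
The plan is to recognize $\widehat{\mathcal{C}}_\pi - \mathcal{C}_\pi$ as a normalized sum of independent, identically distributed, centered random variables taking values in the separable Hilbert space $S_2(\mathcal{H})$ of Hilbert--Schmidt operators, and then to apply the Hilbert space Hoeffding inequality of \cref{prop:hoeffding} with the ambient Hilbert space $\X = S_2(\mathcal{H})$, equipped with the inner product $\innerprod{A}{B}_{S_2(\mathcal{H})} = \tr(A^*B)$. First I would set $\xi_i := \phi(X_i)\otimes\phi(X_i) - \mathcal{C}_\pi$, so that $\widehat{\mathcal{C}}_\pi - \mathcal{C}_\pi = \tfrac{1}{n}\sum_{i=1}^n \xi_i$. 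Since $\mathcal{C}_\pi = \E[\phi(X_i)\otimes\phi(X_i)]$ by definition of the covariance operator, each summand is centered, $\E[\xi_i] = 0$, and the measurability and separability requirements needed to view the $\xi_i$ as genuine $S_2(\mathcal{H})$-valued random variables are guaranteed by \cref{assump:kernel}(i)--(ii).

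The key step is the almost sure norm bound, which provides the constant $L$ in \cref{prop:hoeffding}. Here I would use that the Hilbert--Schmidt norm of a rank-one operator satisfies $\norm{u\otimes u}_{S_2(\mathcal{H})} = \norm{u}_{\mathcal{H}}^2$, together with $\norm{\phi(X)}_{\mathcal{H}}^2 = k(X,X) \leq \kappa^2$ from \cref{assump:kernel}(iii), to conclude $\norm{\phi(X_i)\otimes\phi(X_i)}_{S_2(\mathcal{H})} \leq \kappa^2$ almost surely. By Jensen's inequality one likewise has $\norm{\mathcal{C}_\pi}_{S_2(\mathcal{H})} \leq \E[\norm{\phi(X)}_{\mathcal{H}}^2] \leq \kappa^2$, so the triangle inequality gives $\norm{\xi_i}_{S_2(\mathcal{H})} \leq 2\kappa^2$ almost surely. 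This identifies $L = 2\kappa^2$.

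Finally I would apply \cref{prop:hoeffding} to obtain $\P[\,\norm{\widehat{\mathcal{C}}_\pi - \mathcal{C}_\pi}_{S_2(\mathcal{H})} \geq t\,] \leq 2\exp(-nt^2/(8\kappa^4))$, using $2L^2 = 8\kappa^4$, and then invert the bound by setting the right-hand side equal to $\delta$. Solving $\exp(-nt^2/(8\kappa^4)) = \delta/2$ yields $t^2 = 8\kappa^4\log(2/\delta)/n$, hence $t = 2\kappa^2\sqrt{2\log(2/\delta)/n}$, which is exactly the claimed bound, holding with probability at least $1-\delta$.

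The main obstacle is only mild bookkeeping rather than a genuine difficulty: one must use the Hilbert--Schmidt identity $\norm{u\otimes u}_{S_2(\mathcal{H})} = \norm{u}_{\mathcal{H}}^2$ (coinciding with the operator norm for rank-one operators, but it is the $S_2$-norm that is needed both for the summand bound on $\mathcal{C}_\pi$ via Jensen and for applying \cref{prop:hoeffding} in the correct Hilbert space), and one must carry the factor $2L^2 = 8\kappa^4 = 2(2\kappa^2)^2$ cleanly through the inversion so that the constants collapse to the stated form $2\kappa^2\sqrt{2\log(2/\delta)/n}$.
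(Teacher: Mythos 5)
Your proposal is correct and follows exactly the paper's own argument: apply the Hilbert-space Hoeffding inequality (\cref{prop:hoeffding}) to the centered rank-one operators $\xi_i = \phi(X_i)\otimes\phi(X_i) - \mathcal{C}_\pi$, bounded by $L = 2\kappa^2$ in Hilbert--Schmidt norm, and invert the tail bound. The constant bookkeeping ($2L^2 = 8\kappa^4$ yielding $2\kappa^2\sqrt{2\log(2/\delta)/n}$) checks out.
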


We now recall a result by
\cite[Proposition 1]{guo2017learning}.

\begin{proposition}[Empirical inverse, \cite{guo2017learning},
Proposition 1]
\label{prop:concentration_inverse_product}
Let \cref{assump:kernel} hold.
For all $\delta \in (0,1), n \in \mbn, \alpha >0$, denote
\begin{equation}
\label{eq:simplify-inverse-conc}
\cB(n, \delta, \alpha) := 1 + \log^2(2/\delta) \left( \frac{2\kappa^2}{n \alpha} + 
\sqrt{\frac{4 \kappa^2 \cN(\alpha)}{n \alpha}} \right)^2.
\end{equation}
With probability at least $1-\delta$
\begin{equation}
    \label{eq:concentration_inverse_product}
    \Norm{ (\widehat{\mathcal{C}}_\pi + \alpha \idop_{\mathcal{H}})^{-1} 
    (\mathcal{C}_\pi  + \alpha \idop_{\mathcal{H}}) }_{\bounded(\mathcal{H})}
    \leq 2\cB(n, \delta, \alpha) .
\end{equation}
In particular, if 
\begin{equation}
\label{eq:simplify-inverse-conc2}
\frac{1}{n \alpha} \leq \cN(\alpha)\;, \qquad 16\max\{1, \kappa^4\}\log^2(2/\delta) \; \frac{\cN(\alpha)}{\alpha} \leq n,  
\end{equation}
then $\Norm{ (\widehat{\mathcal{C}}_\pi + \alpha \idop_{\mathcal{H}})^{-1} 
    (\mathcal{C}_\pi  + \alpha \idop_{\mathcal{H}}) }_{\bounded(\mathcal{H})} \leq 4$. 
\end{proposition}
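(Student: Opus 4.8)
Since \cref{prop:concentration_inverse_product} is quoted from \cite{guo2017learning}, the plan is to reproduce the standard ``whitening'' argument for regularized empirical covariance operators. Write $\cC_\alpha := \cov + \alpha\idop_{\cH}$, $\widehat\cC_\alpha := \empcov + \alpha\idop_{\cH}$, and set $\Delta := \cov - \empcov$. The first step is purely algebraic: since $\widehat\cC_\alpha = \cC_\alpha - \Delta = \cC_\alpha^{1/2}(\idop_{\cH} - \Xi)\cC_\alpha^{1/2}$ with the self-adjoint whitened perturbation $\Xi := \cC_\alpha^{-1/2}\Delta\cC_\alpha^{-1/2}$, inverting gives $\cC_\alpha^{1/2}\widehat\cC_\alpha^{-1}\cC_\alpha^{1/2} = (\idop_{\cH}-\Xi)^{-1}$ as soon as $\norm{\Xi}<1$, and the Neumann series yields $\norm{(\idop_{\cH}-\Xi)^{-1}} \le (1-\norm{\Xi})^{-1}$. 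The whole estimate thus reduces to a high-probability bound on the single scalar $\beta := \norm{\Xi}$, and the product $\widehat\cC_\alpha^{-1}\cC_\alpha$ in the statement is controlled by the same scalar through this factorization (this is exactly the sandwiching with $\cC_\alpha^{\pm 1/2}$, combined with Cordes' inequality \cref{lem:cordes}, that is applied repeatedly elsewhere in the paper).

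The core of the proof is therefore a concentration bound for $\Xi = \E[A] - \tfrac1n\sum_{i=1}^n A_i$, where $A_i := \cC_\alpha^{-1/2}\bigl(\phi(X_i)\otimes\phi(X_i)\bigr)\cC_\alpha^{-1/2}$ are i.i.d.\ self-adjoint positive Hilbert--Schmidt operators. Here I would \emph{not} use the crude Hoeffding estimate preceding the proposition (it only controls $\norm{\Delta}$ and loses the capacity dependence); instead I would invoke an intrinsic-dimension Bernstein inequality for self-adjoint operators. Two ingredients feed it: (i) the almost-sure bound $\norm{A_i} = \norm{\cC_\alpha^{-1/2}\phi(X_i)}_{\cH}^2 \le \kappa^2/\alpha$, using \cref{assump:kernel}(iii) and $\norm{\cC_\alpha^{-1/2}}\le\alpha^{-1/2}$, which produces the linear term $\tfrac{2\kappa^2}{n\alpha}$; and (ii) the variance proxy, which follows from the rank-one identity $A_i^2 = \langle\phi(X_i),\cC_\alpha^{-1}\phi(X_i)\rangle_{\cH}\,A_i \preceq \tfrac{\kappa^2}{\alpha}A_i$, so that $\E[A_i^2]\preceq\tfrac{\kappa^2}{\alpha}\E[A_i]$ with $\tr\E[A_i] = \tr(\cC_\alpha^{-1}\cov) = \cN(\alpha)$, producing the term $\sqrt{4\kappa^2\cN(\alpha)/(n\alpha)}$. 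Assembling these at confidence $\delta$ gives, with probability $1-\delta$, a bound $\beta \le \log(2/\delta)\bigl(\tfrac{2\kappa^2}{n\alpha} + \sqrt{4\kappa^2\cN(\alpha)/(n\alpha)}\bigr)$, i.e.\ $\beta^2 \le \cB(n,\delta,\alpha)-1$, reproducing exactly the bracket in \eqref{eq:simplify-inverse-conc} (the $\log^2$ rather than $\log$ is the signature of the particular operator Bernstein inequality used in the reference).

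Feeding this back, once $\beta<1$ the inverse bound $(1-\beta)^{-1}$ and an elementary estimate relating it to the $1+\beta^2$-type quantity $\cB$ yield $\norm{\widehat\cC_\alpha^{-1}\cC_\alpha}_{\bounded(\cH)} \le 2\cB(n,\delta,\alpha)$ after matching constants. The ``in particular'' clause is then immediate: conditions \eqref{eq:simplify-inverse-conc2} are designed precisely so that the second condition forces $\log^2(2/\delta)\,\kappa^2\cN(\alpha)/(n\alpha)$ below $1/16$ while the first absorbs the linear term, whence $\beta\le 1/2$, $\cB\le 2$, and $\norm{\widehat\cC_\alpha^{-1}\cC_\alpha}_{\bounded(\cH)}\le 4$.

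The main obstacle is step (ii): obtaining the effective dimension $\cN(\alpha)$ as the variance proxy instead of the crude spectral bound $\kappa^2/\alpha$. This hinges on working with the \emph{symmetrically} whitened operators $A_i$, for which $\tr\E[A_i^2]$ collapses to $\tfrac{\kappa^2}{\alpha}\cN(\alpha)$, together with an operator Bernstein inequality whose deviation term is governed by the trace of the variance rather than the ambient dimension---this is the same mechanism that later lets the effective dimension serve simultaneously as variance and higher-moment proxy in \cref{prop:excess_risk_capacity_new}. Tracking the exact multiplicative constants and the $\log^2(2/\delta)$ factor so as to land precisely on \eqref{eq:simplify-inverse-conc} is the remaining bookkeeping, which I would take directly from \cite{guo2017learning}.
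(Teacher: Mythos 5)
The first thing to note is that the paper does not prove the main inequality \eqref{eq:concentration_inverse_product} at all: it is imported verbatim from \cite{guo2017learning}, and the proof given in the paper covers only the ``in particular'' clause, by the purely algebraic observation that $(n\alpha)^{-1}\leq\cN(\alpha)$ lets one absorb the linear term into the square-root term, so that $\cB(n,\delta,\alpha)\leq 1+16\max\{1,\kappa^4\}\log^2(2/\delta)\,\cN(\alpha)/(n\alpha)\leq 2$ under \eqref{eq:simplify-inverse-conc2}. Your sketch of that clause is consistent with this, so for the part the paper actually proves you are fine.

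Your reconstruction of the imported bound, however, has one genuine gap at its final step. You pass from the Neumann-series estimate $\Norm{(\idop_{\cH}-\Xi)^{-1}}\leq(1-\beta)^{-1}$ to the conclusion $\Norm{\widehat\cC_\alpha^{-1}\cC_\alpha}\leq 2\cB=2(1+\beta_0^2)$ via ``an elementary estimate,'' but no such estimate exists: $(1-\beta)^{-1}$ is not dominated by $2(1+\beta^2)$ as $\beta\uparrow 1$ (take $\beta=0.9$), and \eqref{eq:concentration_inverse_product} is asserted for \emph{all} $\delta\in(0,1)$, $n\in\N$, $\alpha>0$, with no smallness condition on the whitened perturbation, so you cannot simply restrict to the event $\beta\leq 1/2$. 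The standard repair --- and what the cited reference does --- avoids the Neumann series entirely: from $\widehat\cC_\alpha^{-1}\cC_\alpha=\idop_{\cH}+\widehat\cC_\alpha^{-1}(\cov-\empcov)$ one peels off a single factor $\Norm{\widehat\cC_\alpha^{-1/2}}\leq\alpha^{-1/2}$ to obtain the self-bounding inequality $u\leq 1+\sqrt{u}\,w$ for $u:=\Norm{\cC_\alpha^{1/2}\widehat\cC_\alpha^{-1}\cC_\alpha^{1/2}}$ and $w:=\alpha^{-1/2}\Norm{\cC_\alpha^{-1/2}(\cov-\empcov)}$, which yields the \emph{unconditional} bound $u\leq w^2+2\leq 2(1+w^2)$; a Bennett bound on $w$ (the paper's \cref{prop:bennett} applied in Hilbert--Schmidt norm to $\cC_\alpha^{-1/2}(\phi(X_i)\otimes\phi(X_i)-\cov)$, using exactly your moment computations, suffices --- no intrinsic-dimension operator Bernstein inequality is needed) then lands on $w^2\leq\cB-1$ and hence on $2\cB$. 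Your ingredients (i) and (ii) are correct; it is only the conversion of the perturbation bound into the unconditional $2\cB$ estimate that fails as written.
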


\begin{proof}
    We prove the particular case. Note that if $(n\alpha)^{-1} \leq \mathcal{N}(\alpha)$, 
    $$
    \cB(n, \delta, \alpha) \leq 1 + \log^2(2/\delta)(\kappa+1)^2 \frac{4 \kappa^2 \cN(\alpha)}{n \alpha} \leq  1 + 16\max\{1,\kappa^4\}\log^2(2/\delta)\frac{\cN(\alpha)}{n \alpha}.
    $$
    Therefore, if $16\max\{1, \kappa^4\}\log^2(2/\delta) \cN(\alpha) \leq n\alpha$, then $\cB(n, \delta, \alpha) \leq 2$.
\end{proof}

We next give a Corollary, providing a simplified bound under a worst case assumption for 
the effective dimension that can be applied without information about the eigenvalue decay. 

\begin{corollary}[Empirical inverse, worst case scenario]
\label{cor:concentration_inverse_product}
Let \cref{assump:kernel} hold.
Assume that $\cN(\alpha)\leq c\alpha^{-1}$ for some $c>0$. Then, with confidence $1-\delta$, we have 
\[ \Norm{ (\widehat{\mathcal{C}}_\pi + \alpha \idop_{\mathcal{H}})^{-1} 
    (\mathcal{C}_\pi  + \alpha \idop_{\mathcal{H}}) }_{\bounded(\mathcal{H})} \leq 4, \]
    provided that 
\[ C_\kappa\; \frac{\log(2/\delta)}{\alpha} \leq \sqrt{n}, \qquad C_\kappa:=2(1+\sqrt c)\cdot \max\{1, \kappa^2\} \;.  \]
\end{corollary}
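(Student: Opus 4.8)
The plan is to reduce the claim directly to the general (non-specialized) statement of \cref{prop:concentration_inverse_product}, which guarantees with confidence $1-\delta$ that
\[
\Norm{ (\widehat{\mathcal{C}}_\pi + \alpha \idop_{\mathcal{H}})^{-1} (\mathcal{C}_\pi + \alpha \idop_{\mathcal{H}}) }_{\bounded(\mathcal{H})} \leq 2\,\cB(n, \delta, \alpha),
\]
where $\cB(n,\delta,\alpha)$ is defined in \eqref{eq:simplify-inverse-conc}. Since the target bound equals $4 = 2 \cdot 2$, it suffices to show that the hypotheses force $\cB(n,\delta,\alpha) \leq 2$, i.e.\ that
\[
\log(2/\delta)\left( \frac{2\kappa^2}{n\alpha} + \sqrt{\frac{4\kappa^2 \cN(\alpha)}{n\alpha}} \right) \leq 1.
\]
I deliberately avoid the ``particular case'' \eqref{eq:simplify-inverse-conc2} of \cref{prop:concentration_inverse_product}, since that route invokes the lower bound $(n\alpha)^{-1}\leq\cN(\alpha)$; the worst-case scenario provides only the upper bound $\cN(\alpha)\leq c\alpha^{-1}$, so I would re-derive $\cB\leq 2$ from the definition \eqref{eq:simplify-inverse-conc} using that upper bound alone.

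Concretely, I would first substitute $\cN(\alpha) \leq c\alpha^{-1}$ to control the square-root term by $\sqrt{4\kappa^2 c/(n\alpha^2)} = 2\kappa\sqrt{c}/(\alpha\sqrt{n})$, and bound the first term via $n\geq 1$ (hence $n^{-1}\leq n^{-1/2}$) as $2\kappa^2/(n\alpha) \leq 2\kappa^2/(\alpha\sqrt{n})$. Factoring out $2/(\alpha\sqrt{n})$, the left-hand side above is then at most $\frac{2\log(2/\delta)}{\alpha\sqrt{n}}\bigl(\kappa^2 + \kappa\sqrt{c}\bigr)$. The key algebraic step is to verify $\kappa^2 + \kappa\sqrt{c} \leq (1+\sqrt{c})\max\{1,\kappa^2\} = C_\kappa/2$, which follows from $\kappa^2 \leq \max\{1,\kappa^2\}$ together with $\kappa \leq \max\{1,\kappa^2\}$ (a two-line case split on $\kappa \leq 1$ versus $\kappa \geq 1$). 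Substituting yields the upper bound $C_\kappa\log(2/\delta)/(\alpha\sqrt{n})$, which is $\leq 1$ exactly by the standing hypothesis $C_\kappa\log(2/\delta)/\alpha \leq \sqrt{n}$. This gives $\cB(n,\delta,\alpha)\leq 2$ and hence the claim.

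The main obstacle here is purely the constant bookkeeping with the $\max\{1,\kappa^2\}$ factor rather than anything analytic: the content of the corollary is that the crude estimate $\cN(\alpha)\leq c\alpha^{-1}$ (which always holds with $c=\kappa^2$, since $\cN(\alpha)\leq\norm{\mathcal{C}_\pi}/\alpha$) is enough to make the operator-norm concentration of \cref{prop:concentration_inverse_product} usable without any eigenvalue-decay information. Care is only needed to keep the factorization through $2/(\alpha\sqrt{n})$ clean and to confirm the $\sqrt{c}$-weighted comparison produces exactly the stated constant $C_\kappa = 2(1+\sqrt{c})\max\{1,\kappa^2\}$.
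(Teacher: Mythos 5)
Your proposal is correct and follows essentially the same route as the paper: both insert the worst-case bound $\cN(\alpha)\leq c\alpha^{-1}$ into the definition \eqref{eq:simplify-inverse-conc}, use $n^{-1}\leq n^{-1/2}$ and $\kappa,\kappa^2\leq\max\{1,\kappa^2\}$ to collapse the two terms into $C_\kappa/(\sqrt{n}\alpha)$, and conclude $\cB(n,\delta,\alpha)\leq 2$ so that the general bound $2\cB$ of \cref{prop:concentration_inverse_product} gives $4$. Your explicit remark about bypassing the ``particular case'' \eqref{eq:simplify-inverse-conc2} (which needs the lower bound $(n\alpha)^{-1}\leq\cN(\alpha)$) matches what the paper implicitly does as well.
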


\begin{proof}
    Using $\cN(\alpha)\leq c\alpha^{-1}$ and plugging it 
    into \eqref{eq:simplify-inverse-conc}, we obtain
    \[
    \mathcal{B}(n, \delta, \alpha) \leq 1 + \log^2(2/\delta) \left( \frac{2\kappa^2}{n\alpha} + \frac{2 \sqrt{c} \kappa}{\sqrt{n} \alpha} \right)^2.
    \]
Note that
\[
\frac{2\kappa^2}{n\alpha} + \frac{2 \sqrt{c} \kappa}{\sqrt{n} \alpha} \leq \frac{2(1 + \sqrt{c}) \cdot \max\{1, \kappa^2\}}{\sqrt{n} \alpha} = \frac{C_\kappa}{\sqrt{n} \alpha}.
\]
Therefore,
\[
\mathcal{B}(n, \delta, \alpha) \leq 1 + \log^2(2/\delta) \cdot \frac{C_\kappa^2}{n \alpha^2}.
\]
To ensure $\mathcal{B}(n, \delta, \alpha) \leq 2$, it suffices that
\[
\log^2(2/\delta) \cdot \frac{C_\kappa^2}{n \alpha^2} \leq 1 \quad \Longleftrightarrow \quad C_\kappa \cdot \frac{\log(2/\delta)}{\alpha} \leq \sqrt{n}.
\]
Under this condition, we obtain $\mathcal{B}(n, \delta, \alpha) \leq 2$, and hence
\[
\left\| (\widehat{\mathcal{C}}_\pi + \alpha I)^{-1} (\mathcal{C}_\pi + \alpha I) \right\| \leq 4,
\]
as claimed.
\end{proof}

\begin{remark}[Worst case scenario]
    Note that, by \cref{assump:kernel},
    we always have
    \begin{equation*}
        \mathcal{N}(\alpha)
        \leq \tr(\cov) \alpha^{-1}
        = 
        \E[ \norm{\phi(X)}^2_\mathcal{H} ]
        \alpha^{-1}
        \leq
        \kappa^2 \alpha^{-1}.
    \end{equation*}
    We therefore see that the constant $c$
    in the above assumption always
    exists and satisfies
    $c \leq \kappa^2$.
\end{remark}

Under the eigenvalue decay assumption \eqref{eq:evd_new},
the conditions of \cref{prop:concentration_inverse_product}
can be simplified.

\begin{corollary}[Empirical inverse under \eqref{eq:evd_new}]
\label{cor:concentration_inverse_product_under_evd}
Let \cref{assump:kernel} and \eqref{eq:evd_new} hold. 
Then, with confidence $1-\delta$, we have 
\[ \Norm{ (\widehat{\mathcal{C}}_\pi + \alpha \idop_{\mathcal{H}})^{-1} 
    (\mathcal{C}_\pi  + \alpha \idop_{\mathcal{H}}) }_{\bounded(\mathcal{H})} \leq 4, \]
    provided that 
$$
\log(2/\delta)\left( \frac{2\kappa^2}{n\alpha} + \frac{2 \sqrt{\tilde{D}} \kappa}{\sqrt{n} \alpha^{(1 + p)/2}} \right) \leq 1,
$$
where $\tilde{D}$ is defined in \cref{lem:evd_new}.
\end{corollary}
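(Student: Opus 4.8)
The plan is to mirror the proof of \cref{cor:concentration_inverse_product}, replacing the crude worst-case estimate $\cN(\alpha) \leq \kappa^2 \alpha^{-1}$ with the sharper capacity bound available under the eigenvalue decay. Concretely, I would invoke \cref{lem:evd_new}, which under \eqref{eq:evd_new} guarantees the existence of $\tilde D > 0$ with $\cN(\alpha) \leq \tilde D \alpha^{-p}$ for all $\alpha > 0$, and feed this into the quantity $\cB(n,\delta,\alpha)$ defined in \eqref{eq:simplify-inverse-conc}, to which \cref{prop:concentration_inverse_product} applies.

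First I would simplify the square-root term occurring in $\cB(n,\delta,\alpha)$. Substituting the capacity bound gives
\[
\sqrt{\frac{4\kappa^2 \cN(\alpha)}{n\alpha}}
\leq
\sqrt{\frac{4 \kappa^2 \tilde D}{n \alpha^{1+p}}}
=
\frac{2\kappa \sqrt{\tilde D}}{\sqrt n\, \alpha^{(1+p)/2}},
\]
where the exponent on $\alpha$ changes from $1$ (as in the worst-case corollary) to $(1+p)/2$ precisely because $\cN(\alpha)/\alpha \leq \tilde D \alpha^{-(1+p)}$. Plugging this into \eqref{eq:simplify-inverse-conc} yields
\[
\cB(n,\delta,\alpha)
\leq
1 + \log^2(2/\delta)\left( \frac{2\kappa^2}{n\alpha} + \frac{2\kappa\sqrt{\tilde D}}{\sqrt n\, \alpha^{(1+p)/2}} \right)^2.
\]

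Next I would observe that the hypothesis of the corollary is exactly the statement that the parenthesised factor, multiplied by $\log(2/\delta)$, is at most $1$. Since both sides are nonnegative, squaring this inequality shows that the second summand in the displayed bound for $\cB$ is at most $1$, whence $\cB(n,\delta,\alpha) \leq 2$. Finally, the conclusion \eqref{eq:concentration_inverse_product} of \cref{prop:concentration_inverse_product} states that with probability at least $1-\delta$ one has $\Norm{ (\empcov + \alpha \idop_{\mathcal{H}})^{-1}(\cov + \alpha \idop_{\mathcal{H}}) }_{\bounded(\mathcal{H})} \leq 2\cB(n,\delta,\alpha)$; combined with $\cB(n,\delta,\alpha) \leq 2$ this delivers the claimed operator-norm bound of $4$. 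I do not expect any genuine obstacle here: the argument is a direct substitution of the capacity estimate into \eqref{eq:simplify-inverse-conc} followed by one elementary squaring step, entirely parallel to \cref{cor:concentration_inverse_product} with $c$ replaced by $\tilde D$ and the power of $\alpha$ adjusted accordingly.
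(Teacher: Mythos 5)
Your proposal is correct and follows essentially the same route as the paper: it invokes \cref{lem:evd_new} to get $\cN(\alpha) \leq \tilde D \alpha^{-p}$, substitutes this into \eqref{eq:simplify-inverse-conc}, and concludes via \cref{prop:concentration_inverse_product}. In fact you spell out the final squaring step and the passage from $\cB(n,\delta,\alpha)\leq 2$ to the bound $4$ more explicitly than the paper's own (rather terse) proof does.
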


\begin{proof}
    By \cref{lem:evd_new}, under \eqref{eq:evd_new}, there exists a constant $\tilde{D} > 0$ such that $\mathcal{N}(\alpha) \leq \tilde{D} \alpha^{-p}$. 
    Inserting this bound into 
    \eqref{eq:simplify-inverse-conc} leads to
    \[
    \mathcal{B}(n, \delta, \alpha) \leq 1 + \log^2(2/\delta) \left( \frac{2\kappa^2}{n\alpha} + \frac{2 \sqrt{\tilde{D}} \kappa}{\sqrt{n} \alpha^{(1 + p)/2}} \right)^2.
    \]
\end{proof}

We can use \cref{prop:concentration_inverse_product} to bound
the weighted norm of a function in $\mathcal{H}$ by an
empirically weighted and regularized norm.

\begin{lemma}[Concentration of weighted regularized norm]
    \label{lem:concentration_weighted_norm}
    Under \cref{assump:kernel},
    for all $f \in \mathcal{H}$ and all $\alpha >0$ and $s \in [0, 1]$, we have
    \begin{equation}
        \Norm{ \mathcal{C}_\pi^{s} f }_{\mathcal{H}}
        \leq 2^s  \cB(n, \delta, \alpha)^s \;
        \Norm{ 
            (\widehat{\mathcal{C}}_\pi + \alpha \idop_{\mathcal{H}})^{s} f
        }_{\mathcal{H}}
    \end{equation}
    with probability at least $1- \delta$.
\end{lemma}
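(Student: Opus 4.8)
The plan is to reduce the assertion to a single operator-norm estimate and then close it with the Cordes inequality (\cref{lem:cordes}) and the empirical-inverse concentration bound \cref{prop:concentration_inverse_product}. First I would insert a resolvent factor and use submultiplicativity of the operator norm to obtain
\begin{equation*}
    \Norm{\cov^s f}_{\mathcal{H}}
    = \Norm{\cov^s (\empcov + \alpha \idop_{\mathcal{H}})^{-s} (\empcov + \alpha \idop_{\mathcal{H}})^{s} f}_{\mathcal{H}}
    \leq \Norm{\cov^s (\empcov + \alpha \idop_{\mathcal{H}})^{-s}}_{\bounded(\mathcal{H})} \Norm{(\empcov + \alpha \idop_{\mathcal{H}})^{s} f}_{\mathcal{H}},
\end{equation*}
so that the whole statement reduces to showing $\Norm{\cov^s (\empcov + \alpha \idop_{\mathcal{H}})^{-s}}_{\bounded(\mathcal{H})} \leq 2^s \cB(n, \delta, \alpha)^s$.

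For this operator-norm bound the key device is the Cordes inequality applied to the positive semidefinite operators $\cov$ and $(\empcov + \alpha \idop_{\mathcal{H}})^{-1}$, which transfers the fractional exponent onto a product of first powers:
\begin{equation*}
    \Norm{\cov^s (\empcov + \alpha \idop_{\mathcal{H}})^{-s}}_{\bounded(\mathcal{H})}
    \leq \Norm{\cov (\empcov + \alpha \idop_{\mathcal{H}})^{-1}}_{\bounded(\mathcal{H})}^{s}.
\end{equation*}
This is exactly the same passage already used in the main text to control $\Norm{\cov^{1/2}\widehat{\cC}_\alpha^{-1/2}}$, now carried out for general $s \in [0,1]$.

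It then remains to bound the first-power term. I would insert $\cov + \alpha \idop_{\mathcal{H}}$ and use the elementary spectral estimate $\Norm{\cov (\cov + \alpha \idop_{\mathcal{H}})^{-1}}_{\bounded(\mathcal{H})} = \sup_{0 < t \le \kappa^2} t/(t+\alpha) \leq 1$ to get
\begin{equation*}
    \Norm{\cov (\empcov + \alpha \idop_{\mathcal{H}})^{-1}}_{\bounded(\mathcal{H})}
    \leq \Norm{(\cov + \alpha \idop_{\mathcal{H}}) (\empcov + \alpha \idop_{\mathcal{H}})^{-1}}_{\bounded(\mathcal{H})}.
\end{equation*}
Since $\cov + \alpha \idop_{\mathcal{H}}$ and $\empcov + \alpha \idop_{\mathcal{H}}$ are self-adjoint, the norm on the right equals $\Norm{(\empcov + \alpha \idop_{\mathcal{H}})^{-1}(\cov + \alpha \idop_{\mathcal{H}})}_{\bounded(\mathcal{H})}$, which \cref{prop:concentration_inverse_product} bounds by $2\cB(n, \delta, \alpha)$ on an event of probability at least $1-\delta$. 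Chaining the last three displays yields $\Norm{\cov^s (\empcov + \alpha \idop_{\mathcal{H}})^{-s}}_{\bounded(\mathcal{H})} \leq (2\cB(n, \delta, \alpha))^s = 2^s \cB(n, \delta, \alpha)^s$, and substituting into the first display proves the claim.

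The only genuinely nontrivial step is the second one: because $\cov$ and $\empcov$ do not commute, one cannot pull the exponent $s$ through the product by scalar monotonicity of $t \mapsto t^s$, and the Cordes inequality $\Norm{P^s Q^s} \le \Norm{PQ}^s$ for $P, Q \succeq 0$ and $s \in [0,1]$ is precisely what licenses the passage to the first-power norm where the concentration result is available. Everything else is deterministic bookkeeping together with a single invocation of \cref{prop:concentration_inverse_product}, so the probability-$(1-\delta)$ guarantee is inherited directly from that proposition.
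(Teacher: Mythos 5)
Your proposal is correct and follows essentially the same route as the paper: both reduce the claim to the operator norm of $\cov(\empcov+\alpha\idop_{\mathcal H})^{-1}$-type products via the Cordes inequality and then invoke \cref{prop:concentration_inverse_product}. The only (immaterial) difference is the order of operations—you apply Cordes once to $\cov^s(\empcov+\alpha\idop_{\mathcal H})^{-s}$ and insert the resolvent $(\cov+\alpha\idop_{\mathcal H})$ at the first-power level, whereas the paper inserts $(\cov+\alpha\idop_{\mathcal H})^{\pm s}$ first and handles the two resulting fractional factors separately.
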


\begin{proof}
    We have
    \begin{equation}
    \label{eq:concentration_weighted_norm}
        \Norm{ \mathcal{C}_\pi^{s} f }_{\mathcal{H}}
        \leq 
        \Norm{ 
            \mathcal{C}_\pi^{s} (\mathcal{C}_\pi + \alpha \idop_{\mathcal{H}})^{-s} 
        }_{\bounded(\mathcal{H})}
        \Norm{ 
            (\mathcal{C}_\pi + \alpha \idop_{\mathcal{H}})^{s} 
            (\widehat{\mathcal{C}}_\pi + \alpha \idop_{\mathcal{H}})^{-s} 
        }_{\bounded(\mathcal{H})}
        \Norm{ 
            (\widehat{\mathcal{C}}_\pi + \alpha \idop_{\mathcal{H}})^{s} f
        }_{\mathcal{H}}.
    \end{equation}
    We bound the terms on the right hand side individually.
    Applying \cref{lem:cordes} to the first term on the right hand side
    of \eqref{eq:concentration_weighted_norm},
    we have 
    \begin{equation*}
        \Norm{ 
            \mathcal{C}_\pi^{s} (\mathcal{C}_\pi + \alpha \idop_{\mathcal{H}})^{-s} 
        }_{\bounded(\mathcal{H})}
        \leq
        \Norm{ 
            \mathcal{C}_\pi (\mathcal{C}_\pi + \alpha \idop_{\mathcal{H}})^{-1}
        }^s_{\bounded(\mathcal{H})} \leq 1.
    \end{equation*} 
    The second term can be bounded with \cref{prop:concentration_inverse_product}.     
\end{proof}

\section{Miscellaneous results}
\label{app:miscellaneous}

We recall a standard result that addresses
the saturation property of Tikhonov regularization,
see e.g.\ \cite[][Example 4.15]{EHN96}.

\begin{lemma}
\label{lem:qualification}
    Let $\alpha >0$ and $\nu > 0$.
    We have
    \begin{equation*}
        \sup_{t \in [0, \kappa^2]}
        \frac{ \alpha t^\nu}{t + \alpha} 
        \leq \max\{1, {\kappa^{2(\nu - 1)}}\} \,
        \alpha^{\min\{\nu,1\}}.
    \end{equation*}
\end{lemma}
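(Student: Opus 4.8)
The plan is to bound the function $g(t) := \frac{\alpha t^\nu}{t+\alpha}$ on $[0,\kappa^2]$ by a direct case distinction on whether the exponent $\nu$ lies below or above the saturation threshold $1$, relying only on the elementary estimates $\frac{t}{t+\alpha} \le 1$ and $\frac{\alpha}{t+\alpha} \le 1$, valid for all $t \ge 0$ and $\alpha > 0$.

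First I would treat the case $0 < \nu \le 1$, so that $\min\{\nu,1\} = \nu$. Here the key step is the factorization
\begin{equation*}
    \frac{\alpha t^\nu}{t+\alpha}
    = \alpha^\nu \left(\frac{\alpha}{t+\alpha}\right)^{1-\nu}\left(\frac{t}{t+\alpha}\right)^\nu,
\end{equation*}
which one verifies by collecting the powers of $\alpha$, $t$, and $(t+\alpha)$ in numerator and denominator. Since both bracketed factors lie in $[0,1]$ and their exponents $1-\nu$ and $\nu$ are nonnegative, the right-hand side is bounded by $\alpha^\nu = \alpha^{\min\{\nu,1\}}$. As $\max\{1,\kappa^{2(\nu-1)}\} \ge 1$, the claimed inequality follows in this regime without any $\kappa$-dependence.

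Next I would treat the case $\nu > 1$, so that $\min\{\nu,1\} = 1$ and $\nu - 1 > 0$. Writing $t^\nu = t \cdot t^{\nu-1}$, I estimate
\begin{equation*}
    \frac{\alpha t^\nu}{t+\alpha}
    = \alpha\, t^{\nu-1}\, \frac{t}{t+\alpha}
    \le \alpha\, t^{\nu-1}
    \le \alpha\, (\kappa^2)^{\nu-1}
    = \kappa^{2(\nu-1)}\,\alpha,
\end{equation*}
using $\frac{t}{t+\alpha}\le 1$, the monotonicity of $t \mapsto t^{\nu-1}$ on $[0,\kappa^2]$, and the endpoint bound $t \le \kappa^2$. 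Since $\max\{1,\kappa^{2(\nu-1)}\} \ge \kappa^{2(\nu-1)}$ and $\alpha^{\min\{\nu,1\}} = \alpha$, this again yields the stated bound.

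Taking the supremum over $t \in [0,\kappa^2]$ in each case and combining the two regimes gives the result for all $\nu > 0$. There is no genuine obstacle here; the only point requiring care is tracking where the factor $\kappa^{2(\nu-1)}$ enters, namely the regime $\nu > 1$, where the power $t^{\nu-1}$ must be controlled through the endpoint $\kappa^2$, whereas for $\nu \le 1$ no $\kappa$-dependence arises and the maximum in the bound is attained by its first argument.
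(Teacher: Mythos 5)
Your proof is correct; the paper does not actually prove this lemma but only cites it as a standard result from \cite[Example 4.15]{EHN96}, and your case split at $\nu = 1$ with the interpolation factorization for $\nu \le 1$ and the endpoint bound $t^{\nu-1} \le \kappa^{2(\nu-1)}$ for $\nu > 1$ is exactly the standard argument behind that reference. Both cases check out, including the factorization identity and the edge cases $t = 0$ and $\nu = 1$.
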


We will also frequently use the following 
classical bound for products of
fractional operators.

\begin{lemma}[Cordes inequality, \citep{Furuta1989}]
\label{lem:cordes}
Let $s \in [0,1]$ and
$A, B \in \bounded(\mathcal{H})$ be
positive-semidefinite selfadjoint.
Then we have
$\norm{A^s B^s}_{\bounded(\mathcal{H})} \leq
    \norm{A B}^s_{\bounded(\mathcal{H})}$.
\end{lemma}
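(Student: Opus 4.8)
The plan is to deduce the bound from the Löwner--Heinz operator monotonicity of $t \mapsto t^s$ on $[0,1]$ together with a conjugation trick, after reducing to the case of strictly positive (invertible) operators. First I would handle the reduction: replacing $A$ and $B$ by $A_\epsilon := A + \epsilon \idop_{\mathcal{H}}$ and $B_\epsilon := B + \epsilon \idop_{\mathcal{H}}$ for $\epsilon > 0$ makes both operators invertible, and by continuity of the functional calculus (uniform continuity of $t \mapsto t^s$ on a bounded interval containing the spectra) one has $A_\epsilon^s \to A^s$ and $B_\epsilon^s \to B^s$ in operator norm as $\epsilon \to 0^+$, while $A_\epsilon B_\epsilon \to AB$. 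Hence it suffices to prove the inequality for invertible $A, B$ and pass to the limit.

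So assume $A, B > 0$ invertible and set $c := \norm{AB}_{\bounded(\mathcal{H})}$. Using that $A, B$ are self-adjoint, $(BA)^* = AB$, so $\norm{BA}_{\bounded(\mathcal{H})} = \norm{AB}_{\bounded(\mathcal{H})} = c$ and therefore $(BA)^*(BA) = A B^2 A \le c^2 \idop_{\mathcal{H}}$. Conjugating this inequality by the invertible operator $A^{-1}$ yields $B^2 \le c^2 A^{-2}$. This step---turning the norm bound on the non-self-adjoint product $AB$ into a genuine operator inequality between positive operators---is the crux of the argument, and it is precisely what lets one avoid the naive interpolation estimate (which would only give the weaker bound $\norm{A}^s \norm{B}^s$).

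Next I would apply Löwner--Heinz (operator monotonicity of $t \mapsto t^s$ for $s \in [0,1]$) to $B^2 \le c^2 A^{-2}$ to obtain $B^{2s} \le c^{2s} A^{-2s}$, using that the scalar $c^2 \idop_{\mathcal{H}}$ commutes with $A^{-2}$ so that the $s$-th power of the right-hand side factors as $c^{2s} A^{-2s}$. Conjugating by $A^s$ then gives $A^s B^{2s} A^s \le c^{2s} A^s A^{-2s} A^s = c^{2s} \idop_{\mathcal{H}}$. Finally, the $C^*$-identity yields $\norm{A^s B^s}^2_{\bounded(\mathcal{H})} = \norm{(A^s B^s)(A^s B^s)^*}_{\bounded(\mathcal{H})} = \norm{A^s B^{2s} A^s}_{\bounded(\mathcal{H})} \le c^{2s}$, so that $\norm{A^s B^s}_{\bounded(\mathcal{H})} \le c^s = \norm{AB}^s_{\bounded(\mathcal{H})}$, which is the claim.

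The main obstacle is the single conceptual step of converting the scalar norm bound $\norm{AB}_{\bounded(\mathcal{H})} = c$ into the operator inequality $B^2 \le c^2 A^{-2}$ and recognizing that operator monotonicity is the right tool to propagate it through the $s$-th power; the remaining pieces are routine. The endpoint cases $s \in \{0,1\}$ are immediate, the approximation argument is standard, and I would cite Löwner--Heinz (as in \cite{Furuta1989}) as the one external ingredient.
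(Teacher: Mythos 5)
Your argument is correct. Note that the paper itself gives no proof of this lemma---it is stated as a known result with a pointer to \citep{Furuta1989}, which is precisely the paper establishing the equivalence between the Cordes inequality and the L\"owner--Heinz theorem. What you have written out is the standard derivation of one direction of that equivalence, and every step checks: the $\epsilon$-regularization and passage to the limit is legitimate (uniform polynomial approximation of $t \mapsto t^s$ on a compact interval containing all spectra gives $A_\epsilon^s \to A^s$ in operator norm, and the case $\norm{AB}_{\bounded(\mathcal{H})} = 0$ is handled automatically by the limit); the conversion of the norm bound into the operator inequality $A B^2 A = (BA)^*(BA) \leq c^2 \idop_{\mathcal{H}}$ and the conjugation by $A^{-1}$ are valid; L\"owner--Heinz applies since $c^2 \idop_{\mathcal{H}}$ commutes with $A^{-2}$; and the final $C^*$-identity step $\norm{A^s B^s}^2_{\bounded(\mathcal{H})} = \norm{A^s B^{2s} A^s}_{\bounded(\mathcal{H})} \leq c^{2s}$ closes the argument. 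So your proposal does not diverge from the paper so much as fill in the proof the paper outsources to its reference; the only external ingredient you rely on is operator monotonicity of $t \mapsto t^s$ for $s \in [0,1]$, which is exactly the dependency the citation is meant to carry.
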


The next result is part of \cite[Proposition~3.5]{Rio2017}.
It allows us to bound moments between $2$ and $q$.
For convenience, we provide a detailed, self-contained proof.

\yurinsky{

%
%
\begin{lemma} \label{lem:fromRio17Prop3.5}
Let $q>2$. For all $\rho\in[2,q]$, $a \in \Rp$, and $x\in\Rp$ we have
\begin{equation}
    (q-2)x^\rho \leq (\rho-2)a^{\rho-q}x^q + (q-\rho)a^{\rho-2}x^2.
\end{equation}
\end{lemma}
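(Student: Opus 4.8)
The plan is to exploit the homogeneity of the inequality to eliminate the parameter $a$, and then to recognize the resulting one-variable statement as a direct consequence of the convexity of the map $t \mapsto y^t$.

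First I would normalize. Since $a > 0$, I substitute $x = a y$ with $y \in \Rp$, which turns every term into $a^\rho$ times a power of $y$: indeed $x^\rho = a^\rho y^\rho$, while $a^{\rho-q} x^q = a^{\rho-q} a^q y^q = a^\rho y^q$ and $a^{\rho-2} x^2 = a^{\rho-2} a^2 y^2 = a^\rho y^2$. Dividing through by $a^\rho > 0$ reduces the claim to showing, for all $y \in \Rp$,
\[
(q-2)\, y^\rho \leq (\rho-2)\, y^q + (q-\rho)\, y^2.
\]

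The key observation is that $\rho$ lies in $[2,q]$, so it is a convex combination of the exponents $q$ and $2$. Explicitly, setting $\lambda := (\rho-2)/(q-2)$, the condition $\rho \in [2,q]$ guarantees $\lambda \in [0,1]$, and one checks $\rho = \lambda q + (1-\lambda)\cdot 2$ with $1-\lambda = (q-\rho)/(q-2)$. Since $t \mapsto y^t = e^{t \log y}$ is convex in $t$ for each fixed $y > 0$, convexity gives
\[
y^\rho = y^{\lambda q + (1-\lambda)\cdot 2} \leq \lambda\, y^q + (1-\lambda)\, y^2 = \frac{\rho-2}{q-2}\, y^q + \frac{q-\rho}{q-2}\, y^2.
\]
Multiplying both sides by $q-2 > 0$ yields the normalized inequality, and undoing the substitution recovers the full statement.

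There is no real obstacle here; the only points requiring care are verifying that the three terms share the common factor $a^\rho$ after the substitution (so that $a$ genuinely drops out), and noting that $\lambda \in [0,1]$ precisely because $\rho \in [2,q]$, which is exactly what licenses the convexity step. Equivalently, one may invoke the weighted arithmetic–geometric mean inequality with weights $\lambda$ and $1-\lambda$ applied to $u = y^q$ and $v = y^2$, since $u^\lambda v^{1-\lambda} = y^{\lambda q + (1-\lambda)\cdot 2} = y^\rho$; I would present whichever formulation reads as more self-contained for the appendix.
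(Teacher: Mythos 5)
Your proof is correct and follows essentially the same route as the paper's: both arguments rest on the convexity of $t \mapsto y^t$ and on eliminating $a$ by the normalization $y = x/a$. The only cosmetic difference is that you interpolate $\rho$ directly between the exponents $2$ and $q$, whereas the paper shifts everything down by $2$ and interpolates $\rho-2$ between $0$ and $q-2$ before multiplying back by $x^2 a^{\rho-2}$.
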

\begin{proof}
We first show that 
\begin{equation} \label{eq:fromRio17Prop3.5:1}
    (q-2)\left(\frac{x}{a}\right)^{\rho-2} \leq (\rho-2)\left(\frac{x}{a}\right)^{q-2}  + (q-\rho).
\end{equation}
For this, define for some $y\in\Rp$ and $t\in\Rp$ the function $f: [0,t]\rightarrow \R$ by $f(s)=\exp(\ln(y)s)$.
Note that $f$ is convex (since $f'(s)=f(s)\ln(y)$, $f''(s)=f(s)\ln(y)^2>0$).
For $s\in[0,t]$ we then get
\begin{align*}
    y^s = f(s) & = f((1-s/t)\cdot 0 + s/t \cdot t) \\
        & \leq (1-s/t)f(0) + s/t \cdot f(t) = (1-s/t) + s/t
        \cdot y^t
\end{align*}
and multiplying both sides by $t$ leads to 
\begin{equation*}
    ty^s \leq sy^t + (t-s).
\end{equation*}
Setting $t=q-2$, $s=\rho-2$, and $y=x/a$ then shows \eqref{eq:fromRio17Prop3.5:1}.
Finally, multiplying both sides of this with $x^2/a^{2-\rho}$ establishes the result.
\end{proof}

Finally, we require the following basic inequality.

\begin{lemma}
\label{lem:polylog_exp_bound}
Consider the function $f: (0,1) \to \R_+$
given by
   $ f(\alpha) = \log(1/\alpha)^q \alpha^{s}$
for some $s >0$ and $q >0$.
Then we have
\begin{equation*}
    \sup_{\alpha \in (0,1)} f(\alpha) 
    \leq e^{-q} 
    \left(
        \frac{q}{s}
    \right)^q.
\end{equation*}
\end{lemma}

\begin{proof}
    We may substitute
    the transformation $x := \log(1/\alpha) \in (0, \infty)$
    and consider $f(x) = x^q \exp(-sx)$ instead.
    Note that $f$ is nonnegative on $(0, \infty)$
    and that $f(0) = \lim_{x \to \infty} f(x) = 
    \lim_{x \to 0_+} f(x)= 0$.
    The derivative $f'(x) = \exp(-sx) (q x^{q-1} - s x^{q})$
    has the unique root $x_\star = q/s$ and 
    we hence obtain $f(x_\star) = \exp(-q) (q/s)^q$ 
    as the maximal value.
\end{proof}
    
}

\end{document}